\documentclass[twoside]{article}
\usepackage[margin=1in]{geometry}
 
\usepackage[]{amsmath,amssymb,epsfig}
\usepackage{amsthm}
\usepackage{amsmath,mathtools}
\usepackage{amssymb}
\usepackage{graphicx}
\usepackage{epstopdf}
\usepackage{comment}
\usepackage{array}
\usepackage{algorithm}
\usepackage{url}
\usepackage{enumitem}
\usepackage{booktabs}
\usepackage[toc,page,header]{appendix}
\usepackage{etoc}
\usepackage{calligra}
\usepackage{lscape}
\usepackage{algpseudocode}
\usepackage{setspace}
\usepackage{multicol}
\usepackage{multirow}
\usepackage{color}
\usepackage{colortbl}
\usepackage{xcolor}
\usepackage{hyperref}
\usepackage{placeins}
\usepackage[%
    font={small,sf},
    labelfont=bf,
    format=hang,    
    format=plain,
    margin=0pt,
    width=0.8\textwidth,
]{caption}
\usepackage[list=true]{subcaption}
\usepackage[round]{natbib}
\usepackage{caption}
\usepackage{dsfont}
\usepackage{titletoc}
\makeatletter
\let\oldaddcontentsline\addcontentsline 
\makeatother

\numberwithin{equation}{section}

\newtheorem{theorem}{Theorem}

\newtheorem{assumption}{Assumption}

\newtheorem{definition}{Definition}

\newtheorem{lemma}{Lemma}

\newtheorem{remark}{Remark}

\numberwithin{equation}{section}

\DeclareMathOperator{\Tr}{Tr}

\DeclareMathOperator{\supp}{supp}
\DeclareMathOperator{\diag}{diag}

\DeclareMathOperator{\vect}{vec}

\newcommand{\calA}{\ensuremath{\mathcal{A}}}
\newcommand{\calB}{\ensuremath{\mathcal{B}}}
\newcommand{\calC}{\ensuremath{\mathcal{C}}}
\newcommand{\calD}{\ensuremath{\mathcal{D}}}

\newcommand{\calF}{\ensuremath{\mathcal{F}}}

\newcommand{\calI}{\ensuremath{\mathcal{I}}}
\newcommand{\calP}{\ensuremath{\mathcal{P}}}

\newcommand{\calS}{\ensuremath{\mathcal{S}}}
\newcommand{\calM}{\ensuremath{\mathcal{M}}}
\newcommand{\calN}{\ensuremath{\mathcal{N}}}

\newcommand{\calK}{\ensuremath{\mathcal{K}}}
\newcommand{\calX}{\ensuremath{\mathcal{X}}}

\newcommand{\calY}{\ensuremath{\mathcal{Y}}}
\newcommand{\calE}{\ensuremath{\mathcal{E}}}

\newcommand{\norm}[1]{\big\|{#1}\big\|}
\newcommand{\abs}[1]{\big|{#1}\big|}

\newcommand{\set}[1]{\left\{{#1}\right\}}
\newcommand{\dotprod}[2]{\Big\langle#1,#2\Big\rangle}
\newcommand{\est}[1]{\widehat{#1}}
\newcommand{\expec}{\ensuremath{\mathbb{E}}}
\newcommand{\matR}{\ensuremath{\mathbb{R}}}

\newcommand{\argmax}[1]{\underset{#1}{\operatorname{argmax}}}
\newcommand{\argmin}[1]{\underset{#1}{\operatorname{argmin}}}

\newcommand{\prob}{\ensuremath{\mathbb{P}}}

\newcommand{\frobsphere}{\mathbb{S}_n}
\newcommand{\frobball}{\calB_n}

\newcommand{\gammafn}[1]{\ensuremath{\gamma_{#1}}}

\newcommand{\eventp}[1]{\ensuremath{\calE_{#1}}}
\newcommand{\matrixenc}[1]{\ensuremath{\begin{bmatrix} #1 \end{bmatrix}}}
\newcommand{\absenc}[1]{\ensuremath{\left | #1 \right |}}

\newcommand{\uStar}{\ensuremath{u^*}}
\newcommand{\UStar}{\ensuremath{U^*}}
\newcommand{\thtStar}{\ensuremath{\theta^*}}
\newcommand{\tildex}{\ensuremath{\tilde{x}}}
\newcommand{\tilcalS}{\ensuremath{\widetilde{\calS}}}
\newcommand{\tilcalSC}{\ensuremath{\widetilde{\calS}^c}}

\newcommand{\indic}{\ensuremath{\mathds{1}}} 

\newcommand{\vecOne}{\mathbf{1}}

\newcommand{\nrml}{\mathcal{N}}

\newcounter{pcount}
\newenvironment{palign}{%
    \edef\tempEqCount{\the\value{equation}}%
    \setcounter{equation}{\value{pcount}}%
    \align
}{%
    \endalign
    \setcounter{pcount}{\value{equation}}%
    \setcounter{equation}{\tempEqCount}%
    \ignorespacesafterend
}
    
\usepackage[preprint]{aistats2027}
\begin{document}

%

%

\twocolumn[

\aistatstitle{RobustLDS: Learning linear dynamical systems under adversarial corruptions}

\aistatsauthor{Aravinda Kanchana Ruwanpathirana \And Hemant Tyagi}
\vspace{0.25cm}
\aistatsaddress{Division of Mathematical Sciences, SPMS, NTU Singapore 637371 \\ \texttt{\{aravinda.r, hemant.tyagi\}@ntu.edu.sg} } ]

\begin{abstract}
We consider the problem of learning linear dynamical systems under adversarial contamination from a single trajectory of length $T$. While identification of linear dynamical systems itself is well-studied, the problem of robust system identification under adversarial contamination is relatively less explored. In this work, we study the setting where a fraction of the $T$ observations are contaminated by adversarial outliers. We propose different estimators based on relaxations of least-trimmed squares along with an alternating minimization algorithm. Furthermore, we also propose two estimators which exploit the group-sparsity (through penalization/hard-constraints) of the outliers. For the estimator with group-sparse penalty, we derive non-asymptotic error bounds which establish its robustness to outliers. We also show empirically that the proposed estimators work well in practice.
\end{abstract}

%

\section{Introduction} \label{sec:intro}
Regression is a fundamental problem in machine learning where there exists a set of covariates $\calX \subset \matR^{d}$, set of labels $\calY \subset \matR$ and a distribution $\calD$ on $\calX \times \calY$, from which a learner gets $(x_i, y_i)_{i=1}^m \stackrel{i.i.d}{\sim} \calD$ and attempts to learn the underlying model parameters. For instance in the case of linear regression where $y_i = x_i^\top \beta^*+\varepsilon_i$ ( $\varepsilon_i$ is noise), minimax error rates for estimating $\beta^*$ are now well known even in the high-dimensional regime ($m \ll d$) \citep{Ye2010Rate,raskutti2011Minimax,lounici2011Oracle}.

However, when a fraction of the data $(x_i, y_i)_{i=1}^m$ is contaminated by an adversary, estimating $\beta^*$ becomes a more challenging task. This is known as robust regression under adversarial contamination and has been extensively studied in recent years under different models of contamination, e.g., inlier-outlier model \citep{gaiffas2022, merad2023robust}, Huber contamination \citep{du2017computational} and strong contamination \citep{liu2019high, liu2019highM}. A wide range of methods have been explored, e.g., robust gradient-descent \citep{gaiffas2022, merad2023robust, liu2019high, liu2019highM, cherapanamjeri2020} with gradient estimators using Median-of-Means, Trimmed Mean, robust PCA and SDP formulations; least-trimmed squares estimators \citep{bhatia2015robust,Bhatia2017Consistent} through hard-thresholding; penalized Huber M-estimators \citep{pensia2021robust, dalalyan2019outlier, sasai2020robust, sasai2022robust, sasai2024outlier, RobustLin2025}; other approaches such as least-absolute-deviation \citep{karmalkar2019, pensia2021robust} and Dantzig selectors \citep{chen13Robust}. 

While the aforementioned works provide strong theoretical results when the clean data is i.i.d, whether they can be extended to the setting where the clean data exhibits dependencies is relatively less understood.
%
%
%
\paragraph{Learning linear dynamical systems (LDS).} Let $A^* \in \matR^{n \times n}$ be an unknown matrix, and consider the LDS\footnote{Also referred to as VAR (vector autoregressive) model of order $1$ in the literature.} (with $x_0 = 0$)
\begin{align} \label{eq:dyn_process}
x_{t+1} = A^* x_t + \eta_{t+1},  \quad t = 0,\dots,T-1,
\end{align}
where $\eta_t \stackrel{i.i.d.}{\sim} \calN(0,I_n)$ refers to the process noise (the ``excitation'' of the system). Given the trajectory $(x_t)_{t=1}^T$, estimating $A^*$ is by now a well-studied problem, with many applications arising, e.g., in control systems and reinforcement learning. A key challenge in provably estimating $A^*$ is the dependencies between the observations $(x_t)_{t=1}^T$. It was recently shown \citep{Jedra20,Sarkar19,Simchowitz18a,FaraUnstable18} that the ordinary least squares (OLS) estimator recovers $A^*$ with non-asymptotic error bounds (up to log factors) of $O(1/\sqrt{T})$ holding with high probability (w.h.p) when $T$ is suitably large w.r.t $n$. The results hold under different sets of stability assumptions on $A^*$, captured through its spectral radius $\rho(A^*)$, such as: strict stability ($\rho(A^*) < 1$) \citep{Jedra20}, the more challenging marginal stability regime $\rho(A^*) \leq 1$ \citep{Sarkar19, simchowitz19a}, and also for the unstable regime $\rho(A^*) > 1$ \citep{Sarkar19,FaraUnstable18}. Results under different structural assumptions on $A^*$ also exist, including sparsity \citep{Basu15,Kock2015,song2011large,Melnyk16}, low-rank plus sparsity  \citep{lv2021, Basu19}, and general convex constraints \citep{tyagi2024learning}. These works show that $A^*$ can be provably recovered under milder requirements on $T$ w.r.t $n$ as opposed to the unstructured case.

\paragraph{Our setting.} In this work, we focus on a setting in which an $\epsilon$-fraction of the states are adversarially contaminated. Formally, the adversary corrupts data residing on a subset $\calS \subset [T]$ of time-indices, referred to as the set of outliers, with $\abs{\calS} = s = \epsilon T$ (with $0 < \epsilon < 1$). The goal is to recover an estimate $\est{A}$ of $A^* \in \calA$ using the corrupted data $(\tildex_t)_{t=1}^T$, and ensure it is robust to the corruptions. Here, $\calA \subset \matR^{n \times n}$ is a closed, convex set that captures prior structural information about $A^*$. As discussed in Section \ref{sec:rel_work}, existing work in this setting for \eqref{eq:dyn_process} is scarce, and typically not focused on the adversarial setting. 
%
%
%
\subsection{Our contributions} \label{subsec:contributions}
Our contributions are threefold.
\begin{enumerate}
\item In Section \ref{sec:rob_estimators}, we introduce different estimators for jointly estimating $A^*$ and the set of outliers, each based on a particular continuous relaxation of least-trimmed squares \citep{Rousseeuw1984, Rousseeuw2006LTS}. We also provide a simple  alternating minimization based meta-algorithm (cf., Algorithm \ref{alg:robust-estimator}) which provides an efficient procedure for solving these relaxations.

\item In Section \ref{sec:other-app}, we introduce two estimators which exploit the group-sparse nature of the outliers, and amount to minimizing a constrained, weighted least-squares objective with either (i) a $\ell_1/\ell_2$ group-sparse penalty, or (ii) group-sparse hard constraints. For the former, we obtain non-asymptotic error rates in the Frobenius norm of (up to log-factors)
$$O\left(\sqrt{\frac{k}{T}} + \sqrt{\frac{ns}{T}} \right)$$
for joint estimation of $k$-sparse $A^*$ (with $\calA$ chosen to be a $\ell_1$ ball) and the contamination-levels. See Theorem \ref{thm:main-thm} for details.

\item We provide a through empirical evaluation of the proposed estimators on synthetic examples, which show that they work well in practice.
\end{enumerate}
Our theoretical results hold for an oblivious adversary (Assumption \ref{assump1}), analogous to the inlier-outlier model considered in robust regression: the choice of $\calS$ is independent of $(x_t)_{t=1}^T$. Once $\calS$ is chosen, the corruptions on $t \in \calS$ are allowed to be arbitrary -- they can even depend on the data in the inlier set $\calS^c$. Moreover, $A^*$ is required to be strictly stable (Assumption \ref{assump2}). 

Our error rates are likely near-optimal (up to log factors) if one is also interested in  estimating the outliers, but not for estimation rates solely for $A^*$; see discussion after Theorem \ref{thm:main-thm}. To our knowledge, these are the first theoretical results for provably robust estimation of LDSs, under adversarial contamination, from a single trajectory.

%
%
%
%
\subsection{Related work} \label{sec:rel_work}
Existing works for robust estimation of LDSs can be categorized as follows. See Appendix \ref{app-sec:robust-lds-lit-review} for more details.
%
%
%
%
%
\paragraph{Heavy-tailed covariates or process noise.}
There exists a substantial recent line of work that provide robust estimators for learning $A^*$ under \eqref{eq:dyn_process} (or higher-order versions thereof) from a single trajectory, with $(\eta_t)_{t \geq 1}$ assumed to be heavy-tailed \citep{qiu15robust, wong2019lasso, halder2022robust, wang2022rate, wang2024robust, dijk2025tail, Lu2025Robust}. Recently \cite{kanakeri2025outlier} studied this in the setting of multiple i.i.d trajectories of the form \eqref{eq:dyn_process}. These works typically provide non-asymptotic error bounds, with \cite{wang2022rate} providing minimax optimal rates.

\paragraph{Sparse process noise.}
There exists a line of work that considers learning under (slightly general forms of) \eqref{eq:dyn_process} from a single trajectory, where $\eta_t$ is either drawn from a sparse adversarial process \citep{kim2026necessity, kim2026huber} or involves an additive sparse outlier term \citep{kim2025macroeconomic,farahmand2011doubly}. The results of \cite{kim2026necessity, kim2026huber} involve non-asymptotic error rates for parameter-recovery while \cite{kim2025macroeconomic} establish asymptotic normality for their estimator.

%
%
%
\paragraph{Learning from corrupted states.}
\cite{Bhatia0KK16} consider a \emph{univariate} ($n=1$), higher-order version of \eqref{eq:dyn_process}, and assume that the learner observes $\tilde{x}_t = x_t+\theta_t$ where $\theta = \matrixenc{\theta_1 & \theta_2 & \dots & \theta_T}$ is sparse. Assuming the outliers are independent of noise\footnote{this is not stated explicitly but seems to be needed in the proof, e.g., within Appendix B.6} they propose a block-sparse hard thresholding algorithm and provide non-asymptotic rates.
The work of \cite{kanakeri2025outlier} discussed earlier, further shows (for the strong contamination adversarial model) that if an $\epsilon$-fraction of the trajectories are corrupted, then their estimator still recovers $A^*$ up to a small error (w.r.t operator norm). Their theoretical result for this adversarial setting, and even their algorithm, does not apply to the single trajectory setting considered in our work. 
\cite{wong2017lasso} considers a higher-order version \eqref{eq:dyn_process}. Assuming that the learner observes $\tilde{x}_t = x_t+w_t$ where $(w_t)_{t \geq 1}$ are i.i.d centered Gaussians (with known covariance), they establish non-asymptotic recovery guarantees for the penalized lasso. 
Finally, we note that a line of work has considered the setting where $\tilde{x}_t$ is formed by independently  sub-sampling the entries of $x_t$ (with missing entries set to zero), where $(x_t)_{t=1}^T$ is generated from \eqref{eq:dyn_process}. Non-asymptotic error bounds are known for different structral assumptions on $A^*$ (under strict stability), e.g., sparsity and low-rankness \citep{rao2016system, milind2017estimation, castro_missing25, JalaliWillett18, wong2017lasso}. \cite{Rao2017_compmeas} consider the setting where $\tilde{x}_t$ is formed by taking compressive linear measurements of $x_t$ (at each $t$), and study the limits of covariance-based estimation.

%
\subsection{Notation}\label{subsec:notations}
%
%
Let $x \in \matR^n$ be a vector and $B,A \in \matR^{n_1 \times n_2}$ be matrices. For $i,j \in [n_1] \times [n_2]$ let $A_{i,:} \in \matR^{1 \times n_2}$ be the $i$-th row, $A_{:,j} \in \matR^{n_1}$ be the $j$-th column and $A_{ij}$ be the $(i,j)$-th entry of $A$. $\norm{x}_p$ denotes the standard $\ell_p$ norm and $\norm{A}_2$ and $\norm{A}_F$ denotes the spectral norm and the Frobenius norm of $A$ respectively. For integers $p,q \geq 1$, recall the (mixed) entry-wise $L_{p,q}$ norm
\begin{align*}
    \norm{A}_{p,q} := \left(\sum_{j=1}^{n_2} \norm{A_{:,j}}_p^q \right)^{1/q}.
\end{align*}
Furthermore, we denote $\norm{A}_{2,0} := \sum_{j \in [n_2]} \indic \set{\norm{A_{:,j}}_2>0}$ where $\indic\set{\calE}$ denotes the indicator variable for event $\calE$. 
%
Note that $\norm{\cdot}_{2,0}$ is not a norm. The inner product of $B,A$ is defined as $\dotprod{B}{A} = \Tr\left(B^\top A\right)$, and $\vect(A) \in \matR^{n_1n_2}$ denotes the vector formed by stacking the columns of $A$. The all-ones vector in $\matR^n$ is denoted by $\vecOne_{n}$. For matrices in $\matR^{n \times n}$, we denote the unit Frobenius sphere by $\frobsphere$, the unit Frobenius ball by $\frobball$, and the identity matrix by $I_n$. The Kronecker product is denoted by $\otimes$.

As far as possible, calligraphic letters are reserved for sets. The normal distribution with mean $\mu$ and covariance $\Sigma$ is denoted by $\nrml \left(\mu,\Sigma\right)$. For $a,b > 0$, we write $a \lesssim b$ if there exists a constant $C > 0$ such that $a \le Cb$. If $a \lesssim b$ and $a \gtrsim b$, then we write $a \asymp b$. The values of symbols used for denoting constants (e.g., $c, C$ etc.) may change from line-to-line.
\section{Robust estimators based on least-trimmed squares} \label{sec:rob_estimators}
Consider the linear dynamical process given in equation~\eqref{eq:dyn_process} and the contamination process where $s$ observations are contaminated. Recall that $(\tildex_t)_{t=1}^T$ denotes the contaminated observations. Denoting 
\begin{align*}
   z_t^* = \indic\set{t \not\in \calS},
\end{align*}
consider the following $\calA$-constrained least squares problem for estimating $A^*$.
\begin{align*}
    \min_{A \in \calA} \ \sum_{t=1}^{T-1}z^*_t z^*_{t+1} \norm{\tildex_{t+1}-A\tildex_t}_2^2.
\end{align*}
The above objective function only considers summands where both $\tildex_{t+1}$ and $\tildex_t$ are uncontaminated, thus filtering out the effects of contaminated entries. Of course, $(z^*_t)_{t=1}^T$ is not known as $\calS$ is unknown, and what is known is that $\sum_{t=1}^T z^*_t = T-s$. Hence, a natural approach is to jointly estimate $(A^*, (z^*_t)_{t=1}^T)$ via the following combinatorial optimization problem. 
\begin{palign} \label{eq:comb-opt}
& \min_{\substack{A \in \calA \\ (z_t)_{t=1}^T}} \sum_{t=1}^{T-1} z_t  z_{t+1} \norm{\tildex_{t+1}-A\tildex_t}_2^2 \\
& \text{s.t.} \;\sum_{t=1}^T z_t = T-s \text{ and } z_t \in \set{0,1} \; \forall \; t\in [T]. \nonumber
\end{palign}
Note that $(A^*, (z^*_t)_{t=1}^T)$ is feasible for ~\eqref{eq:comb-opt}. 
\begin{remark} Note that Problem~\eqref{eq:comb-opt} is analogous to the \emph{least trimmed squares} estimator~\cite{Rousseeuw1984,Rousseeuw2006LTS} which is known to be robust to strong contamination~\cite{bhatia2015robust,Bhatia2017Consistent}, in i.i.d. regression. However, unlike the standard \emph{least trimmed squares},  problem~\eqref{eq:comb-opt} involves products $z_t  z_{t+1}$ and is thus formulated as a quadratic program w.r.t. $(z_t)_{t=1}^T$. 
\end{remark}
\begin{remark}
We assume $s$ is known a priori, following \cite{diakonikolas2019effi, diakonikolas2023near, diakonikolas2024robust, RobustLin2025, pensia2021robust}. 
\end{remark}

%
%
%

%
%
%
%
We can rewrite problem \eqref{eq:comb-opt} as follows. 
Given a matrix $A \in \matR^{n \times n}$, let $Q(A) \in \matR^{T \times T}$ be defined as follows: for all $1 \le t \le T-1$, $Q(A)_{t,t+1} = Q(A)_{t+1,t} = \frac{1}{2} \norm{\tildex_{t+1}-A\tildex_t}_2^2$ and $Q(A)_{t,t'}=0$ for all other $t,t' \in [T]$. Note that
\begin{align*}
&\sum_{t=1}^{T-1}  z_t  z_{t+1} \norm{\tildex_{t+1}-A\tildex_t}_2^2 \\
&= \sum_{t=1}^{T-1} z_t  z_{t+1} (Q(A)_{t,t+1}+Q(A)_{t+1,t})\\
&= \sum_{t=1}^{T} \sum_{\tilde{t}=1}^{T} z_t Q(A)_{t,\tilde{t}} z_{\tilde{t}} = z^\top Q(A) z,
\end{align*}
where $z = \matrixenc{z_1 & z_2 & \dots & z_T}^\top$. Therefore, problem~\eqref{eq:comb-opt} is equivalent to
\begin{palign}\label{eq:quad-form}
 \min_{ A \in \calA, \ z \in \set{0,1}^{T}} z^\top Q(A) z \ 
 \text{ s.t } \ 
 \vecOne_T^\top z = T-s. 
\end{palign}

\begin{algorithm}[htbp!]
\caption{Meta algorithm for jointly estimating $A^*$ and $(z^*_t)_{t=1}^T$}
\label{alg:robust-estimator}
\begin{algorithmic}[1]
\Require Initial guess $A^{(0)}$, convergence threshold $\epsilon$, maximum iterations $K$.
\Ensure Final estimates $(\widehat{A}, \widehat{z})$.
\State $k \gets 0$
\Repeat
    \State \textbf{Step 1: Update $z$}
    
    \State Given $A^{(k)}$ solve a relaxed formulation (cf., \eqref{eq:sphere},\eqref{eq:convex},\eqref{eq:SDP} and \eqref{eq:biconvex}) to obtain $\tilde{z}$.
    
    \State Round $\tilde{z}$ using \eqref{eq:rounding} to obtain  $z^{(k+1)} \in \set{0,1}^T$. 

    \State \textbf{Step 2: Update $A$} \label{stp:A-calc}
    
    \State Given $z^{(k+1)}$, solve \eqref{eq:quad-form} to find $A^{(k+1)}$.
    
    \State $k \gets k + 1$
\Until{$ \frac{\norm{A^{(k)}-A^{(k-1)}}_F}{\norm{A^{(k-1)}}_F} \le \epsilon$ or $k \geq K$}\\
\Return $\widehat{A} = A^{(k)}$, $\widehat{z} = z^{(k)}$
\end{algorithmic}
\end{algorithm}

\paragraph{Overview of relaxations of \eqref{eq:quad-form}} Note that    problem~\eqref{eq:quad-form} is a combinatorial problem w.r.t the binary variables $z_t$'s. In order to circumvent this, we next introduce several estimators all of which are essentially obtained by considering different continuous relaxations of $\set{0,1}^T$.   
\begin{enumerate}
\item \textbf{Sphere relaxation.} Using the fact that $\vecOne_T^\top z  = \norm{z}_2^2$ when $z \in \set{0,1}^T$, we can relax \eqref{eq:quad-form} as follows. See Appendix~\ref{sec:sphere} for details.
\begin{palign}\label{eq:sphere}
& \min_{A \in \calA,z \in \matR^T} z^\top Q(A) z \;  \\
&\text{s.t.} \; \norm{z}_2^2 = T-s, \  \vecOne_{T}^\top z = T-s. \nonumber
\end{palign}
Note that when $A$ is fixed, \eqref{eq:sphere} is a variant of the Trust Region Subproblem. Since $Q(A)$ contains only $O(T)$ non-zero values, we can use the Mor\'{e}-Sorensen algorithm \citep{sorenson1983} which takes $O(n^2T+TK)$ time -- here $K$ is the number of iterations needed by the algorithm. 

\item \textbf{Semi-definite program (SDP) relaxation.} 
Since $z^\top Q(A) z = \dotprod{Q(A)}{zz^\top}$ we can replace $zz^\top$ with the p.s.d matrix $Y$, and use the dual of Shor's relaxation \citep{Boyd1997} to derive the following relaxation of \eqref{eq:quad-form}. See Appendix~\ref{sec:SDP} for details.
\begin{palign}\label{eq:SDP}
& \min_{A \in \calA,Y \succeq 0, z \in [0,1]^T} \langle Q(A), Y \rangle  \\
& \text{s.t.}\; {\diag(Y) = z}, \vecOne_T^\top z = T-s \text{ and } \matrixenc{1 & z^\top \\ z & Y} \succeq 0.\nonumber
\end{palign}
Note that for a fixed $A$, \eqref{eq:SDP} is a SDP and can be solved, e.g. using interior point methods, in time polynomial in $T$ to obtain $Y, z$.  For efficiency, we implement an equivalent Burer-Monteiro method \citep{Burer2003} running in $O(n^2T+TK)$ time. While the iteration count $K$ lacks theoretical bounds, it remains small in practice.  

\item \textbf{Bi-convex relaxation.} Since the filtering in \eqref{eq:comb-opt} is on each $\norm{\tildex_{t+1}-A\tildex_t}_2^2$, instead of using $z_t z_{t+1}$, we could introduce a $b_t$ such that $b_t = 1$ if and only if $z_t  z_{t+1} = 1$. It can be shown that this can be represented using a set of linear constraints involving $b_t, z_t$ and $z_{t+1}$. Using this, and relaxing the binary constraints on $(b_t)_{t=1}^{T-1}, (z_t)_{t=1}^T$ we can derive the following bi-convex optimization problem (see Appendix~\ref{sec:biconvex} for details).
\begin{palign}\label{eq:biconvex}
& \min_{\substack{A \in \calA \\ z \in [0,1]^T \\ 
b \in [0,1]^{T-1}}} \sum_{t=1}^{T-1} b_t \norm{\tildex_{t+1}-A\tildex_t}_2^2\\
& \text{s.t.}\; {\sum_{t=1}^T z_t = T-s}  \nonumber\\
& z_t+z_{t+1}-1 \le b_t \le \min \set{z_t,z_{t+1}} \; \forall t\in [T-1]. \nonumber
\end{palign}
Note that the objective is bi-convex in $A, b$ while the constraints are convex w.r.t $A,b$ and $z$. In particular, when $A$ is fixed, \eqref{eq:biconvex} is a linear program w.r.t $b,z$, and hence can be solved efficiently in practice, e.g. using interior point methods, in time polynomial in $n,T$.

\item \textbf{Convexifying $Q(A)$ to estimate $z$.}  For any fixed $A$, define $\tilde{Q}(A) := Q(A)+\alpha(A) I_T$ where $\alpha(A)$ is chosen so that $\tilde{Q}(A)$ is p.s.d. It can be easily seen that in the setting where $z$ is binary, this does not change the objective. Now we can replace $Q(A)$ with $\tilde{Q}(A)$ in \eqref{eq:quad-form}, and relax the constraints on $z$ to $[0,1]^T$, to obtain (see Appendix~\ref{sec:convexQA} for details)
\begin{palign}\label{eq:convex}
& \min_{z \in [0,1]^{T}} z^\top \tilde{Q}(A) z \quad \text{s.t.} \quad {\vecOne_T^\top z = T-s}.  
\end{palign}
Note that \eqref{eq:convex} is a convex problem. Moreover, unlike the other relaxations described above, \eqref{eq:convex} requires an estimate $A$ of $A^*$ to be available. Similar to the case of \eqref{eq:biconvex}, when $A$ is fixed, \eqref{eq:convex}  can be solved in time polynomial in $n,T$.

\end{enumerate}
Note that while the formulations \eqref{eq:sphere}--\eqref{eq:biconvex} are non-convex, they share a common property: if we fix $A \in \calA$ then the resulting problem is tractable in the remaining variables, and we can solve for $z \in \set{0,1}$. On the other hand, if we fix $z \in \set{0,1}$, then we can  solve the ensuing $\calA$-constrained least-squares problem in \eqref{eq:quad-form} to estimate $A^*$. This motivates the alternating minimization based meta algorithm described in Algorithm~\ref{alg:robust-estimator} to find  estimates $(\est{A}, (\est{z}_t)_{t=1}^T)$. 

\begin{remark}
The convergence of Algorithm \ref{alg:robust-estimator} will depend on the initialization $A^{(0)}$. Since the relaxations considered are all non-convex, while a choice of $A^{(0)}$  within a small neighborhood of the ground-truth might lead to convergence, its unclear whether this holds for any random initialization. In practice, a potential choice of $A^{(0)}$ is to solve the $\calA$-constrained least squares problem on the contaminated data -- our experiments follow this strategy. 
\end{remark}
\begin{remark}\label{rem:solve-A}
Suppose we fix $z \in \set{0,1}^T$. Then \eqref{eq:sphere} is essentially just $\calA$-constrained weighted least squares which is the same as solving \eqref{eq:quad-form} (with fixed $z$). For \eqref{eq:biconvex}, we can replace $b_t$ with $z_t  z_{t+1}$ (which is what the envelope constraints would be doing) and therefore, it reduces to solving \eqref{eq:quad-form} with fixed $z$. For \eqref{eq:SDP} if we set $Y = zz^\top$, then $\dotprod{Q(A)}{Y} = z^\top Q(A) z$ which for fixed $z$ is the same as \eqref{eq:quad-form}. Solving \eqref{eq:quad-form} over $\calA$ takes $O(n^2T+n^3)$ time for $\calA = \matR^{n \times n}$, and $O(n^2T+n^3K)$ for $\calA = \set{A \in \matR^{n \times n}: \norm{A}_{1,1}\le \norm{A^*}_{1,1}}$ (with $K$ iterations of the FISTA algorithm \citep{beck2009fast}).  
\end{remark}

\paragraph{Rounding procedure.} For rounding  $\tilde{z} \in \matR^T$ in Algorithm~\ref{alg:robust-estimator}, we use the following procedure.
\begin{equation}\label{eq:rounding}
    \bar{z}_t =
    \begin{cases}
        1, & t \in \est{\mathcal{S}}, \\
        0, & \text{otherwise},
    \end{cases}
    \qquad
    \est{\mathcal{S}} \in \argmax{S' \subseteq [T],\ |S'| = T-s} \; \sum_{t \in S'} |\tilde{z}_t|,
\end{equation}
where $\bar{z} \in \set{0,1}^T$ is the rounded solution. Note that this can be computed in $O(T\log T)$ time per iteration.
%
\section{Robust learning using group-sparse lasso}\label{sec:other-app}
Consider the linear dynamical system defined in~\eqref{eq:dyn_process} and recall that $\calS \subset [T]$ is the set of outliers. For each $t \in \calS$, the adversary corrupts $x_t$ to obtain $\tildex_t$. Let $\thtStar_t = \tildex_t-x_t$ and note that $\thtStar_t = 0$ for all $t \in \calS^c$. Furthermore,
\begin{align*}
\tildex_{t+1} &= x_{t+1}+ \thtStar_{t+1}\\
&=A^* x_t+\eta_{t+1}+\thtStar_{t+1}\\
&=A^*(\tildex_t-\thtStar_t)+\eta_{t+1}+\thtStar_{t+1}\\
&=A^*\tildex_t+\eta_{t+1}+\left(\thtStar_{t+1}-A^*\thtStar_{t}\right).
\end{align*}
Now denoting $\uStar_t := \left(\thtStar_{t+1}-A^*\thtStar_{t}\right)$, we obtain $\tildex_{t+1}= A^*\tildex_t+\eta_{t+1}+\uStar_t$. Let us denote $\UStar = \matrixenc{\uStar_1 & \uStar_2 & \dots & \uStar_{T-1}}$, $\uStar := \vect\left(\UStar\right)$ and 
$$\calS(\uStar) = \{t \in [T-1]: \norm{\uStar_t}_2 \neq 0\}.$$ 
Then it is easy to see that $\absenc{\calS(\uStar)} \le 2\absenc{\calS} = 2s$ (recall that $\abs{\calS} = s$). 

Consider the matrices
\begin{align*}
& X = \matrixenc{x_1 & x_2 & \dots & x_{T-1}}, \; \bar{X} = \matrixenc{x_2 & x_3 & \dots & x_{T}},\\
& \Theta^* = \matrixenc{\thtStar_1 & \thtStar_2 & \dots & \thtStar_{T-1}}, \bar{\Theta}^* = \matrixenc{\thtStar_2 & \thtStar_3 & \dots & \thtStar_{T}}.
\end{align*}
where $(x_t)_{t \in [T]}$ are the uncontaminated data. For the contaminated data, we observe that 
\begin{align*}
Y &:= \matrixenc{\tildex_1 & \tildex_2 & \dots & \tildex_{T-1}} = X+\Theta^* \text{ and},\\
\bar{Y} &:= \matrixenc{\tildex_2 & \tildex_3 & \dots & \tildex_{T}} = \bar{X}+\bar{\Theta}^*.
\end{align*}
Furthermore, we also have $\UStar = \bar{\Theta}^*-A^*\Theta^*$ where as seen above, $\norm{\UStar}_{2,0} \leq 2s$. 

Note that since we do not have any assumptions on the magnitude of the outliers, they can be potentially unbounded. To handle this, we use a filtering procedure that removes a pair $(\tildex_{t}, \tildex_{t+1})$ if either $\norm{\tildex_{t}}_{\infty}$ or  $\norm{\tildex_{t+1}}_{\infty}$ is large. Formally, we introduce a diagonal matrix $D \in \matR^{(T-1) \times (T-1)}$ with $D_t$ denoting the $t$-th diagonal entry of $D$. $D_{t}$ represents the weight on the term $\norm{\tildex_{t+1}-A\tildex_t}_2^2$. In this work, we set $$D_{t} = \indic\set{\norm{\tildex_{t+1}}_{\infty} \le r \text{ and } \norm{\tildex_{t}}_{\infty} \le r}$$ for some parameter $r$ (which will be specified in the analysis). 
\begin{remark}
In this work, we consider one of the simplest forms of filtering (with $D$). In the i.i.d. robust regression setting, works such as \cite{sasai2024outlier, pensia2021robust} use more sophisticated approaches, including iterative filtering and weight learning. Extending these to the VAR setting is not straightforward as the dependence between data would break the concentration arguments underlying their analyses.
\end{remark}

Given this, we can now introduce our two estimators for jointly estimating $A^*$ and $U^*$, both based on exploiting the block-sparse\footnote{We will use ``block'' and ``sparse'' interchangeably.} nature of $U^*$.
\begin{enumerate}
\item \textbf{Least squares with $\ell_{1}/\ell_2$ block-sparse penalization.} Denoting $U = \matrixenc{u_1 & u_2 & \dots & u_{T-1}} \in \matR^{n \times (T-1)}$, the first estimator solves the block-sparse penalized least squares problem 
\begin{palign}
\left(\widehat{A},\widehat{U}\right) \in \Bigg\{\argmin{\substack{A \in \calA \\ U \in \matR^{n \times (T-1)}}} &\frac{1}{2} \norm{(\bar{Y}-AY)D-U}_F^2 \nonumber \\
&+\lambda \norm{U}_{2,1} \Bigg\}. \label{eq:l12-ls-pen}
\end{palign}
Since $\calA$ is convex, then \eqref{eq:l12-ls-pen} is a convex program which can be solved efficiently in practice. 

\item \textbf{Least squares with block-sparse hard constraints.} Since $\norm{\UStar}_{2,0} \leq 2s$, we can replace the penalty in \eqref{eq:l12-ls-pen} with a hard constraint to obtain,
\begin{palign}\label{eq:spr-ls-hard}
\left(\widehat{A},\widehat{U}\right) \in \argmin{\substack{A \in \calA \\ U : \norm{U}_{2,0} \le 2s}} \frac{1}{2} \norm{(\bar{Y}-AY)D-U}_F^2.
\end{palign}
\end{enumerate}
As will be seen shortly, our main theoretical result (Theorem \ref{thm:main-thm}) is for \eqref{eq:l12-ls-pen} for sparse $A^*$ and with
\begin{align} \label{eq:Astar_l1ball}
\calA = \set{A \in \matR^{n \times n}:\norm{A}_{1,1} \le \norm{A^*}_{1,1}}.
\end{align}
\begin{remark}
 The estimator \eqref{eq:l12-ls-pen} is analogous to the penalized lasso that has been well-studied for robust sparse regression with i.i.d inliers (e.g., \cite{dalalyan2019outlier,sasai2020robust,chen13Robust}). 
\end{remark}
\begin{remark}\label{rem:impl-LS}
In the experiments, where we focus on $\calA$ as in \eqref{eq:Astar_l1ball}, we solve \eqref{eq:l12-ls-pen} via a combination of block coordinate descent and constrained weighted least squares via alternating minimization in $O((n^2T+n^3K)\widetilde{K})$ time for $\widetilde{K}$ iterations (where $\widetilde{K}$ is small in practice).   \eqref{eq:spr-ls-hard} is solved via alternating minimization. Indeed, when $A$ is fixed then $U$ can be solved for exactly by picking the columns of $(\bar{Y}-AY)D$ with the largest $\ell_2$ norms, which can be computed in $O(nT \log T)$ time by computing the norms and sorting. When $U$ is fixed, we revert to $\calA$-constrained weighted least squares which can be solved efficiently (as described above) in $O(n^2T+n^3K)$ time (see Remark~\ref{rem:solve-A} for specifics on $K$). %
\end{remark}
Theorem \ref{thm:main-thm} relies on the following assumptions.
\begin{assumption}[Adversary] \label{assump1}
 The adversary first partitions $[T] = \calS \cup \calS^c$ where the choice of $\calS$ is oblivious to the data $(x_t)_{t=1}^T$. The data corresponding to time-indices in $\calS$ are then corrupted in an arbitrary manner, no assumption is made in this regard.  
\end{assumption}

\begin{assumption}[Stability of $A^*$] \label{assump2} 
The quantity $J(A^*) \coloneq \sum_{i=0}^\infty \norm{(A^*)^i}_2$ is finite.
\end{assumption}

The adversarial model in Assumption \ref{assump1} is similar to the inlier-outlier model that is well-studied in the robust-regression literature (with the clean observations being i.i.d) (see e.g., \cite{merad2023robust,gaiffas2022}).
The term $J(A^*)$ in Assumption \eqref{assump2} was considered in \cite{Jedra20} for analyzing the OLS estimator. If $\rho(A^*) < 1$ then it can be shown that $J(A^*)$ is finite, although it can grown with $n$. If $\norm{A^*}_2 < 1$, a direct calculation shows $J(A^*) \le \left(1-\norm{A^*}_2\right)^{-1}$. Furthermore, if there exist constants $C>0$ and $\beta \in (0,1)$ such that $\norm{(A^*)^i}_2 \le C\beta^i$ holds for all integers $i \geq 1$, then $J(A^*) \le 1+\frac{C\beta}{1-\beta}$. This latter condition is referred to as $(C, \beta)$-stability in the literature (cf., \cite[Def. 6.2]{ziemannMix22}).
We then obtain the following theorem.
\begin{theorem}\label{thm:main-thm}
Under Assumptions \ref{assump1} and \ref{assump2}, let $A^*$ be $k$-sparse and $\calA$ be as in \eqref{eq:Astar_l1ball}. There exist constants $c_1, c_2, c_3, c' > 1$ such that the following is true. Let $\delta \in (0,1)$,
\begin{align*}
&L=J^4(A^*) k^2 \cdot \log \left(\frac{n^2}{k}\right) \cdot \max \set{s,\log \left(\frac{n}{\delta}\right)} \text{ and,}\\
&\lambda = 2\left(\sqrt{n}+n^{1/4} \log^{1/4} \left(\frac{T}{\delta}\right)+2^{1/2} \log^{1/2} \left(\frac{T}{\delta}\right)\right).
\end{align*} 
It holds that if  
\begin{align*}
T \ge c' \cdot L \log^2 \left( \frac{n \cdot L}{\delta}\right), \ \  r = c_3 J(A^*) \sqrt{\log \left(\frac{Tn}{\delta}\right)},
\end{align*}
then with probability at least $1-\delta$, any solution $(\widehat{A}, \widehat{U})$ of \eqref{eq:l12-ls-pen} satisfies
\begin{align*}
&\norm{\widehat{A}-A^*}_F^2+\frac{1}{T}\norm{(\widehat{U}-U^*) D}_{F}^2 \\ 
&\le c_1\frac{J^2(A^*) k\log \left(\frac{Tn}{\delta}\right)\log \left(\frac{n}{\delta}\right)}{T}\\
&+c_2 \max \set{n,\log^2\left(\frac{T}{\delta}\right)}\frac{s}{T}.
\end{align*}
\end{theorem}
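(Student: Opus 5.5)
We follow the standard route for penalized lasso-type estimators with outliers (in the spirit of \cite{dalalyan2019outlier,sasai2020robust}): a basic inequality, a deviation bound, and a restricted eigenvalue bound, all adapted to the dependent VAR data. Write $\Delta = \widehat{A}-A^*$ and $\Gamma = \widehat{U}-U^*D$. Since $D$ is diagonal with $0/1$ entries, we have $(\bar{Y}-A^*Y)D = (H+U^*)D$, where $H = \matrixenc{\eta_2 & \dots & \eta_T}$. Indeed, $\tildex_{t+1}-A^*\tildex_t = \eta_{t+1}+\uStar_t$. Comparing the objective at $(\widehat{A},\widehat{U})$ with its value at the feasible point $(A^*,U^*D)$ gives
\begin{align*}
\tfrac12\norm{\Delta Y D+\Gamma}_F^2 \le \dotprod{HD}{\Delta YD+\Gamma} + \lambda\left(\norm{U^*D}_{2,1}-\norm{\widehat{U}}_{2,1}\right).
\end{align*}
The sparsity of $A^*$ together with $\norm{\widehat{A}}_{1,1}\le\norm{A^*}_{1,1}$ yields the cone condition $\norm{\Delta}_{1,1}\le 2\norm{\Delta_{\calK}}_{1,1}\le 2\sqrt{k}\norm{\Delta}_F$, where $\calK=\supp(A^*)$. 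Let $\calS' = \calS(\uStar)$, so $|\calS'|\le 2s$. The penalty difference is then at most $\lambda(\norm{\Gamma_{\calS'}}_{2,1}-\norm{\Gamma_{\calS'^c}}_{2,1})$.

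The noise term splits into two parts. For the first, $\dotprod{HD}{\Gamma}\le \max_t\norm{\eta_{t+1}}_2\,\norm{\Gamma}_{2,1}$, and a Gaussian norm bound with a union over $T$ gives $\max_t\norm{\eta_{t+1}}_2 \le \sqrt n+ n^{1/4}\log^{1/4}(T/\delta)+\sqrt{2\log(T/\delta)}\le \lambda/2$. This is where the stated choice of $\lambda$ comes from. The part of this term on $\calS'^c$ is absorbed by the penalty. What remains is $\tfrac{3\lambda}{2}\norm{\Gamma_{\calS'}}_{2,1}\le \tfrac{3\lambda}{2}\sqrt{2s}\norm{\Gamma}_F$. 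For the second part, $\dotprod{HD}{\Delta YD}\le \norm{\Delta}_{1,1}\max_{ij}|(HDY^\top)_{ij}|$. Here we split $Y=X+\Theta^*$. The term $\sum_t D_t\eta_{t+1}x_t^\top$ is a martingale sum, since $D_t$ depends on $\tildex_{t+1}$. To handle this, we would replace $D$ by the inlier indicator on $\calS^c$-pairs, which is deterministic by Assumption~\ref{assump1}, and control the correction using the $\ell_\infty$ filter. Self-normalized martingale concentration then gives a bound of order $J(A^*)\sqrt{T\log(n/\delta)\log(Tn/\delta)}$. On the outlier columns we use $D_t\norm{\tildex_t}_\infty\le r$ together with $|\calS|=s$, which gives at most $r\,s\max\norm{\eta}_\infty$. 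Combined with $\norm{\Delta}_{1,1}\le2\sqrt k\norm{\Delta}_F$, this produces the first error term. The choice $r=c_3J(A^*)\sqrt{\log(Tn/\delta)}$ guarantees that no inlier pair is filtered out, because $\norm{x_t}_\infty\lesssim J(A^*)\sqrt{\log(Tn/\delta)}$ w.h.p.

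The main obstacle is the lower bound: a restricted strong convexity statement of the form
\begin{align*}
\norm{\Delta YD+\Gamma}_F^2 \gtrsim T\norm{\Delta}_F^2+\norm{\Gamma}_F^2 - (\text{small})
\end{align*}
holding uniformly over $\Delta$ in the $\ell_1$-cone and over $\Gamma$ with the group cone property. We would proceed in three steps. First, restrict to the inlier columns $\calS'^c$, where $D_t=1$ w.h.p.\ and $Y=X$. Second, prove $\norm{\Delta X_{\calS'^c}}_F^2\gtrsim T\norm{\Delta}_F^2$ over $k$-sparse-type $\Delta$, using the covariance lower bound $\succeq I_n$ from the noise and a Hanson--Wright plus covering argument for sparse unit vectors in $\matR^{n^2}$; this covering is the source of $k\log(n^2/k)$. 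Because $\calS$ is oblivious we can fix it, and then we union bound over the at most $\binom{T}{2s}$ choices induced by $\calS'$, which produces the factor $\max\{s,\log(n/\delta)\}$. The concentration of $XX^\top$ for the dependent data requires the $J^4(A^*)$ factor in $L$. Third, bound the cross term $\dotprod{\Delta YD}{\Gamma}$: on $\calS'^c$ it is controlled via the cone condition on $\Gamma$ and an incoherence bound $\max_t\norm{\Delta x_t}_2\lesssim \norm{\Delta}_{1,1} r$, which is small relative to $\sqrt{T}\norm{\Delta}_F$ once $T\gtrsim L\log^2(\cdot)$; on the filtered outlier columns it is bounded by $r$. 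Combining the three parts yields $T\norm{\Delta}_F^2+\norm{\Gamma}_F^2\lesssim J^2k\log\log/1+\lambda^2 s$. Dividing by $T$, and noting $\lambda^2\asymp\max\{n,\log^2(T/\delta)\}$ up to constants, gives the bound stated in the theorem.
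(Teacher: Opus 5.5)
Your overall route is the paper's. It consists of the basic inequality with $\lambda\ge 2\max_t\norm{\eta_t}_2$ (Lemma~\ref{lem:main-inequ}; your $\Gamma$ is $\Delta_U D$), a split of $\dotprod{\Delta_A}{\overline{E}DY^\top}$ into two parts, the $\ell_1$-cone for $\Delta_A$, a restricted lower bound on inlier pairs, and a cross term handled by $\ell_{2,1}$/$\ell_{2,\infty}$ duality plus the group cone. The split treats pairs of inliers as a self-normalized martingale once $r$ lets you replace $D$ by a deterministic indicator, and controls the filtered outlier columns through $r$.

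Your two tool substitutions are fine:
\begin{itemize}
\item Hanson--Wright with a net over sparse matrices can replace the chaos bound of Theorem~\ref{thm:psd-sup} with $w(\widetilde{\calC}_{\calK})\lesssim\sqrt{k\log(n^2/k)}$. You must add the standard sparse-to-cone reduction, since $\widetilde{\calC}_{\calK}$ is not a set of sparse matrices.
\item The deterministic bound $D_t\norm{\Delta_A\tildex_t}_2\le r\norm{\Delta_A}_{1,1}\le 2r\sqrt{k}\norm{\Delta_A}_F$ on every column is simpler than the paper's second chaos argument (which produces $R$ on $\tilcalSC$). It is of the same order up to logarithms.
\end{itemize}

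The step that fails as written is restricting the lower bound to ``the inlier columns $\calS'^c$, where $D_t=1$ and $Y=X$'' with $\calS'=\calS(\uStar)$. The complement of the support of $\uStar$ need not consist of inliers. If $t,t+1\in\calS$, the adversary may set $\thtStar_t=v\neq0$ and $\thtStar_{t+1}=A^*v$. Then $\uStar_t=0$ while $\tildex_t\neq x_t$, and $D_t$ may even vanish. Moreover, $\calS(\uStar)$ depends on the corruption values and hence on the data.

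The fix is the paper's set $\tilcalS=\{t\in[T-1]: t\in\calS\text{ or }t+1\in\calS\}$. It contains $\calS(\uStar)$, satisfies $|\tilcalS|\le 2s$, and is deterministic under Assumption~\ref{assump1}. Its complement consists of inlier pairs, so $\norm{\Delta_AYD}_F^2\ge\norm{\Delta_AXD_{\tilcalSC}}_F^2$. On the event of Lemma~\ref{lem:all-unc-select}, every column of $\tilcalSC$ is kept. This is the set you already use implicitly in your martingale step, and with it no union bound over $\binom{T}{2s}$ subsets is needed.

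Relatedly, the factor $\max\{s,\log(n/\delta)\}$ in $L$ does not come from such a union bound. In the paper it comes from the cross-term requirements $T\gtrsim r^2R^2k|\tilcalS|$ and $T\gtrsim\overline{V}R/\overline{\lambda}$.

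Finally, when you combine the bounds, do not drop the contribution of order $kr^2s^2\log(nT/\delta)/T^2$ from the outlier columns of the noise term. It must be absorbed into the $s/T$ term using the sample-size condition, as in Remark~\ref{rem:upper-st}. Also, $\lambda^2\asymp\max\{n,\log(T/\delta)\}$, so $\max\{n,\log^2(T/\delta)\}$ is only an upper bound, not an equivalence.
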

The theorem provides an error bound of $$O\left(\frac{k\log n \log(nT/\delta)}{T} + \frac{s\max\set{n,\log^2(T/\delta)}}{T} \right)$$ which holds when $T$ is suitably large. Since $A^*$ and $U^*$ are collectively $(k + 2ns)$-sparse, this rate is likely near optimal (up to $\log$ factors) if one is interested in estimating \emph{both} $A^*$ and $U^*$ (this was also noted in \cite[Section 2]{dalalyan2019outlier} in the i.i.d setting). If only estimating $A^*$ is of interest then the rate in Theorem \ref{thm:main-thm} is sub-optimal - the $s/T$ factor should be $s^2/T^2$ (analogous to the i.i.d setting, cf., \cite{dalalyan2019outlier, ChenGaoRen2016, Gao2017Bernoulli}). A possible approach for proving this would be to adapt the  techniques in \cite{dalalyan2019outlier} (based on incoherence and transfer principles) to our setting of dependent inliers. This is quite non-trivial and left as an important direction for future work.
\begin{remark}
  The stated error bound holds for $T$ suitably large. When $s \leq \log(n/\delta)$ then $T \gtrsim k^2 \log^4(n/\delta)$ suffices. If $s \geq \log(n/\delta)$ then $T \gtrsim k^2 s \log n \log^2(ns)$ suffices. This quadratic dependence on $k$ is an artefact of the proof-technique, in particular, with the argument used for showing the restricted lower isometry for the LHS term $\norm{\Delta {Y'} D}_F^2$ in \eqref{eq:main-inequ}. 
\end{remark}

While the proof of Theorem~\ref{thm:main-thm} is available in Appendix~\ref{app-sec:analysis}, a sketch is provided below for convenience.
\begin{enumerate}
\item In Lemma~\ref{lem:main-inequ}, using the optimality of $\widehat{A},\widehat{U}$ and $A^*, U^*$ we derive the inequality \eqref{eq:main-inequ}, 
\begin{align*}
& \underbrace{\frac{1}{2} \norm{\Delta {Y'} D}_F^2}_{(I)} \le \underbrace{\dotprod{\Delta_A}{\overline{E}DY^\top}}_{(II)}\\
&+{\frac{\lambda}{2} \Bigg(3\norm{(\Delta_UD)_{\tilcalS}}_{2,1}-\norm{(\Delta_UD)_{\tilcalSC}}_{2,1}\Bigg)}
\end{align*}
which holds when $\lambda \ge 2  \max_{t \in \set{2,\dots,T}} \norm{\eta_t}_2$. Here $\Delta_A, \Delta_U, \Delta$ denote the error terms, while ${Y'}$ depends on $Y$ (see \eqref{eq:proof_useful_notation}). The set $\tilcalS$ is the set of time-indices $t \in [T-1]$ for which at least one of $t, t+1$ lies in $\calS$; see \eqref{eq:set_Stil_def}. Thus $\tilcalS^c$ is contained in the set of inliers $\calS^c$.

\item The choice of $\lambda, r$ in Theorem \ref{thm:main-thm} is established in Lemma's ~\ref{lem:bound-lambda} and \ref{lem:all-unc-select} respectively, via standard concentration results. In particular, $r$ is obtained by ensuring that w.h.p, $\norm{x_t}_{\infty} \leq r$ for all $t \in [T]$.

\item In Lemmas~\ref{lem:inf-norm-cost}, and \ref{lem:inf-norm-comple} we establish an upper bound for $(II)$. Lemma~\ref{lem:inf-norm-cost} involves applying Theorem~\ref{thm:bounded-func-conc} to control $\sup_{b \in \calB(r,\abs{\tilcalS})}  \abs{g^\top b}$ for $g {\sim} \nrml \left(0,I_{T-1}\right)$. Lemma~\ref{lem:inf-norm-comple} bounds the tail of an appropriately defined self-normalized martingale using Theorem~\ref{thm:self-nrml-mart}.

\item To bound $(I)$ we use Lemmas~\ref{lem:lwr-bnd-deltaA} and \ref{lem:bnd-cross-trm}. The proofs of both lemmas involve defining appropriate sets of matrices and applying  Theorem~\ref{thm:psd-sup} \cite[Theorem 3.1]{krahmer14}. 

\item Combining the bounds for $(I)$ and $(II)$, we obtain after some simplifications the statement of the theorem.
\end{enumerate}

\section{Experiments} \label{sec:exps}
\begin{figure*}[t]
    \centering
    \begin{subfigure}[b]{0.32\textwidth}
        \centering
        \includegraphics[width=\linewidth]{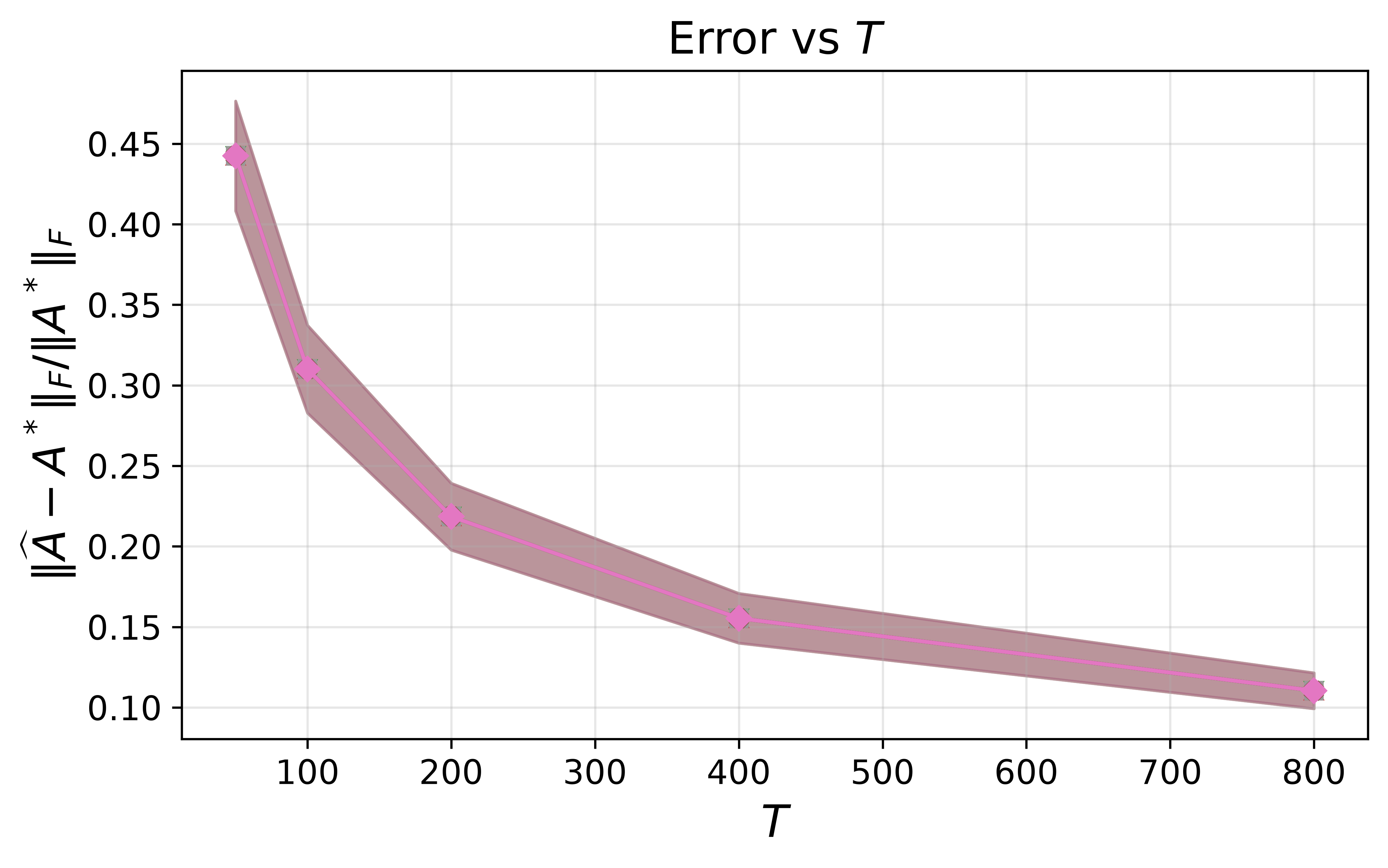}
        \caption{$\epsilon = 0$}
        \label{fig:T_random_heavy_tail_0}
    \end{subfigure}
    \hfill
    \begin{subfigure}[b]{0.32\textwidth}
        \centering
        \includegraphics[width=\linewidth]{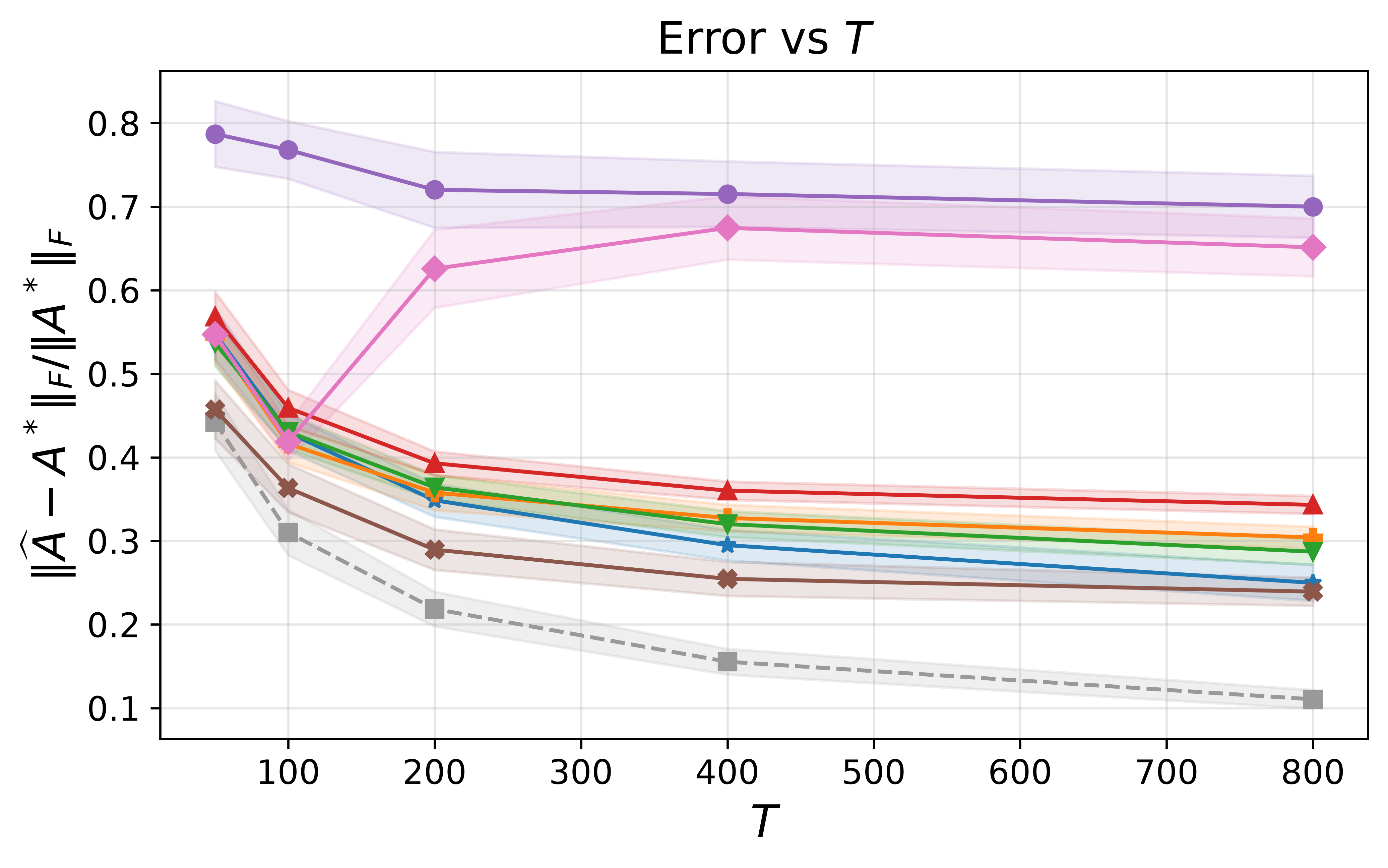}
        \caption{$\epsilon = 0.02$}
        \label{fig:T_random_heavy_tail_002}
    \end{subfigure}
    \hfill
    \begin{subfigure}[b]{0.32\textwidth}
        \centering
        \includegraphics[width=\linewidth]{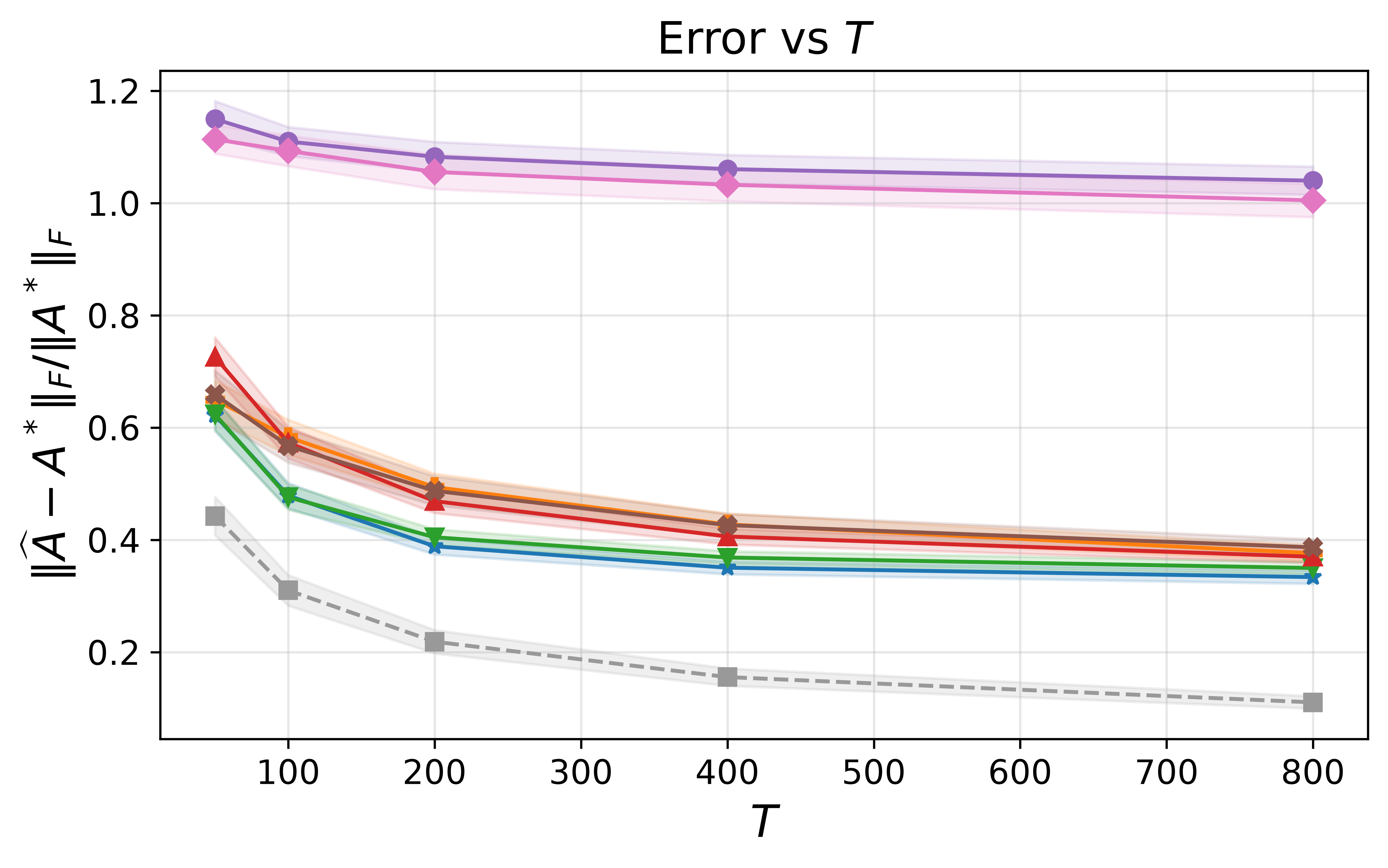}
        \caption{$\epsilon = 0.1$}
        \label{fig:T_random_heavy_tail_01}
    \end{subfigure}
    
    \vspace{1em} 
    
    \begin{subfigure}[b]{0.6\textwidth}
        \centering
        \includegraphics[width=\linewidth]{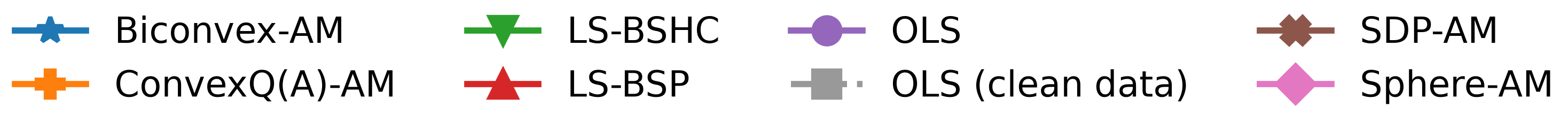}
        \label{fig:T_random_heavy_tail_legend}
    \end{subfigure}

    \makeatletter
    \renewcommand{\@makecaption}[2]{%
      \vspace{\abovecaptionskip}%
      \parbox{\textwidth}{ {\bf #1:} #2}%
      \vspace{\belowcaptionskip}%
    }
    \makeatother
    \caption{
    Frobenius norm error, $\norm{\widehat{A}-A^*}_F/{\norm{A^*}_F}$, for $T \in \{50,100,200,400,800\}$ with (a) $\epsilon=0$, (b) $\epsilon=0.02$, and (c) $\epsilon=0.1$, under the \emph{random heavy-tailed contamination} model.}
    \label{fig:T_random_heavy_tail}
\end{figure*}

\begin{figure*}[t]
    \centering
    \begin{subfigure}[b]{0.32\textwidth}
        \centering
        \includegraphics[width=\linewidth]{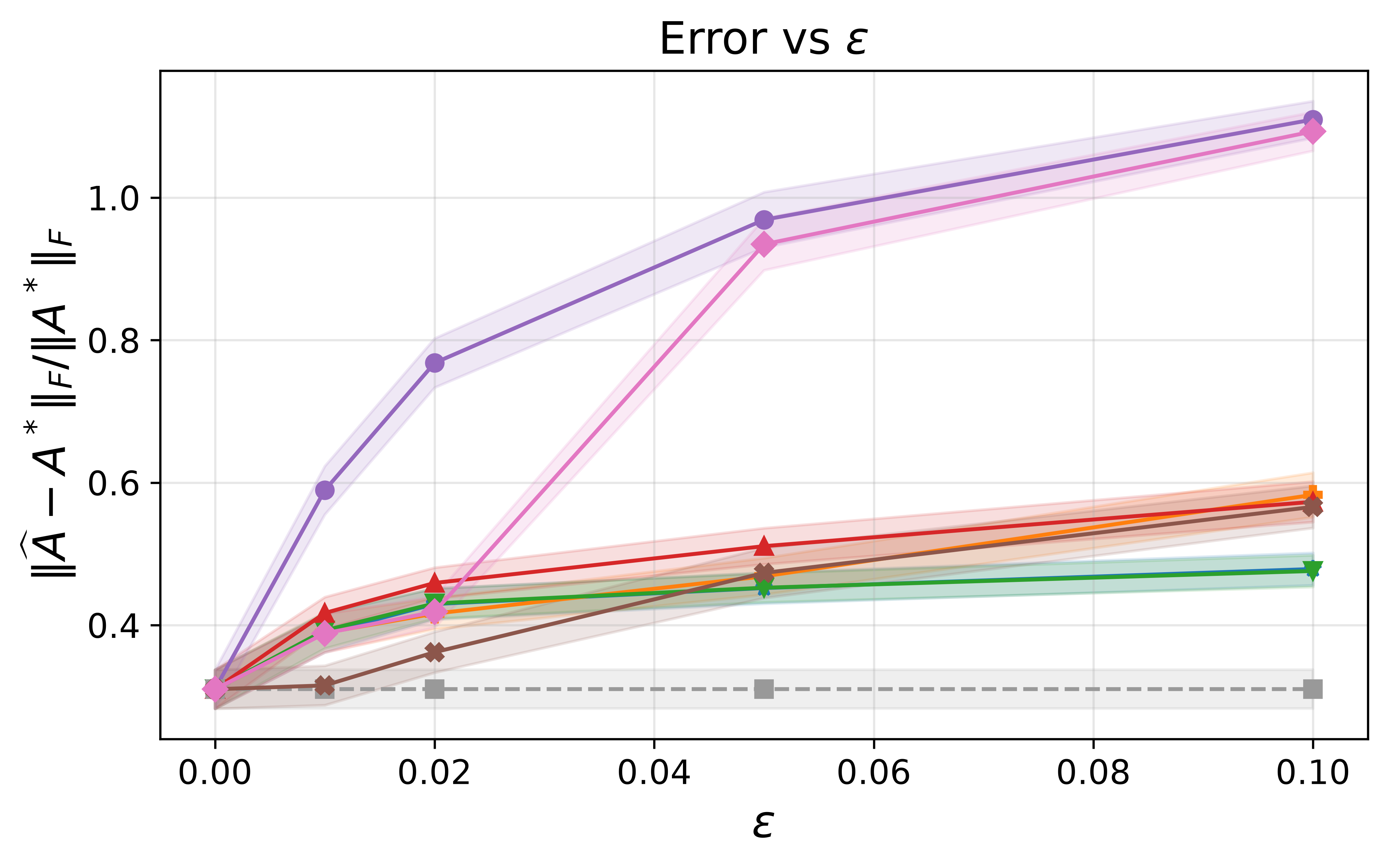}
        \caption{$T=100$}
        \label{fig:eps_random_heavy_tail_100}
    \end{subfigure}
    \hfill
    \begin{subfigure}[b]{0.32\textwidth}
        \centering
        \includegraphics[width=\linewidth]{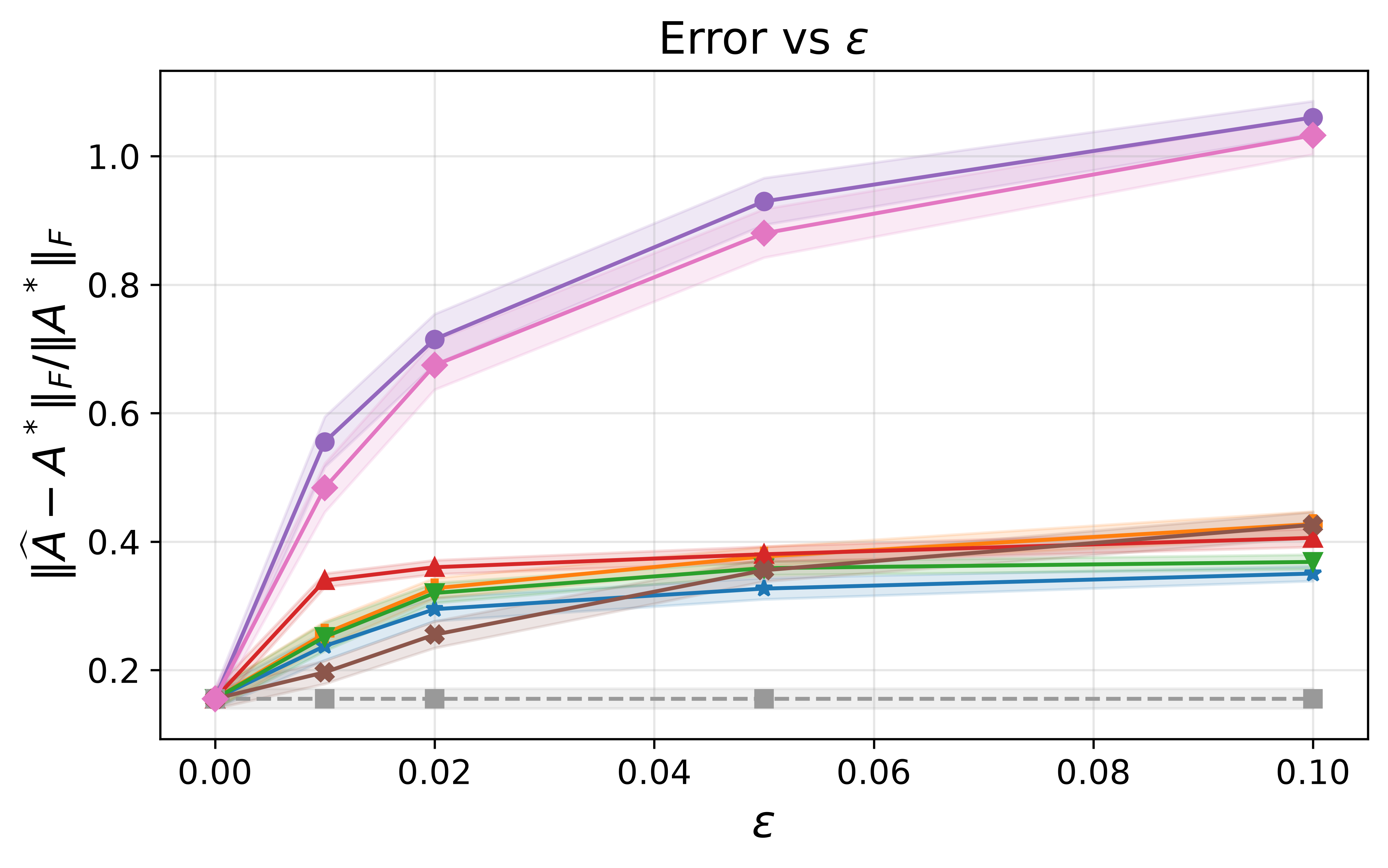}
        \caption{$T=400$}
        \label{fig:eps_random_heavy_tail_400}
    \end{subfigure}
    \hfill
    \begin{subfigure}[b]{0.32\textwidth}
        \centering
        \includegraphics[width=\linewidth]{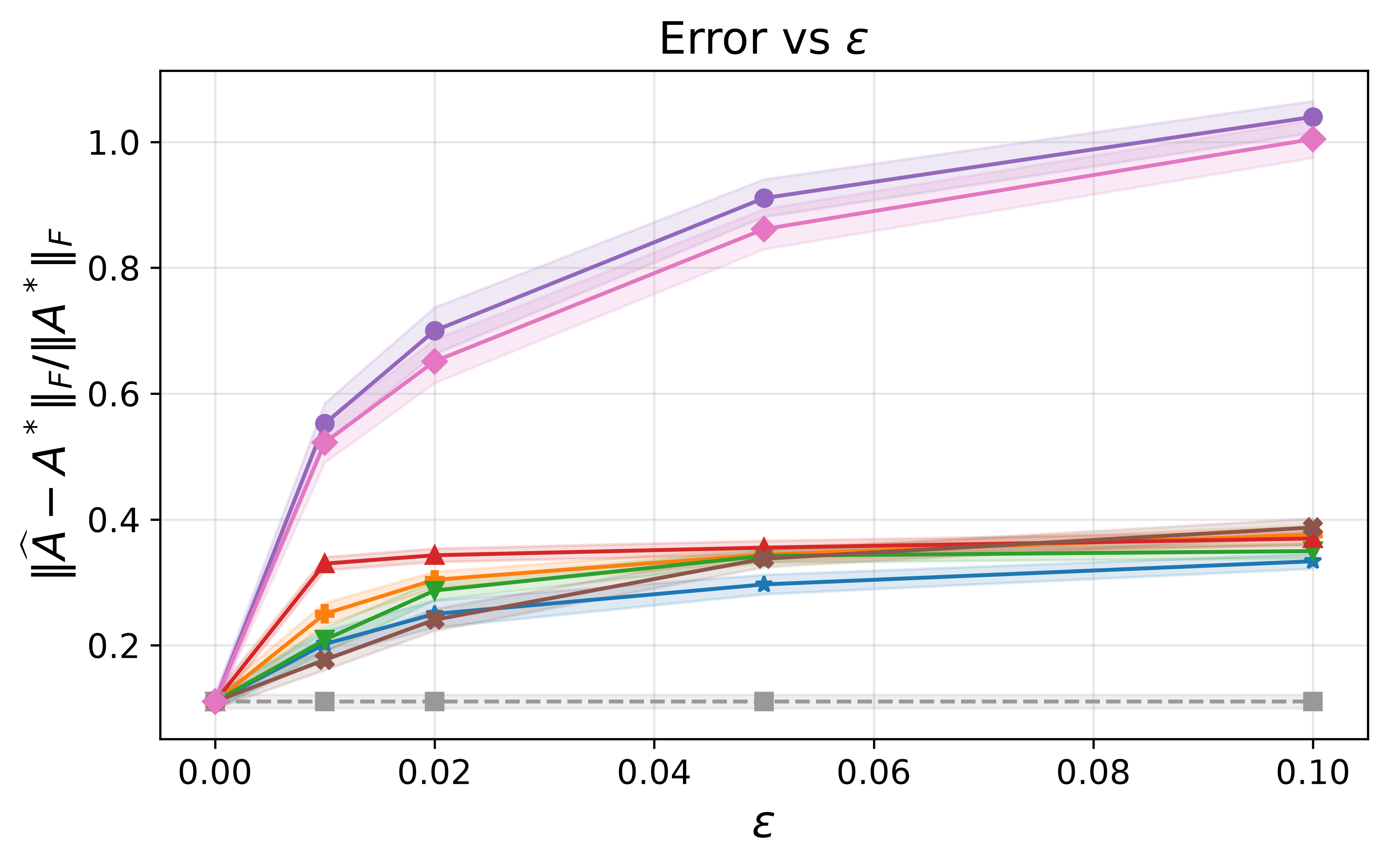}
        \caption{$T=800$}
        \label{fig:eps_random_heavy_tail_800}
    \end{subfigure}
    
    \vspace{1em} 
    
    \begin{subfigure}[b]{0.6\textwidth}
        \centering
        \includegraphics[width=\linewidth]{exp_results_synth/legend}
        \label{fig:eps_random_heavy_tail_legend}
    \end{subfigure}

    \makeatletter
    \renewcommand{\@makecaption}[2]{%
      \vspace{\abovecaptionskip}%
      \parbox{\textwidth}{ {\bf #1:} #2}%
      \vspace{\belowcaptionskip}%
    }
    \makeatother
    \caption{
    Frobenius norm error, $\norm{\widehat{A}-A^*}_F/{\norm{A^*}_F}$, for  $\epsilon \in \{0,0.01,0.02,0.05,0.1\}$ with (a) $T=100$, (b) $T=400$, and (c) $T=800$, under the \emph{random heavy-tailed contamination} model.}
    \label{fig:eps_random_heavy_tail}
\end{figure*}

In our experiments, we use the following notation to denote the estimators.
\begin{table}[H]
\centering
\begin{tabular}{ll}
\hline
\texttt{Sphere-AM} & Sphere Relaxation. \\
\texttt{SDP-AM} & Semi-definite program Relaxation. \\
\texttt{Biconvex-AM} & Bi-convex relaxation. \\
\texttt{ConvexQ(A)-AM} & Convexifying $Q(A)$ to estimate $z$. \\
\texttt{LS-BSP} & LS with block-sparse penalty. \\
\texttt{LS-BSHC} & LS with block-sparse constraints. \\
\texttt{OLS} & $\calA$-constrained least squares. \\
\hline
\end{tabular}
\end{table}
Here, LS stands for least squares, and with a slight abuse of notation, we denote $\calA$-constrained LS by \texttt{OLS}.

\paragraph{System setup.} All experiments were run on a machine with a Intel Core U9-285H (16 cores, 16 threads) and 32GB RAM, running Ubuntu 24.04. The code uses standard Python scientific computing libraries including, \emph{numpy}, \emph{scipy}, \emph{cvxpy}, and \emph{pandas}. All plots were generated with \emph{matplotlib} and all random seeds were fixed for reproducibility. Each experiment used 50 parallel worker threads.

\paragraph{Implementation details.} 
For fixed $A$, \texttt{Sphere-AM} uses the Mor\'{e}-Sorensen trust-region method \citep{sorenson1983}. \texttt{SDP-AM} is rewritten in an equivalent form with centered variables and solved using Burer-Monteiro method \citep{Burer2003}. For \texttt{Biconvex-AM} and \texttt{ConvexQ(A)-AM}, the $z$-update is solved directly with \emph{cvxpy}. When $z$ is fixed, $\widehat{A}$ is estimated via $\calA$-constrained weighted least squares (see Remark \ref{rem:solve-A}). For \texttt{LS-BSP} and \texttt{LS-BSHC}, the implementation follows the steps outlined in Remark~\ref{rem:impl-LS}. See Appendix~\ref{appsec:additional-exp} for more details. 

\paragraph{Experimental setup.} In all experiments, we consider $A^* \in \matR^{n \times n}$ with $n=50$, spectral radius $0.9$, and sparsity $k=150$, and we set $\calA = \set{A \in \matR^{n \times n}:\norm{A}_{1,1} \le \norm{A^*}_{1,1}}$. For a fixed $A^*$, we run $20$ Monte-Carlo simulations; in each simulation, we generate a sequence $(x_t)_{t=0}^T$ for a given $T$ by setting $x_0=0$ and using the VAR \eqref{eq:dyn_process}. The resulting sequence is then contaminated according to the chosen contamination model. We repeat this process for $20$ different matrices $A^*$ and report the mean recovery error together with its variance. For alternating minimization, initialize $A^{(0)}$ as the minimizer for $\calA$-constrained least squares over contaminated data.

Let us consider {\bf random heavy-tailed} contamination, in which outliers drawn from a Cauchy distribution with location and scale both set to $10$, are added to indices selected uniformly at random (with probability depending on $s$). See Appendix~\ref{appsec:additional-exp} for results for other contamination models.

\paragraph{Frobenius norm error against $T$.} Figure~\ref{fig:T_random_heavy_tail} shows the Frobenius error, ${\norm{\widehat{A}-A^*}_F}/{\norm{A^*}_F}$, against $T \in \set{50,100,200,400,800}$. When $\epsilon=0$, all estimators coincide with \texttt{OLS} and therefore achieve the same error. While \texttt{Sphere-AM} fails to recover $A^*$ for $\epsilon >0.02$ or $T>100$, it performs even better than \texttt{Biconvex-AM} when both $\epsilon$ and $T$ are small. In the small-$\epsilon$ regime, \texttt{SDP-AM} appears to outperform the other estimators with \texttt{Biconvex-AM} approaching the same error as $T$ increases (see Figure~\ref{fig:T_random_heavy_tail_002}). However, when $\epsilon = 0.1$, \texttt{Biconvex-AM} outperforms the other estimators, with \texttt{LS-BSHC} achieving comparable performance. The advantage of \texttt{Biconvex-AM} can be attributed to it involving a relatively tighter relaxation of the original problem, which handles filtering both at the level of individual observations and in the interactions between adjacent entries. For all estimators except \texttt{Sphere-AM} and \texttt{OLS}, the error decreases as $T$ increases, although the rate of decrease slows. 

\paragraph{Frobenius norm error against $\epsilon$.} Figure~\ref{fig:eps_random_heavy_tail} shows the Frobenius error, ${\norm{\widehat{A}-A^*}_F}/{\norm{A^*}_F}$ against $\epsilon \in \set{0,0.01,0.02,0.05,0.1}$. We observe similar behavior across all $T \in \set{100,400,800}$ with \texttt{OLS} and \texttt{Sphere-AM} showing the highest error. As expected, the error increases with $\epsilon$. While \texttt{SDP-AM} achieves a smaller error in the small-$\epsilon$ regime, \texttt{Biconvex-AM} performs best as $\epsilon$ increases. Across the range of $\epsilon$ considered, \texttt{LS-BSHC} achieves a recovery error comparable to that of \texttt{Biconvex-AM}. Note that while \texttt{Sphere-AM} generally performs comparably to \texttt{OLS}, it outperforms \texttt{LS-BSHC}, \texttt{LS-BSP} and \texttt{Biconvex-AM} when $T=100$ and $\epsilon \le 0.02$, suggesting it may be more suited for small $T,\epsilon$. 

%

\subsection*{AI use statement}
We have not used generative AI tools for synthetic-data generation, theoretical or conceptual development, mathematical claims or proofs, hypotheses, research methodology, experiments, 
dataset cleaning or reformatting, qualitative analysis, or interpretation of results. We did not use generative AI tools for any of the tasks with recommended disclosure, including brainstorming, literature discovery or summarization, drafting or editing text, creating figures, formatting references, or suggesting titles, keywords, or paper structure.

The algorithms and experiments were designed by the authors. Generative AI tools assisted with  parts of the implementation, primarily by optimizing and refactoring code. The authors reviewed all AI-assisted code and checked its correctness through testing and by comparing results against expected behavior.

We take responsibility for the final content of this work, including text, claims, or 
artifacts produced with the aid of generative AI.

\subsection*{Acknowledgements}
This work was supported by a Nanyang Associate Professorship (NAP) grant
from NTU Singapore.

\bibliographystyle{abbrvnat}
\bibliography{references}

\clearpage

\appendix
\thispagestyle{empty}
\onecolumn
\makeatletter
\addtocontents{toc}{\string\setcounter{tocdepth}{3}} 
\setcounter{tocdepth}{3}

\let\addcontentsline\oldaddcontentsline 
\makeatother

\begin{center}
    {\large \bf Table of Contents (Appendix)}
\end{center}
\startcontents[sections]
\printcontents[sections]{}{1}{}


\section{Technical tools}\label{sec:prelim-tools}
The proof of Theorem~\ref{thm:main-thm}  utilizes a number of technical tools which we recall here, starting with the 
definitions of Talagrand's $\gammafn{\alpha}$ functionals \citep{talagrand2014upper}. 
\begin{definition}[Talagrand's $\gammafn{\alpha}$ functionals \citep{talagrand2014upper}]\label{def:talagrand-gamma}
Let $(\calM,d)$ be a metric space. We define an adimissible sequence to be a sequence of subsets of $\calM$, $(\calM_r)_{r \ge 0}$, where $\absenc{\calM_0} = 1$ and $\absenc{\calM_r} \le 2^{2^r}$ for all $r \ge 1$. For any $0 < \alpha < \infty$, the $\gammafn{\alpha}$ function of $(\calM,d)$ is defined as,
\[
\gammafn{\alpha} (\calM,d) \coloneq \inf \sup_{s\in \calS} \sum_{r=0}^{\infty} 2^{r/\alpha} d(s,\calM_r)
\]
where the $\inf$ is over all admissible sequences of $\calM$.
\end{definition}
It is useful to note that $\gammafn{\alpha}$ satisfies the following properties.
\begin{enumerate}
\item \label{itm:tal-prp1} For any two metrics $d_1$ and $d_2$, $d_1 \le c\cdot d_2 \implies \gammafn{\alpha}(\calM,d_1) \le c \gammafn{\alpha}(\calM,d_2)$ (where $c>0$). 

\item \label{itm:tal-prp2} For $\widehat{\calM} \subseteq \calM$, $\gammafn{\alpha}(\widehat{\calS},d) \le c_{\alpha} \gammafn{\alpha}(\calS,d)$ where $c_{\alpha}>0$ is a constant that only depends on $\alpha$.

\item \label{itm:tal-prp3} If the map $f:(\calM,d_1) \rightarrow (\calS,d_2)$ is onto, and $R$-Lipschitz for all $x,y \in \calM$, then $\gammafn{\alpha}(\calS,d_2) \le c_{\alpha} R \gammafn{\alpha}(\calM,d_1)$ where $c_{\alpha}>0$ is a constant that only depends on $\alpha$.
\end{enumerate}
Property~\ref{itm:tal-prp1} follows from the definition while Properties \ref{itm:tal-prp2} and \ref{itm:tal-prp3} were stated in \cite[Theorem 1.3.6]{Talagrand2005-gf} (with $c_{\alpha} = 1$) for an alternative definition of the $\gamma_{\alpha}$ functional \cite[Definition 2.2.19]{talagrand2014upper} which is equivalent to Definition \ref{def:talagrand-gamma} up to constants depending only on $\alpha$. See also \cite[Section 2.3]{talagrand2014upper}.

For any set $\calP \subset \matR^{n}$, the Gaussian width of $\calP$ is defined as (see e.g., \cite{HDPbook})  
$$w\left(\calP\right) \coloneq \expec \left[\sup_{x\in \calP} {\dotprod{x}{g}}\right] \ \text{ for } \ g \sim \nrml\left(0, I_n\right).$$ 
Recall Talagrand's majorizing measure theorem \cite[Theorem 2.4.1]{talagrand2014upper}, which tells us that 
\begin{align} \label{eq:gamma_gauss_width}
    \gamma_2(\calP ,\norm{\cdot}_2) \asymp w(\calP).
\end{align}

Let $\calM$ be a given set of matrices and denote 
$$d_2(\calM) \coloneq \sup_{M \in \calM} \norm{M}_2, \quad  d_F(\calM) \coloneq \sup_{M \in \calM} \norm{M}_F.$$ 
Theorem $3.1$ of \cite{krahmer14} provides a concentration bound for the suprema of second order subgaussian choas process involving  positive semidefinite matrices.
\begin{theorem}[Theorem 3.1 of \cite{krahmer14}]\label{thm:psd-sup}
Let $\calM$ be a set of matrices and $\eta$ be a vector with independent, zero-mean, $R$-subgaussian entries with unit variance. Let,
\begin{align*}
F &\coloneq \gammafn{2} \left(\calM,\norm{\cdot}_2\right) \left[\gammafn{2} \left(\calM,\norm{\cdot}_2\right)+d_F(\calM)\right]+d_F(\calM)d_2(\calM),\\
V &\coloneq d_2(\calM)\left[\gammafn{2} \left(\calM,\norm{\cdot}_2\right)+d_F(\calM)\right], \text{ and, } U \coloneq d^2_2(\calM).
\end{align*}
There exist constants $c_1,c_2>0$ depending only on $R$ such that for any $t > 0$,
\begin{align*}
\prob \left(\sup_{M \in \calM} \absenc{\norm{M\eta}_2^2-\expec \left[\norm{M\eta}_2^2\right]} \ge c_1 F+t\right) 
\le 2\exp \left(-c_2\min\set{\frac{t^2}{V},\frac{t}{U}}\right).
\end{align*}
\end{theorem}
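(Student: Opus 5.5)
The statement is the chaos concentration theorem of \cite{krahmer14}, and I would follow its proof strategy: bound the $L_p$ norms of the supremum for every $p\ge 1$, then convert the moment bound into the tail bound. Write $Z := \sup_{M \in \calM} \abs{\norm{M\eta}_2^2-\expec\norm{M\eta}_2^2}$. The target is a moment estimate of the form
\begin{align*}
\left(\expec Z^p\right)^{1/p} \lesssim F + \sqrt{p}\, V + p\, U \quad \text{for all } p \ge 1.
\end{align*}
Given this, Markov's inequality applied with $p \asymp \min\set{t^2/V^2, t/U}$ yields the stated tail $2\exp(-c_2\min\set{t^2/V,t/U})$, after adjusting constants and normalization. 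In particular, the cited statement writes $V$ in the role that $V^2$ would play in the usual Bernstein form, and I would check that normalization carefully when matching.

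\textbf{Step 1: decouple the diagonal.} Expand $\norm{M\eta}_2^2-\expec\norm{M\eta}_2^2 = \sum_{j\neq k}\eta_j\eta_k (M^\top M)_{jk} + \sum_j (\eta_j^2-1)\norm{M_{:,j}}_2^2$. The diagonal part is a sum of independent centered subexponential variables with coefficients $\norm{M_{:,j}}_2^2$. Bernstein's inequality together with a chaining argument controls its supremum by terms of order $d_F d_2\sqrt{p} + d_2^2 p$ plus a $\gamma_2$ contribution. For the off-diagonal part, I would apply the standard decoupling inequality to replace it by $\sum_{j\neq k}\eta_j\eta'_k (M^\top M)_{jk}$ with $\eta'$ an independent copy, at the cost of a constant in $L_p$.

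\textbf{Step 2: chain the decoupled process.} This is the core step. Write the decoupled chaos as $\langle M\eta, M\eta'\rangle$ (up to the diagonal correction). Take an admissible sequence $(\calM_r)$ for $(\calM,\norm{\cdot}_2)$ nearly attaining $\gamma_2$, with projections $\pi_r M$, and telescope
\begin{align*}
\langle M\eta, M\eta'\rangle-\langle \pi_0M\eta,\pi_0M\eta'\rangle=\sum_r \Big(\langle \pi_{r+1}M\eta,(\pi_{r+1}M-\pi_rM)\eta'\rangle+\langle (\pi_{r+1}M-\pi_rM)\eta,\pi_rM\eta'\rangle\Big).
\end{align*}
Conditionally on one vector, each increment is subgaussian with parameter $\norm{(\pi_{r+1}M-\pi_r M)\eta'}_2$. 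I would therefore first prove the key lemma $(\expec\sup_M\norm{M\eta}_2^p)^{1/p}\lesssim \gamma_2(\calM,\norm{\cdot}_2)+d_F+\sqrt{p}\,d_2$. This follows by chaining $\norm{M\eta}_2 = \sup_{\norm{x}_2\le1}\langle x, M\eta\rangle$, using that $\norm{(M-M')\eta}_2$ concentrates around $\norm{M-M'}_F$ with subgaussian fluctuation $\norm{M-M'}_2$. Summing the increments with union bounds over $\abs{\calM_r}\le 2^{2^r}$ at levels $r$ with $2^r \ge p$, and treating the levels $2^r<p$ separately with the $\sqrt p$ moment, gives the $L_p$ bound $\gamma_2(\gamma_2+d_F)+\sqrt p\, d_2(\gamma_2+d_F)+p\,d_2^2$. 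These are exactly the terms $F$, $V$ and $U$.

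The main obstacle is Step 2. The chaining must be arranged so that the product of the two factors produces $\gamma_2(\gamma_2+d_F)$ rather than a cruder $d_F^2$-type quantity. This requires the conditional subgaussian argument together with the key lemma applied to $\sup_M\norm{M\eta'}_2$, combined through H\"older's inequality in $L_p$. Care is also needed at the coarse levels to keep the $p$-dependence linear.
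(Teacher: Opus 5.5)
The paper gives no proof of this statement. It is quoted from Krahmer--Mendelson--Rauhut, and your outline follows the architecture of their proof: a moment bound, decoupling, chaining the decoupled chaos from a level $2^\ell\asymp p$ onward (coarse part via Hanson--Wright moments and a union bound over at most $e^{p}$ matrices), and an $L_p$ bound on $\sup_M\norm{M\eta}_2$. However, three steps do not go through as written.

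First, the normalization. You noticed the mismatch, but no adjustment of constants closes it. Moments $\norm{Z}_{L_p}\lesssim F+\sqrt p\,V+p\,U$ give $2\exp(-c_2\min\set{t^2/V^2,t/U})$. That is the original statement, and it is also what the paper uses downstream (the $q^2/F^2$ in the proof of Lemma~\ref{lem:lwr-bnd-deltaA}). The displayed $t^2/V$ is not invariant under $\calM\mapsto\alpha\calM$, and it is false. Take $\calM=\set{I_n}$ and Gaussian $\eta$: then $\gammafn{2}(\calM,\norm{\cdot}_2)=0$, $F=V=\sqrt n$ and $U=1$. With $t=\sqrt n$ the claim would give $\prob(\abs{\norm{\eta}_2^2-n}\ge (c_1+1)\sqrt n)\le 2e^{-c_2\sqrt n}$, which contradicts the central limit theorem. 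You should prove the $V^2$ version and flag the misprint.

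Second, the core of Step 2. Two of your choices make you sum Frobenius norms of chain links:
\begin{itemize}
\item Your conditional parameter $\norm{(\pi_{r+1}M-\pi_rM)\eta'}_2$ leads to $\sum_r 2^{r/2}\norm{\pi_{r+1}M-\pi_rM}_F$.
\item Your proof of the key lemma, chaining $\norm{M\eta}_2$ via concentration of $\norm{(M-M')\eta}_2$ around $\norm{M-M'}_F$, leads to $\sum_r\norm{\pi_{r+1}M-\pi_rM}_F$.
\end{itemize}
Neither sum is controlled by $\gammafn{2}(\calM,\norm{\cdot}_2)+d_F(\calM)$; the first is exactly the Frobenius-chaining bound this theorem improves on.

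The fix is to condition on the vector hit by the non-difference factor. Given $\eta$, the term $\langle \pi_{r+1}M\eta,(\pi_{r+1}M-\pi_rM)\eta'\rangle$ is subgaussian in $\eta'$ with parameter at most $\norm{\pi_{r+1}M-\pi_rM}_2\sup_{M'\in\calM}\norm{M'\eta}_2$, and symmetrically for the other term. This yields $\gammafn{2}(\calM,\norm{\cdot}_2)\,\norm{\sup_M\norm{M\eta}_2}_{L_p}$.

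The key lemma then comes from bootstrapping, not from direct chaining. Use $\sup_M\norm{M\eta}_2^2\le d_F^2+Z$ together with the chaos bound. For $x=\norm{\sup_M\norm{M\eta}_2}_{L_{2p}}$ this gives $x^2\lesssim (d_F+\sqrt p\,d_2)^2+\gammafn{2}(\calM,\norm{\cdot}_2)\,x$ plus the diagonal contribution. Solving the quadratic gives $x\lesssim\gammafn{2}+d_F+\sqrt p\,d_2$.

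Third, the diagonal. Bernstein increments have a subexponential component. Chaining with such tails produces a $\gamma_1$-functional of $\set{(\norm{Me_j}_2^2)_j : M\in\calM}$ in $\ell_\infty$, not only a $\gamma_2$ term. Transferring it to $\calM$ through $\abs{\norm{Ae_j}_2^2-\norm{Be_j}_2^2}\le 2d_2\norm{A-B}_2$ gives $d_2\,\gamma_1(\calM,\norm{\cdot}_2)$. This can be polynomially larger than $F$: for $\calM=\set{\diag(a):\abs{a_j}\le j^{-\alpha}\ \forall j}$ with $\tfrac12<\alpha<1$, one has $F=O(1)$ but $\gamma_1\gtrsim n^{1-\alpha}$.

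A workable route is as follows.
\begin{enumerate}
\item Symmetrize.
\item Compare $\epsilon_j(\eta_j^2-1)$ with $g_jg_j'$, where $g,g'$ are independent standard Gaussian vectors, using the contraction principle for tail-dominated symmetric sequences.
\item Observe that $\sum_j g_jg_j'\norm{Me_j}_2^2=\langle \tilde M g,\tilde M g'\rangle$ with $\tilde M=[\,e_1\otimes Me_1,\dots,e_n\otimes Me_n\,]$.
\end{enumerate}
Since $\norm{\tilde M-\tilde M'}_2\le\norm{M-M'}_2$ and $\norm{\tilde M}_F=\norm{M}_F$, this is a decoupled chaos over a family with no larger parameters, so the corrected Step 2 applies. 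The bootstrapping for $\sup_M\norm{\tilde Mg}_2$ closes because $g_j^2-g_j'^2=2h_jh_j'$, with $h=(g-g')/\sqrt2$ and $h'=(g+g')/\sqrt2$ independent.
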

The following theorem taken from \cite[Chapter 3]{Wainwright2019} is well-known. It shows that Lipschitz and convex functions of independent and bounded random variables concentrate well.
\begin{theorem}[\cite{Wainwright2019}] \label{thm:bounded-func-conc}
Consider $X = (X_1,X_2,\dots,X_n)$ independent random variables with $X_i \in [a,b]$ almost surely. Let $f:\matR^{n} \rightarrow \matR$ be convex and $R$-Lipschitz with respect to $\norm{\cdot}_2$ norm. Then $\forall t \ge 0$
\[\prob \left(|f(X)-\expec f(X)|\ge t\right) \le 2\exp \left(\frac{-t^2}{2R^2(b-a)^2}\right).\] 
\end{theorem}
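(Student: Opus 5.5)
We prove both tails with the entropy method (Herbst's argument), since it yields exactly the constant $2R^2(b-a)^2$ in the exponent. First we reduce to smooth $f$: convolve $f$ with a narrow Gaussian mollifier. This preserves convexity and the Lipschitz constant $R$, and changes $f$ uniformly by an arbitrarily small amount. So we may assume $f$ is differentiable with $\norm{\nabla f}_2\le R$. Write $Z=f(X)$. For $\lambda\in\matR$ set $\phi(\lambda)=\log\expec e^{\lambda(Z-\expec Z)}$ and $\mathrm{Ent}(W)=\expec[W\log W]-\expec W\log\expec W$.

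\textbf{Step 1 (tensorization).} By subadditivity of entropy for independent coordinates,
\[
\mathrm{Ent}(e^{\lambda Z})\le \sum_{i=1}^n \expec\big[\mathrm{Ent}_i(e^{\lambda Z})\big],
\]
where $\mathrm{Ent}_i$ is the entropy in $X_i$ alone, with the other coordinates frozen. Each term is a one-dimensional problem for the convex function $g(x_i)=f(X_1,\dots,x_i,\dots,X_n)$ on $[a,b]$.

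\textbf{Step 2 (one-dimensional bound, both signs of $\lambda$).} Let $Y$ be an independent copy of $X_i$. The aim is
\begin{align}\label{eq:1d-target}
\mathrm{Ent}(e^{\lambda g(X_i)})\le \frac{\lambda^2(b-a)^2}{2}\,\expec\big[g'(X_i)^2e^{\lambda g(X_i)}\big].
\end{align}
The starting point is the symmetrized bound
\[
\mathrm{Ent}(e^{\lambda g(X_i)})\le \tfrac12\,\expec\big[\lambda^2(g(X_i)-g(Y))^2\,e^{\lambda\max(g(X_i),g(Y))}\,\indic\set{\cdot}\big],
\]
obtained from $\mathrm{Ent}(W)\le \expec[W\log W-W\log W'-W+W']$ and the convexity of $x\mapsto e^{x}$. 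For $\lambda>0$ the dominant weight sits at the point $x$ with the larger value of $g$. Convexity then gives $0\le g(x)-g(y)\le g'(x)(x-y)\le |g'(x)|(b-a)$, which yields \eqref{eq:1d-target}.

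For $\lambda<0$ the weight sits at the point with the \emph{smaller} value of $g$, and a one-sided gradient bound at that point is not directly available. This is the step I expect to be the main obstacle. I would first try ordering the pair by position rather than by value, using that a convex $g$ on an interval is monotone on either side of its minimizer. On the monotone piece, $|g(x)-g(y)|\le |g'(x^\dagger)||x-y|$, where $x^\dagger$ is the endpoint farther from the minimizer. I would then recover the weight by bounding $e^{\lambda g}$ at the smaller value against the larger one.

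If this swap costs constants, the fallback is the route of \cite[Ch.~3]{Wainwright2019}, which handles the lower tail for convex Lipschitz $f$ with the same constant. I would reproduce that argument rather than lose a factor.

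\textbf{Step 3 (Herbst).} Summing \eqref{eq:1d-target} over $i$ and using $\sum_i(\partial_i f)^2\le R^2$ gives
\[
\mathrm{Ent}(e^{\lambda Z})\le \tfrac{\lambda^2R^2(b-a)^2}{2}\,\expec e^{\lambda Z}.
\]
Equivalently, $\frac{d}{d\lambda}\big(\phi(\lambda)/\lambda\big)\le R^2(b-a)^2/2$. Integrating from $0$, where $\phi(\lambda)/\lambda\to0$, gives
\[
\phi(\lambda)\le \lambda^2R^2(b-a)^2/2
\]
for all $\lambda$ of the treated sign. Chernoff's bound optimized at $|\lambda|=t/(R^2(b-a)^2)$ gives each tail $\le\exp(-t^2/(2R^2(b-a)^2))$. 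A union bound over the two tails gives the factor $2$. Finally, remove the smoothing by letting the mollification width tend to $0$.
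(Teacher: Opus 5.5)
Steps 1--3 are the standard entropy proof of the \emph{upper} tail, but the lower tail is a genuine gap. Your one-dimensional target in Step 2, $\mathrm{Ent}(e^{\lambda g(X_i)})\le\frac{\lambda^2(b-a)^2}{2}\expec[g'(X_i)^2e^{\lambda g(X_i)}]$, is false for $\lambda<0$, so no reordering of the pair can establish it. Take $[a,b]=[0,1]$, $g(x)=(1-x)^2$ and $X$ uniform on $\{0,1\}$. The right-hand side equals $\lambda^2e^{\lambda}\to0$ as $\lambda\to-\infty$, while $\mathrm{Ent}(e^{\lambda g(X)})\to\frac12\log 2$.

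The obstruction is structural. For $\lambda<0$ the exponential weight sits at the low point, where the slope of a convex function can vanish. Convexity, however, controls an increment $g(x)-g(y)\ge0$ only through the slope at the high point $x$. Your swap therefore pays a factor $e^{|\lambda|(g(x)-g(y))}$, up to $e^{|\lambda|R(b-a)}$. That factor is not a constant, and it ruins the Herbst integration. Replacing $|\partial_i f(X)|$ by its supremum over the $i$th coordinate does not help either: $\sum_i\sup_{x_i}|\partial_i f|^2$ can be of order $nR^2$ rather than $R^2$ (e.g.\ $f(x)=\max_i x_i$).

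The paper gives no proof of its own; it imports the statement from \cite[Ch.~3]{Wainwright2019}. There the two-sided bound with this constant (up to an affine rescaling of the coordinates) is not obtained by Herbst. It comes from an asymmetric transportation-cost inequality of Marton--Samson type: for product $P$ and any $Q\ll P$ there is a coupling of $X\sim P$ and $Y\sim Q$ with $\expec\sum_j\prob(X_j\neq Y_j\mid X)^2\le 2D(Q\|P)$, plus the mirror version conditioning on $Y$.

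For the lower tail one applies convexity at the untilted point, $f(X)-f(Y)\le(b-a)\sum_j|\partial_j f(X)|\,\indic\{X_j\neq Y_j\}$. Cauchy--Schwarz then gives $\expec_P f-\expec_Q f\le R(b-a)\sqrt{2D(Q\|P)}$. The Donsker--Varadhan formula turns this into $\log\expec e^{-\lambda(f-\expec f)}\le\lambda^2R^2(b-a)^2/2$ for $\lambda\ge0$. Evaluating the gradient at the $P$-distributed point is exactly what the entropy method cannot do. So your fallback names the right source, but it is a different method, not a patch of Step 2.

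Separately, even for $\lambda>0$ your symmetrized bound does not deliver the factor $\frac12$. The variational formula with an independent copy yields $\lambda^2\expec[(g(X)-g(Y))_+^2e^{\lambda g(X)}]$, which ends at $4R^2(b-a)^2$ rather than $2R^2(b-a)^2$. To reach $2R^2(b-a)^2$, use the deterministic reference $u=e^{\lambda\min g}$, i.e.\ $\mathrm{Ent}(e^{\lambda g(X)})\le\expec[e^{\lambda g(X)}\psi(-\lambda(g(X)-\min g))]$ with $\psi(x)=e^x-x-1\le x^2/2$ for $x\le0$. Then finish with $g(X)-\min g\le|g'(X)|(b-a)$.
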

The following result taken from \cite[Theorem 1]{abbasi11} bounds the $\ell_2$ norm of self-normalized vector-valued martingales.
\begin{theorem}[{\cite[Theorem 1]{abbasi11}}]\label{thm:self-nrml-mart}
Let $(\calF_t)_{t=1}^\infty$ be a filtration. Let $(\eta_t)_{t=1}^\infty$ be a real valued stochastic process such that $\eta_t$ is $\calF_t$ measurable, and $\eta_{t+1}$ is $R$-sub-Gaussian conditioned on $\calF_{t}$. Let $(X_t)_{t=1}^\infty$ be a stochastic process where $X_t \in \matR^d$ is $\calF_t$ measurable. Let $V \in \matR^{n \times n}$ be a p.d matrix and for $t \ge 1$, denote
\begin{align*}
\bar{V}_t = V+\sum_{s=1}^t X_s X_s^\top \ \text{ and } \
S_t = \sum_{s=1}^t \eta_{s+1} X_s.
\end{align*}
Then for any $\delta>0$, it holds with probability $\ge 1-\delta$ that for all $t \ge 1$, 
\begin{align*}
\norm{S_t}^2_{\bar{V}_t^{-1}} \le 2R^2\log \left(\frac{\det(\bar{V}_t)^{1/2}\det(V)^{-1/2}}{\delta}\right)
\end{align*}
where $\norm{S_t}_{\bar{V}_t^{-1}} = \sqrt{S_t^\top (\bar{V}_t^{-1}) S_t}$. 
\end{theorem}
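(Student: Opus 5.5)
The plan is the method of mixtures (pseudo-maximization) applied to an exponential supermartingale, followed by a stopping-time argument to make the bound uniform in $t$. Throughout I take $V\in\matR^{d\times d}$, matching the dimension of $X_t$, and write $\bar V_t-V=\sum_{s=1}^t X_sX_s^\top$.

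\textbf{Step 1: an exponential supermartingale for a fixed direction.} Fix $x\in\matR^d$ and set
\begin{align*}
D_s^x=\exp\Big(\tfrac{1}{R}\eta_{s+1}\langle x,X_s\rangle-\tfrac12\langle x,X_s\rangle^2\Big),
\qquad M_t^x=\prod_{s=1}^t D_s^x .
\end{align*}
Note that $M_t^x=\exp\big(\tfrac1R\langle x,S_t\rangle-\tfrac12\|x\|^2_{\bar V_t-V}\big)$.
\begin{itemize}
\item Since $X_s$ is $\calF_s$-measurable and $\eta_{s+1}$ is $R$-sub-Gaussian given $\calF_s$, we get $\expec[D_s^x\mid\calF_s]\le 1$.
\item $M_t^x$ is $\calF_{t+1}$-measurable, so $(M_t^x)$ is a nonnegative supermartingale with respect to $(\calF_{t+1})_t$.
\item Hence $\expec[M_t^x]\le 1$ for every $t$, and also $\expec[M_\tau^x]\le1$ for any almost surely finite stopping time $\tau$. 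For the stopped version this follows from $M_{\min(\tau,t)}^x$ being a supermartingale, together with Fatou's lemma as $t\to\infty$, using $M_\infty^x=\lim_t M_t^x$, which exists by supermartingale convergence.
\end{itemize}

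\textbf{Step 2: mixing over directions.} Integrate $x$ against the Gaussian density $h$ of $\nrml(0,V^{-1})$ and define $M_t=\int M_t^x h(x)\,dx$. By Tonelli, $\expec[M_\tau]\le 1$ still holds. Completing the square in the exponent (the $-\tfrac12 x^\top V x$ from $h$ combines with $-\tfrac12\|x\|^2_{\bar V_t-V}$ to give $-\tfrac12\|x\|^2_{\bar V_t}$) gives the closed form
\begin{align*}
M_t=\Big(\frac{\det V}{\det \bar V_t}\Big)^{1/2}\exp\Big(\frac{1}{2R^2}\norm{S_t}^2_{\bar V_t^{-1}}\Big).
\end{align*}
This Gaussian integral is the one genuinely computational step, and it is routine.

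\textbf{Step 3: uniformity in time via a stopping time.} Let $B_t$ be the bad event that the claimed inequality fails at time $t$. In terms of $M_t$ this reads $B_t=\{M_t>1/\delta\}$. Define $\tau=\min\{t\ge1: B_t \text{ occurs}\}$, which is a stopping time for $(\calF_{t+1})$. Then
\begin{align*}
\prob\Big(\bigcup_t B_t\Big)=\prob(\tau<\infty,\ M_\tau>1/\delta)\le \prob(M_\tau>1/\delta)\le \delta\,\expec[M_\tau]\le\delta ,
\end{align*}
by Markov's inequality, where on $\{\tau=\infty\}$ the quantity $M_\tau$ is interpreted as $M_\infty$.

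The main obstacle is justifying $\expec[M_\tau]\le1$ when $\tau$ may be infinite. I would handle it via the almost sure existence of $M_\infty$ from supermartingale convergence, applied to each $M^x$ and then to the mixture, combined with Fatou's lemma on $M_{\min(\tau,t)}$. The filtration shift, where $S_t$ is measurable with respect to $\calF_{t+1}$, also needs care, but it is consistent with the stated measurability assumptions.
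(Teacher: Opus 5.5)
Your proposal is correct. It is essentially the proof of the cited source: the paper gives no proof of its own and imports the result from \cite{abbasi11}, whose argument is the same three steps (the directional exponential supermartingale, the Gaussian mixture over $\nrml(0,V^{-1})$, and the stopping-time construction), up to the index shift and the fact that $V$ should be $d\times d$, as you note. As a minor simplification, you can avoid $M_\infty$ altogether by using $\prob(\tau\le t)\le\delta\,\expec[M_{\min(\tau,t)}]\le\delta$ for each finite $t$ (optional stopping for bounded times plus Tonelli) and letting $t\to\infty$.
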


We will also use the following tail bound from \cite{Laurent2000Adaptive}.
\begin{lemma}[Lemma 1.1 of \cite{Laurent2000Adaptive}]\label{lem:chi-sq-dev}
Let $X_1,X_2,\dots,X_d \stackrel{\mathclap{\normalfont\mbox{i.i.d.}}}{\sim} \nrml\left(0,1\right)$ and let $\mu = \left(\mu_1,\mu_2,\dots,\mu_d\right)$ be a non-negative vector. Define,
\begin{align*}
\xi = \sum_{i=1}^d \mu_i (X_i^2-1).
\end{align*}
Then it holds $\forall v \ge 0$,
\begin{align*}
\Pr \left(\xi \ge 2\norm{\mu}_2\sqrt{v}+2\norm{\mu}_\infty v\right) \le e^{-v}.
\end{align*}
\end{lemma}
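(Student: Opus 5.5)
The statement immediately above is Lemma~1.1 of \cite{Laurent2000Adaptive}, a Chernoff-type tail bound for weighted centered $\chi^2$ sums. I would prove it by the Cramér--Chernoff method: bound the log-moment generating function of $\xi$, then optimize the exponent.

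\emph{Step 1 (MGF of one term).} For $X\sim\nrml(0,1)$ and $0\le \theta<1/2$ we have $\expec e^{\theta(X^2-1)} = e^{-\theta}(1-2\theta)^{-1/2}$. Hence
\[
\log \expec e^{\theta(X^2-1)} = -\theta-\tfrac12\log(1-2\theta).
\]
Expanding $-\tfrac12\log(1-u)=\sum_{k\ge1}u^k/(2k)$ with $u=2\theta$, the linear term cancels. The remainder satisfies
\[
\sum_{k\ge 2}\frac{(2\theta)^k}{2k}\le \frac{(2\theta)^2}{4}\sum_{j\ge0}(2\theta)^j=\frac{\theta^2}{1-2\theta}.
\]

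\emph{Step 2 (sum over coordinates).} By independence, for $0\le \theta<1/(2\norm{\mu}_\infty)$,
\[
\log\expec e^{\theta\xi}=\sum_i \log\expec e^{\theta\mu_i(X_i^2-1)}\le \sum_i\frac{\mu_i^2\theta^2}{1-2\mu_i\theta}\le \frac{\norm{\mu}_2^2\theta^2}{1-2\norm{\mu}_\infty\theta}.
\]
This uses $\mu_i\ge0$, which keeps each $\theta\mu_i$ in the admissible range. So $\xi$ is sub-gamma with variance factor $v_0=2\norm{\mu}_2^2$ and scale $c=2\norm{\mu}_\infty$, in the form $\psi(\theta)\le \frac{v_0\theta^2}{2(1-c\theta)}$.

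\emph{Step 3 (Chernoff and inversion).} Markov's inequality gives
\[
\prob(\xi\ge x)\le \exp\Big(-\sup_{0\le\theta<1/c}\big(\theta x-\psi(\theta)\big)\Big).
\]
The standard sub-gamma computation evaluates this supremum as $\frac{v_0}{c^2}h\!\left(\frac{cx}{v_0}\right)$ with $h(u)=1+u-\sqrt{1+2u}$. Its inverse yields the clean form: for $x=\sqrt{2v_0 v}+cv$ the exponent is at least $v$. One way to check this is to pick the explicit choice
\[
\theta=\frac{1}{c}\Big(1-\frac{1}{\sqrt{1+2cv/ ( \sqrt{2v_0v})\cdots}}\Big),
\]
or, more simply, to verify directly that $\theta x-\psi(\theta)\ge v$ at $\theta=\frac{\sqrt{2v/v_0}}{1+c\sqrt{2v/v_0}}$. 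Substituting $v_0=2\norm{\mu}_2^2$ and $c=2\norm{\mu}_\infty$ gives $x=2\norm{\mu}_2\sqrt v+2\norm{\mu}_\infty v$, which is exactly the claim.

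\emph{Main obstacle.} The only delicate part is Step 3: the algebraic inversion of the Legendre transform. This means confirming that the chosen $\theta$ lies in $[0,1/c)$ and that the exponent simplifies to at least $v$. With $a=\sqrt{2v/v_0}$ and $\theta=a/(1+ca)$, we get $1-c\theta=1/(1+ca)$, so $\psi(\theta)\le \frac{v_0a^2}{2(1+ca)}=\frac{v}{1+ca}$. Also $\theta x=\frac{a(\sqrt{2v_0v}+cv)}{1+ca}=\frac{2v+cav}{1+ca}$. The difference is therefore $\frac{v+cav}{1+ca}=v$, so this step closes cleanly.
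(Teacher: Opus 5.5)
Your proof is correct, and it is essentially the original Laurent--Massart proof. The paper gives no argument of its own for this lemma; it imports it as Lemma~1.1 of \cite{Laurent2000Adaptive}. That source follows the same route as you:
\begin{itemize}
\item the one-coordinate bound $-\theta-\tfrac12\log(1-2\theta)\le \theta^2/(1-2\theta)$ for $0\le\theta<1/2$;
\item aggregation to $\|\mu\|_2^2\theta^2/(1-2\|\mu\|_\infty\theta)$, using $\mu_i\ge 0$;
\item a Chernoff bound to finish.
\end{itemize}
The only difference is in the last step. They invoke a general sub-gamma inversion lemma (Birg\'e--Massart), whereas you verify the inversion directly with $a=\sqrt{2v/v_0}$ and $\theta=a/(1+ca)$. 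Your computation is right: $1-c\theta=1/(1+ca)$, $\psi(\theta)\le v/(1+ca)$ and $\theta x=(2v+cav)/(1+ca)$, so the exponent equals exactly $v$.

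Two small clean-ups:
\begin{itemize}
\item Delete the unfinished first formula for $\theta$ in Step~3. It is garbled and not needed.
\item Your choice of $a$ requires $v_0=2\|\mu\|_2^2>0$, i.e.\ $\mu\neq 0$. This restriction is genuinely necessary, not an artefact of your proof. For $\mu=0$ one has $\xi\equiv 0$ and $\Pr(\xi\ge 0)=1>e^{-v}$ for $v>0$, so the statement as written tacitly excludes that case. This is harmless for the paper, which applies the lemma with $\mu=\mathbf{1}$.
\end{itemize}
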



\section{Proof of Theorem~\ref{thm:main-thm}}\label{app-sec:analysis}
We start by noting that \eqref{eq:l12-ls-pen} can be written as
\begin{align*}
\left(\widehat{A},\widehat{U}\right) \in \argmin{\substack{A \in \calA \\ U \in \matR^{n \times (T-1)}}} \frac{1}{2} \norm{(\overline{Y}-AY-U)D}_F^2+\lambda \norm{UD}_{2,1}.
\end{align*}
Indeed, for any $t$ such that $D_t=0$ and $\widehat{u}_t \neq 0$, setting $\widehat{u}_t=0$ would reduce the objective value. Therefore, 
\begin{align*}
\set{t \in [T-1]:\widehat{u}_t \neq 0} \subseteq \set{t \in [T-1]: D_t = 1}.
\end{align*}
Denoting $\overline{E} = \matrixenc{\eta_2 & \eta_3 & \dots & \eta_{T}}$ and $E = \matrixenc{\eta_1 & \eta_2 & \dots & \eta_{T-1}}$, it is useful to note that
\begin{align*}
\overline{Y} = A^* Y + \overline{E} + \UStar.
\end{align*}
Moreover, it will be convenient to introduce the notation
\begin{align} \label{eq:proof_useful_notation}
\Delta_A = \widehat{A}-A^*, \ \ \Delta_U = \widehat{U}-U^*, \ \ \Delta = \matrixenc{\Delta_A & \Delta_U}, \ \text{ and }Y' = \matrixenc{Y \\ I}.
\end{align}
For any matrix $B \in \matR^{n \times m}$ and $\calI \subseteq [m]$, we will denote $(B)_\calI \in \matR^{n \times m}$ to be the column-wise restriction of $B$ on to $\calI$. The sets
\begin{align} \label{eq:set_Stil_def}
    \tilcalS := \set{t \in [T-1]: \ t \in \calS \ \text{ or } \ t+1 \in \calS^c}, \ \ 
    \tilcalS^c := \set{t \in [T-1]: \ t \in \calS^c \ \text{ and } \ t+1 \in \calS^c}
\end{align}
will play a crucial role in the analysis for decomposition of the error terms. Note that
$$\abs{\tilcalS} \leq 2\abs{\calS} = 2s \ \text{ and } \ \abs{\tilcalS^c} = T-1-\abs{\tilcalS} \geq T - 1 -2s.$$
Moreover $\tilcalS^c \subseteq S^c$, i.e., $\tilcalS^c$ is contained in the set of inliers, hence $\tilde{x}_t = x_t$ for each $t \in \tilcalS^c$. 

The starting point of the analysis is  Lemma~\ref{lem:main-inequ} below, the proof of which follows from a feasibility-optimality argument and is detailed in Appendix~\ref{app-sec:main-inequ}.
\begin{lemma} \label{lem:main-inequ}
If $\lambda \ge 2  \max_{t \in \set{2,\dots,T}} \norm{\eta_t}_2$, then
\begin{align}\label{eq:main-inequ}
\frac{1}{2} \norm{\Delta Y' D}_F^2 &\le \dotprod{\Delta_A}{\overline{E}DY^\top}+\frac{\lambda}{2} \Bigg(3\norm{(\Delta_UD)_{\tilcalS}}_{2,1}-\norm{(\Delta_UD)_{\tilcalSC}}_{2,1}\Bigg).
\end{align}
\end{lemma}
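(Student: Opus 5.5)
The plan is a standard basic-inequality (feasibility versus optimality) argument applied to the reformulated objective
\[
f(A,U) = \tfrac12\norm{(\overline{Y}-AY-U)D}_F^2+\lambda\norm{UD}_{2,1},
\]
which, as noted at the start of this appendix, has the same minimizers as \eqref{eq:l12-ls-pen}. Since $A^*\in\calA$ by the choice \eqref{eq:Astar_l1ball}, the pair $(A^*,U^*)$ is feasible. Optimality of $(\widehat{A},\widehat{U})$ therefore gives $f(\widehat{A},\widehat{U})\le f(A^*,U^*)$.

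\textbf{Step 1 (residual identity).} Using $\overline{Y}=A^*Y+\overline{E}+U^*$, we get
\[
\overline{Y}-\widehat{A}Y-\widehat{U} = \overline{E}-\Delta_AY-\Delta_U = \overline{E}-\Delta Y',
\qquad
\overline{Y}-A^*Y-U^* = \overline{E}.
\]
Substituting into the optimality inequality and expanding the square yields
\[
\tfrac12\norm{\Delta Y'D}_F^2 \le \dotprod{\overline{E}D}{\Delta Y'D} + \lambda\left(\norm{U^*D}_{2,1}-\norm{\widehat{U}D}_{2,1}\right).
\]

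\textbf{Step 2 (splitting the cross term).} Because $D$ is diagonal with $0/1$ entries, $D^2=D$ and $D^\top = D$. Hence
\[
\dotprod{\overline{E}D}{\Delta_AYD}=\Tr\left(D\overline{E}^\top\Delta_AYD\right)=\dotprod{\Delta_A}{\overline{E}DY^\top}.
\]
This is exactly the first term on the right of \eqref{eq:main-inequ}. For the remaining part I apply Cauchy--Schwarz column by column:
\[
\dotprod{\overline{E}D}{\Delta_UD}\le\sum_t D_t\norm{\eta_{t+1}}_2\norm{(\Delta_U)_{:,t}}_2\le \max_{t\ge 2}\norm{\eta_t}_2\,\norm{\Delta_UD}_{2,1}\le \tfrac{\lambda}{2}\norm{\Delta_UD}_{2,1}.
\]
The last step uses the hypothesis on $\lambda$. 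Finally I split $\norm{\Delta_UD}_{2,1}=\norm{(\Delta_UD)_{\tilcalS}}_{2,1}+\norm{(\Delta_UD)_{\tilcalSC}}_{2,1}$.

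\textbf{Step 3 (penalty difference).} The key structural fact is that $u^*_t=\theta^*_{t+1}-A^*\theta^*_t=0$ whenever both $t$ and $t+1$ lie in $\calS^c$, that is, for $t\in\tilcalSC$. So $U^*D$ is supported on $\tilcalS$, and
\[
\norm{U^*D}_{2,1}-\norm{\widehat{U}D}_{2,1}
=\norm{(U^*D)_{\tilcalS}}_{2,1}-\norm{(\widehat{U}D)_{\tilcalS}}_{2,1}-\norm{(\widehat{U}D)_{\tilcalSC}}_{2,1}.
\]
By the reverse triangle inequality, and since $(\widehat{U}D)_{\tilcalSC}=(\Delta_UD)_{\tilcalSC}$, this is at most $\norm{(\Delta_UD)_{\tilcalS}}_{2,1}-\norm{(\Delta_UD)_{\tilcalSC}}_{2,1}$.

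Adding the contributions from Steps 2 and 3 gives
\[
\tfrac{\lambda}{2}(a+b)+\lambda(a-b)=\tfrac{\lambda}{2}(3a-b),
\qquad a=\norm{(\Delta_UD)_{\tilcalS}}_{2,1},\quad b=\norm{(\Delta_UD)_{\tilcalSC}}_{2,1},
\]
which is \eqref{eq:main-inequ}.

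There is no real obstacle here. The only points needing care are the support of $U^*$, which requires reading $\tilcalS$ as the set of $t$ with $t\in\calS$ or $t+1\in\calS$ (the complement of $\tilcalSC$, as its stated cardinality bound suggests), and the use of $D^2=D$ to move $D$ between factors.
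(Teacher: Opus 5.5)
Your proposal is correct and follows the paper's proof essentially step for step. The shared steps are the basic inequality for the reformulated objective with $(A^*,U^*)$ feasible, expansion of the square, Cauchy--Schwarz on $\dotprod{\Delta_U}{\overline{E}D}$ using the hypothesis on $\lambda$, and the split of the penalty difference over $\tilcalS$ and $\tilcalSC$ with the reverse triangle inequality. You also make explicit two points the paper leaves implicit: that $U^*D$ vanishes on $\tilcalSC$, which needs $\tilcalS$ read as $t\in\calS$ or $t+1\in\calS$ (you are right that the paper's definition contains a typo), and that $D^2=D$ is what lets $D$ move across the inner product.
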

In the rest of this section, we will derive upper and lower bounds for the RHS and LHS of \eqref{eq:main-inequ} respectively.
\paragraph{Establishing an upper bound on right-hand side of \eqref{eq:main-inequ}.} In order to upper bound the RHS of \eqref{eq:main-inequ}, we need to control $\dotprod{\Delta_A}{\overline{E}DY^\top}$ and $\lambda$. We start by showing that $\max_{t \in \set{2,\dots,T}} \norm{\eta_t}_2^2$ is bounded with high probability which we will be used to set $\lambda$. 
For any $\delta \in (0,1)$, consider the event
\[\eventp{2} = \set{\max_{t \in \set{2,\dots,T}}\norm{\eta_t}_2 \le \sqrt{n}+2^{1/2}n^{1/4}\left(\log \left(\frac{T}{\delta}\right) \right)^{1/4}+2^{1/2}\left(\log \left(\frac{T}{\delta}\right) \right)^{1/2}}.
\]
\begin{lemma} \label{lem:bound-lambda}
For any  $\delta \in (0,1)$, we have $$\prob(\calE_2) \geq 1- \delta.$$  
\end{lemma}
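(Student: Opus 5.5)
The plan is to prove a tail bound for a single $\norm{\eta_t}_2$ and then take a union bound over the $T-1$ indices $t \in \set{2,\dots,T}$. Fix $t$. Since $\eta_t \sim \nrml(0,I_n)$, we can write $\norm{\eta_t}_2^2 - n = \sum_{i=1}^n (X_i^2 - 1)$ with $X_i$ i.i.d.\ standard normal. We apply Lemma~\ref{lem:chi-sq-dev} with $\mu = \vecOne_n$, so that $\norm{\mu}_2 = \sqrt{n}$ and $\norm{\mu}_\infty = 1$. This gives, for every $v \ge 0$,
\begin{align*}
\prob\left(\norm{\eta_t}_2^2 \ge n + 2\sqrt{nv} + 2v\right) \le e^{-v}.
\end{align*}

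Next we choose $v = \log(T/\delta)$. A union bound over the $T-1 \le T$ indices then shows that, with probability at least $1-\delta$,
\begin{align*}
\max_{t \in \set{2,\dots,T}} \norm{\eta_t}_2^2 \le n + 2\sqrt{n\log(T/\delta)} + 2\log(T/\delta).
\end{align*}

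The remaining step is to convert this into the bound on $\norm{\eta_t}_2$ stated in $\eventp{2}$. We use subadditivity of the square root, $\sqrt{a+b+c} \le \sqrt{a}+\sqrt{b}+\sqrt{c}$ for $a,b,c \ge 0$. Applied to the three terms above, it gives $\sqrt{n} + 2^{1/2} n^{1/4}\log^{1/4}(T/\delta) + 2^{1/2}\log^{1/2}(T/\delta)$. This is exactly the threshold in $\eventp{2}$, so $\prob(\eventp{2}) \ge 1-\delta$.

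None of these steps is a real obstacle. The only points to check are that the constants come out to match the $2^{1/2}$ factors in the definition of $\eventp{2}$, and that the union bound is taken over at most $T$ events so that $v = \log(T/\delta)$ suffices.
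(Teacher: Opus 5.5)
Your proposal is correct and matches the paper's proof step for step. The paper likewise applies Lemma~\ref{lem:chi-sq-dev} with $\mu = \vecOne_n$, sets $v = \log(T/\delta)$, takes a union bound over $t \in \set{2,\dots,T}$, and then takes square roots term by term. It writes the middle term as $(4n\log(T/\delta))^{1/4}$, which equals your $2^{1/2}n^{1/4}\log^{1/4}(T/\delta)$, so the resulting threshold is exactly the one defining $\eventp{2}$.
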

The proof of Lemma~\ref{lem:bound-lambda} follows from applying Lemma~\ref{lem:chi-sq-dev} on $\norm{\eta_t}_2$, see Appendix~\ref{app-sec:bound-lambda} for details. Setting $\lambda = 2\overline{\lambda}$ (where $\overline{\lambda} = \sqrt{n}+\left(4n\log \left(\frac{T}{\delta}\right)\right)^{1/4}+\sqrt{2\log \left(\frac{T}{\delta}\right)}$), it follows that on $\eventp{2}$, we have that $\lambda \ge \max_{t \in \set{2,\dots,T}}\norm{\eta_t}_2$ holds.
Given the result on $\lambda$, the next step is to bound $\dotprod{\Delta_A}{\overline{E}DY^\top}$. Note that,
\begin{align*}
\dotprod{\Delta_A}{\overline{E}DY^\top} &= \dotprod{\Delta_A}{\overline{E}D_{\tilcalS} Y_{\tilcalS}^\top}+\dotprod{\Delta_A}{\overline{E}D_{\tilcalSC}Y_{\tilcalSC}^\top}\\
& \le \norm{\Delta_A}_{1,1} \left(\norm{\overline{E}D_{\tilcalS} Y_{\tilcalS}^\top}_{\infty,\infty}+\norm{\overline{E}D_{\tilcalSC}Y_{\tilcalSC}^\top}_{\infty,\infty}\right).
\end{align*}

Now recall the definition of $D_t$. From the definition, we know that for all $t$ such that $D_t=1$, $\norm{\tildex_t}_\infty \le r$ (for some parameter $r$). Also note that since $D_{\tilcalS} Y_{\tilcalS}^\top$ is supported on $\tilcalS$ (which implies $D_{\tilcalS}$ has at most $\absenc{\tilcalS}$ non-zero terms on the diagonal), each column can only have at most $\absenc{\tilcalS}$ non-zero entries. Therefore, each column of $D_{\tilcalS} Y_{\tilcalS}^\top$ is $\absenc{\tilcalS}$-sparse with $\norm{\cdot}_{\infty} \le r$. Denote 
$$\calB\left(r,\absenc{\tilcalS}\right) = \set{b \in \matR^{T-1}:\norm{b}_{\infty} \le r, \norm{b}_0 \le \absenc{\tilcalS}}$$ and consider the event
\[
\eventp{1a}=\set{\norm{\overline{E}D_{\tilcalS} Y_{\tilcalS}^\top}_{\infty,\infty} \le  c\left( w \left(\calB\left(r,\absenc{\tilcalS}\right)\right)+ r \absenc{\tilcalS} \sqrt{\log \left(\frac{T}{\delta}\right)}\right)}.
\]
for a suitably large constant $c > 1$. 
\begin{lemma}\label{lem:inf-norm-cost}
For any $\delta \in (0,1)$, we have 
\[
\prob(\eventp{1a}) \ge 1-\delta.
\]
\end{lemma}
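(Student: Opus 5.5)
The plan is to bound $\norm{\overline{E}D_{\tilcalS} Y_{\tilcalS}^\top}_{\infty,\infty}$ one row of $\overline{E}$ at a time, replacing the adversarially chosen columns by a supremum over the deterministic set $\calB(r,\absenc{\tilcalS})$. That supremum is a convex, Lipschitz function of a standard Gaussian vector, so it concentrates around its mean, and its mean is the Gaussian width.

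\textbf{Step 1: reduction to a supremum.} Fix $i \in [n]$ and $j \in [n]$. The $(i,j)$ entry of $\overline{E}D_{\tilcalS} Y_{\tilcalS}^\top$ is
\[
g^{(i)\top} b^{(j)}, \qquad g^{(i)} := (\eta_{t+1,i})_{t=1}^{T-1}, \qquad b^{(j)} := \big(\indic\set{t \in \tilcalS} D_t \tildex_{t,j}\big)_{t=1}^{T-1}.
\]
Because the process noise is i.i.d., $g^{(i)} \sim \nrml(0,I_{T-1})$. The vector $b^{(j)}$ is supported on $\tilcalS$, so $\norm{b^{(j)}}_0 \le \absenc{\tilcalS}$. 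Whenever $D_t = 1$ we have $\norm{\tildex_t}_\infty \le r$, so $\norm{b^{(j)}}_\infty \le r$.

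Hence $b^{(j)} \in \calB(r,\absenc{\tilcalS})$ deterministically. This holds no matter how the corruptions on $\calS$ depend on the noise, and it is the key point of the argument. Under Assumption~\ref{assump1}, $\tilcalS$ is fixed independently of the data, so $\calB(r,\absenc{\tilcalS})$ is a non-random set. It follows that
\[
\norm{\overline{E}D_{\tilcalS} Y_{\tilcalS}^\top}_{\infty,\infty} \le \max_{i\in[n]} f\big(g^{(i)}\big), \qquad f(g) := \sup_{b \in \calB(r,\absenc{\tilcalS})} \abs{g^\top b}.
\]

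\textbf{Step 2: concentration of $f$.} As a supremum of absolute values of linear functions, $f$ is convex. It is Lipschitz with constant
\[
R = \sup_{b \in \calB} \norm{b}_2 \le r\sqrt{\absenc{\tilcalS}} \le r\absenc{\tilcalS}.
\]
The set $\calB$ is symmetric, so $\expec f(g) = w(\calB(r,\absenc{\tilcalS}))$. Gaussian concentration for Lipschitz functions then gives
\[
f(g^{(i)}) \le w(\calB) + R\sqrt{2\log(2/\delta')} \quad \text{with probability at least } 1-\delta'.
\]
Alternatively, following the paper's sketch, one can invoke Theorem~\ref{thm:bounded-func-conc}. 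To do so, first intersect with the event that all $\abs{\eta_{t,i}} \le \sqrt{2\log(4nT/\delta)}$, which has probability at least $1-\delta/2$. On this event the coordinates are bounded, and one works with the truncated variables, with $b-a$ of order $\sqrt{\log(nT/\delta)}$; this is the route that naturally produces a $\log(T/\delta)$-type factor. The truncation changes the mean by only a negligible amount.

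\textbf{Step 3: union bound and constants.} Take a union bound over the $n$ rows with $\delta' = \delta/n$. This yields
\[
\norm{\overline{E}D_{\tilcalS} Y_{\tilcalS}^\top}_{\infty,\infty} \le w(\calB) + r\absenc{\tilcalS}\sqrt{2\log(2n/\delta)}.
\]
The case $\tilcalS = \emptyset$ is trivial.

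\textbf{Main obstacle.} The remaining bookkeeping is matching the logarithm in $\eventp{1a}$. The union bound over rows produces $\log(n/\delta)$, whereas $\eventp{1a}$ is stated with $\log(T/\delta)$. I would try to absorb the discrepancy in two ways. First, use the slack between $\sqrt{\absenc{\tilcalS}}$ and the $\absenc{\tilcalS}$ appearing in the statement. Second, under the sample-size condition of Theorem~\ref{thm:main-thm}, choose the constant $c$ large enough that $c\,r\absenc{\tilcalS}\sqrt{\log(T/\delta)}$ dominates. If this fails, the fallback is to carry $\log(nT/\delta)$ through; that factor is harmless for the final rate, which already contains $\log(Tn/\delta)$.
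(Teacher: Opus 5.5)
Steps 1--3 are essentially the paper's proof. The rows of $\overline{E}$ are i.i.d.\ $\nrml(0,I_{T-1})$. Every column of $D_{\tilcalS}Y_{\tilcalS}^\top$ lies in the deterministic set $\calB(r,\absenc{\tilcalS})$ whatever the adversary does. Then $f(g)=\sup_{b\in\calB}\abs{g^\top b}$ concentrates around its mean, and a union bound over the $n$ rows finishes.

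Your execution is somewhat cleaner than the paper's in three places:
\begin{itemize}
\item You use Gaussian Lipschitz concentration directly. The paper cites the bounded-variables Theorem~\ref{thm:bounded-func-conc}, which does not literally apply to Gaussian $g$, although the tail it writes down is exactly the Gaussian one.
\item You get $\expec f(g)=w(\calB)$ exactly, by symmetry of $\calB$.
\item Your Lipschitz constant $r\sqrt{\absenc{\tilcalS}}$ improves on the paper's $r\absenc{\tilcalS}$.
\end{itemize}
The truncation detour is therefore unnecessary. It would only multiply the deviation term by an extra $\sqrt{\log(nT/\delta)}$, and it does not trade $\log n$ for $\log T$.

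The problem is your last paragraph. The paper's own proof also ends with $\sqrt{\log(n/\delta)}$. That is also the version used afterwards when bounding $\dotprod{\Delta_A}{\overline{E}DY^\top}$. The $\log(T/\delta)$ in the definition of $\eventp{1a}$ is an inconsistency in the paper, and no choice of constants repairs it.

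Here is a counterexample. Let $\calS=\set{1}$ and $\tildex_1=re_1$. Then:
\begin{itemize}
\item $\tilcalS=\set{1}$.
\item $D_1=\indic\set{\norm{x_2}_\infty\le r}=1$ with high probability by Lemma~\ref{lem:all-unc-select}.
\item $\norm{\overline{E}D_{\tilcalS}Y_{\tilcalS}^\top}_{\infty,\infty}=r\norm{\eta_2}_\infty\approx r\sqrt{2\log n}$.
\item The right-hand side of $\eventp{1a}$ is only $O(c\,r\sqrt{\log(T/\delta)})$, since $w(\calB(r,1))=r\,\expec\max_t\abs{g_t}\lesssim r\sqrt{\log T}$.
\end{itemize}
Taking, e.g., $\delta=1/2$ and $n=T^{C}$ with $C\gg c^2$, the event fails with high probability.

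This rules out both of your absorption strategies. There is no $\sqrt{\absenc{\tilcalS}}$ versus $\absenc{\tilcalS}$ slack when $\absenc{\tilcalS}=1$. And with $k=s=1$ and bounded $J(A^*)$, such $n$ still satisfies the sample-size condition of Theorem~\ref{thm:main-thm} for large $T$.

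So adopt your fallback: state the lemma with $\log(n/\delta)$, or $\log(nT/\delta)$. This is what the paper actually proves and uses downstream.
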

The proof of Lemma~\ref{lem:inf-norm-cost} follows from observing that 
$\norm{\overline{E}D_{\tilcalS} Y_{\tilcalS}^\top}_{\infty,\infty}$ can be bounded by $\sup_{b \in \calB\left(r,\absenc{\tilcalS}\right)}  \absenc{g^\top b}$ for $g {\sim} \nrml \left(0,I_{T-1}\right)$ using the properties of $\overline{E}$ and $D_{\tilcalS} Y_{\tilcalS}^\top$, followed by establishing an upper bound on $\sup_{b \in \calB\left(r,\absenc{\tilcalS}\right)}  \absenc{g^\top b}$ using Theorem~\ref{thm:bounded-func-conc} \cite[from Chapter 3]{Wainwright2019}. See Appendix~\ref{app-sec:inf-norm-cost} for details. 

Denote the event $$\eventp{1b} = \set{D_t=1 \; \forall t \in \tilcalSC}.$$ We can show that for $\delta \in (0,1)$ and $r$ suitably large, $\eventp{1b}$ holds with probability at least $1-\delta$. 
\begin{lemma}\label{lem:all-unc-select}
There exists a constant $c > 1$ such that for any $\delta \in (0,1)$, and $r = cJ(A^*) \sqrt{\log \left(\frac{Tn}{\delta}\right)}$, it holds that
\begin{align*}
\prob\left(\eventp{1b}\right) \ge 1-\delta.
\end{align*}
\end{lemma}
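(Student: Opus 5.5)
The plan is to show that, with probability at least $1-\delta$, every clean state satisfies $\norm{x_t}_\infty \le r$ for all $t \in [T]$. Then fix $t \in \tilcalSC$. By \eqref{eq:set_Stil_def}, both $t$ and $t+1$ lie in $\calS^c$, so $\tildex_t = x_t$ and $\tildex_{t+1} = x_{t+1}$. Hence $D_t = 1$ as soon as $\norm{x_t}_\infty \le r$ and $\norm{x_{t+1}}_\infty \le r$. The adversary's corrupted values never enter this argument: on $\tilcalSC$ we only look at clean states.

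The main step is a Gaussian tail bound for each coordinate of each clean state. Since $x_0 = 0$, unrolling \eqref{eq:dyn_process} gives
\[
x_t = \sum_{i=0}^{t-1} (A^*)^i \eta_{t-i},
\]
so $x_t \sim \nrml(0,\Sigma_t)$ with $\Sigma_t = \sum_{i=0}^{t-1} (A^*)^i ((A^*)^i)^\top$. For each coordinate $j \in [n]$,
\[
(\Sigma_t)_{jj} \le \norm{\Sigma_t}_2 \le \sum_{i=0}^{\infty} \norm{(A^*)^i}_2^2 \le \Big(\sum_{i=0}^\infty \norm{(A^*)^i}_2\Big)^2 = J^2(A^*).
\]
The last inequality is the only place Assumption \ref{assump2} is used, and it needs nothing more than $\sum a_i^2 \le (\sum a_i)^2$ for $a_i \ge 0$. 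Therefore $(x_t)_j$ is a centered Gaussian with standard deviation at most $J(A^*)$, and
\[
\prob\big(\abs{(x_t)_j} > u\big) \le 2\exp\!\big(-u^2/(2J^2(A^*))\big).
\]

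To finish, take a union bound over the $nT$ pairs $(t,j)$. With $u = r = cJ(A^*)\sqrt{\log(Tn/\delta)}$, the failure probability is at most
\[
2nT\exp\!\big(-(c^2/2)\log(Tn/\delta)\big) \le 2nT\,(\delta/(Tn))^{c^2/2} \le \delta,
\]
provided $c \ge 2$; here we use $Tn/\delta \ge 2$, say. On the complement of this failure event, $\max_{t\in[T]} \norm{x_t}_\infty \le r$, and the reduction in the first paragraph then gives $\eventp{1b}$.

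I do not expect any real obstacle. The one point needing care is that the variance bound must not grow with $t$ or $n$; this is exactly what $J(A^*)$ provides. No martingale or chaos tools are needed.
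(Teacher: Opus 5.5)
Your proposal is correct and follows essentially the same route as the paper: each coordinate of each clean $x_t$ is a centered Gaussian with variance at most $J^2(A^*)$, a Gaussian (equivalently sub-exponential for the square) tail bound is applied, and a union bound is taken over the $nT$ pairs $(t,j)$. The only difference is that you spell out why $\max_{t\in[T]}\norm{x_t}_\infty \le r$ gives $\eventp{1b}$ (for $t \in \tilcalSC$, both $t$ and $t+1$ lie in $\calS^c$), a step the paper leaves as ``clearly implies''.
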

The proof of Lemma~\ref{lem:all-unc-select} follows from using the structure inherent in the uncontaminated data and using standard concentration bounds. The details are available in Appendix~\ref{app-sec:all-unc-select}.

Next we establish a high-probability upper bound on $\norm{\overline{E} D_{\tilcalSC} Y_{\tilcalSC}^\top}_{\infty,\infty}$. To this end, consider the event (for a suitably large constant $c_1 > 1$)
$$\eventp{1c} = \set{\norm{\overline{E} D_{\tilcalSC} Y_{\tilcalSC}^\top}_{\infty,\infty}\le c_1 r \sqrt{T \log \left(\frac{n}{\delta}\right)}}.$$
\begin{lemma}\label{lem:inf-norm-comple}
For any $\delta \in (0,1)$, it holds that 
\[
\prob(\eventp{1c}) \ge 1-\delta.
\]
\end{lemma}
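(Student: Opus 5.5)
The plan is to bound each entry of $\overline{E} D_{\tilcalSC} Y_{\tilcalSC}^\top$ as a scalar martingale sum, using the self-normalized bound of Theorem~\ref{thm:self-nrml-mart}, and then take a union bound over the $n^2$ entries.

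\textbf{Entry representation.} For $(i,j)\in[n]\times[n]$, the $(i,j)$ entry equals
\[
S^{(i,j)} = \sum_{t=1}^{T-1} \eta_{t+1,i}\, D_t\, \indic\set{t\in\tilcalSC}\, \tildex_{t,j}.
\]
Define the filtration $\calF_t = \sigma(\eta_1,\dots,\eta_t)$. By Assumption~\ref{assump1}, $\calS$ (and hence $\tilcalS$) is deterministic with respect to the data. For $t\in\tilcalSC$ we have $\tildex_t=x_t$, which is $\calF_t$-measurable.

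\textbf{Predictability issue.} The weight $D_t$ depends on $\tildex_{t+1}$, so it is not $\calF_t$-measurable. To avoid this, I would replace $D_t\indic\set{t\in\tilcalSC}$ by the predictable weight
\[
\widetilde{D}_t = \indic\set{t\in\tilcalSC}\,\indic\set{\norm{x_t}_\infty\le r}.
\]
On the event $\eventp{1b}$ of Lemma~\ref{lem:all-unc-select}, we have $D_t=1$ for all $t\in\tilcalSC$, and also $\norm{x_t}_\infty\le r$ on that event (this is the concentration established in that lemma). Hence, on $\eventp{1b}$, $S^{(i,j)}$ coincides with
\[
\widetilde S^{(i,j)} = \sum_t \eta_{t+1,i}\, X_t, \qquad X_t := \widetilde{D}_t\, x_{t,j}.
\]
Here $X_t$ is $\calF_t$-measurable and satisfies $|X_t|\le r$ surely. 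Moreover, $\eta_{t+1,i}$ is $1$-sub-Gaussian given $\calF_t$, being independent of $\calF_t$.

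\textbf{Apply Theorem~\ref{thm:self-nrml-mart}.} Take $d=1$ and $V=v>0$. Then $\bar V_{T-1} = v+\sum_t X_t^2 \le v+Tr^2$. With probability at least $1-\delta/n^2$,
\[
(\widetilde S^{(i,j)})^2 \le 2\,(v+Tr^2)\left(\tfrac12\log\!\big(1+Tr^2/v\big)+\log(n^2/\delta)\right).
\]
Choosing $v = Tr^2$ gives
\[
|\widetilde S^{(i,j)}| \le 2r\sqrt{T}\sqrt{\tfrac12\log 2+2\log(n/\delta)} \lesssim r\sqrt{T\log(n/\delta)}.
\]
Since $V$ is deterministic and the bound on $\bar V$ is sure, no further randomness needs control.

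\textbf{Union bound.} Take a union bound over the $n^2$ entries, intersected with $\eventp{1b}$. This yields the event $\eventp{1c}$ with probability at least $1-2\delta$; rescaling $\delta$ absorbs the factor into $c_1$.

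The main obstacle is the predictability step, i.e.\ decoupling $D_t$ from the future noise $\eta_{t+1}$. If $\eventp{1b}$ is stated only via $D_t$ and not $\|x_t\|_\infty$, I would instead directly use the truncated predictable weight above and separately invoke the concentration $\max_t\norm{x_t}_\infty\le r$, i.e.\ Gaussianity of $x_t$ with covariance bounded by $J^2(A^*)$ plus a union bound over $t$ and coordinates. That same step underlies Lemma~\ref{lem:all-unc-select}.
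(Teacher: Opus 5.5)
Your proposal is correct and follows essentially the same route as the paper: an entrywise application of Theorem~\ref{thm:self-nrml-mart} with $v = Tr^2$, the bound $\bar v \le 2Tr^2$ on $\eventp{1b}$, a union bound over the $n^2$ entries, and a rescaling of $\delta$. The only difference is your predictable truncation $\widetilde{D}_t$, which makes $\bar V$ surely bounded. The paper instead simply drops $D$ on $\eventp{1b}$ and controls the random $\bar v$ there; this works equally well, because the remaining weight $\indic\set{t\in\tilcalSC}$ is deterministic (so $\indic\set{t\in\tilcalSC}\, x_t$ is already $\calF_t$-measurable) and the self-normalized bound holds for random $\bar V_t$.
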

The proof of Lemma~\ref{lem:inf-norm-comple} involves bounding the tails of an appropriately defined self-normalized martingale using Theorem~\ref{thm:self-nrml-mart}, and is provided in Appendix~\ref{app-sec:inf-norm-comple}.

Before moving on to the final term $\dotprod{\Delta_A}{\overline{E}DY^\top}$ we first observe the following.
\begin{remark}\label{rem:support-cone}
Denote $\calK = \supp(A^*)$. Since $\norm{A^*}_{1,1} \ge \norm{\widehat{A}}_{1,1}$, we obtain, 
\begin{align*}
\norm{A^*}_{1,1} &\ge \norm{\Delta_A+A^*}_{1,1}\\
&=\norm{(\Delta_A)_{\calK}+(A^*)_{\calK}}_{1,1}+\norm{(\Delta_A)_{\calK^c}}_{1,1}\\
&\ge \norm{(A^*)_{\calK}}_{1,1}+\norm{(\Delta_A)_{\calK^c}}_{1,1}-\norm{(\Delta_A)_{\calK}}_{1,1}
\end{align*}
which implies
\begin{align*}
 \norm{(\Delta_A)_{\calK^c}}_{1,1} & \le \norm{(\Delta_A)_{\calK}}_{1,1}.
\end{align*}
So $\Delta_A \in \calC_{\calK}$ where,
\begin{align*}
\calC_{\calK} \coloneq \set{H \in \matR^{n \times n}: \norm{(H)_{\calK^c}}_{1,1} \le \norm{(H)_{\calK}}_{1,1}}.
\end{align*}
Note that this implies, $\norm{\Delta_A}_{1,1} \le 2\norm{(\Delta_A)_{\calK}}_{1,1} \le 2\sqrt{k} \norm{\Delta_A}_F$ where $k= |\calK|$. We denote $\widetilde{\calC}_{\calK} = \calC_{\calK} \cap \frobsphere$.
\end{remark}
On $\eventp{1a} \cap \eventp{1b}\cap \eventp{1c}$, we have,
\begin{align*}
\dotprod{\Delta_A}{\overline{E}DY^\top} &\le \norm{\Delta_A}_{1,1} \left(\norm{\overline{E}D_{\tilcalS} Y_{\tilcalS}^\top}_{\infty,\infty}+\norm{\overline{E}D_{\tilcalSC} Y_{\tilcalSC}^\top}_{\infty,\infty}\right) \\
&\le c\norm{\Delta_A}_{1,1}\left(w \left(\calB\left(r,\absenc{\tilcalS}\right)\right)+r\left(\absenc{\tilcalS}+\sqrt{T}\right)\sqrt{\log \left(\frac{n}{\delta}\right)}\right)\\
&\le c\sqrt{k}\norm{\Delta_A}_{F}\left(w \left(\calB\left(r,\absenc{\tilcalS}\right)\right)+r\left(\absenc{\tilcalS}+\sqrt{T}\right)\sqrt{\log \left(\frac{n}{\delta}\right)}\right)
\end{align*}
where the last inequality comes from the fact $\norm{\Delta_{A}}_{1,1} \le 2\sqrt{k} \norm{\Delta_A}_F$ (see Remark \ref{rem:support-cone}).

Next we can bound $w \left(\calB\left(r,\absenc{\tilcalS}\right)\right)$. Note that $\calB\left(r,\absenc{\tilcalS}\right) \subseteq \set{v \in \matR^{T-1}: \norm{v}_1 \le r\absenc{\tilcalS}}$. Therefore,
\begin{align*}
w \left(\calB\left(r,\absenc{\tilcalS}\right)\right) &\le w \left(\set{v \in \matR^{T-1}: \norm{v}_1 \le r\absenc{\tilcalS}}\right)\le cr\absenc{\tilcalS} \sqrt{\log T}
\end{align*}
for a constant $c > 1$, where we use the fact that $w \left(\set{v \in \matR^{T-1}: \norm{v}_1 \le 1}\right) \le \sqrt{\log T}$ \cite[refer to Example 7.5.8]{HDPbook}.

Let $\eventp{1} \coloneq \eventp{1a} \cap \eventp{1b} \cap \eventp{1c}$. On $\eventp{1}$, we have,
\begin{align*}
\dotprod{\Delta_A}{\overline{E}DY^\top} &\le c\sqrt{k}\norm{\Delta_A}_{F}\left(r\absenc{\tilcalS} \sqrt{\log T}+r\left(\absenc{\tilcalS}+\sqrt{T}\right)\sqrt{\log \left(\frac{n}{\delta}\right)}\right) \le \overline{V} \norm{\Delta_A}_F,
\end{align*}
where 
\begin{align*}
\overline{V} = cr\sqrt{k}\bigg(\absenc{\tilcalS}\max\set{\sqrt{\log T},\sqrt{\log \left(\frac{n}{\delta}\right)}}+\sqrt{T\log \left(\frac{n}{\delta}\right)}\bigg)
\end{align*}
and $r=cJ(A^*)\sqrt{\log \left(\frac{Tn}{\delta}\right)}$.
To summarize, we have shown that on $\calE_1$, $\dotprod{\Delta_A}{\overline{E}DY^\top} \le \overline{V} \norm{\Delta_A}_F$ holds.

Setting $\lambda = 2\overline{\lambda}$, on $\eventp{1} \cap \eventp{2}$, we have from \eqref{eq:main-inequ} that
\begin{align*}
&\frac{1}{2} \norm{\Delta Y' D}_F^2 \le \overline{V} \norm{\Delta_A}_F + \overline{\lambda} \left(3\norm{(D\Delta_U)_{\tilcalS}}_{2,1}-\norm{(D\Delta_U)_{\tilcalSC}}_{2,1}\right)
\end{align*}
which implies,
\begin{align*}
\norm{(D\Delta_U)_{\tilcalSC}}_{2,1} \le \frac{\overline{V}}{\overline{\lambda}} \norm{\Delta_A}_F + 3\norm{(D\Delta_U)_{\tilcalS}}_{2,1}.
\end{align*}

\paragraph{Establishing a lower bound on the left-hand side of \eqref{eq:main-inequ}.} Now it remains to lower bound $\frac{1}{2} \norm{\Delta Y' D}_F^2$. To this end, note that, 
\begin{align*}
\norm{\Delta Y' D}_F^2 &= \norm{\matrixenc{\Delta_A & \Delta_U}\matrixenc{Y \\ I_{T-1}}D}_F^2\\
&=\norm{\left(\Delta_AY + \Delta_U\right)D}_F^2\\
&=\norm{\Delta_AYD}_F^2+\norm{ \Delta_UD}_F^2+2\dotprod{\Delta_AYD}{\Delta_U D}.
\end{align*}
Therefore, we can lower bound the $\norm{\Delta Y' D}_F^2$ by lower bounding $\norm{\Delta_A YD}_F^2$ and controlling $\dotprod{\Delta_AYD}{\Delta_U D}$. We start off with a lower bound on $\norm{\Delta_A YD}_F^2$. 
Consider the event $$\eventp{3} = \set{\norm{\Delta_A YD}_F^2 \ge \frac{T}{4} \norm{\Delta_A}^2_F}.$$
\begin{lemma}\label{lem:lwr-bnd-deltaA}
There exists a constant $c_1 > 1$ such that for any $\delta \in (0,1)$, if $$T \ge 4s+c_1 J^4(A^*) \log^2 \left(\frac{1}{\delta}\right) k \log \left(\frac{n^2}{k}\right)$$ then $\prob(\calE_3) \ge 1-\delta$.
\end{lemma}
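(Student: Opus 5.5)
We reduce the claim to a uniform restricted lower isometry for the clean data on $\tilcalS^c$, and then prove that isometry with Theorem~\ref{thm:psd-sup}. Work on $\eventp{1b}$, so $D_t = 1$ for all $t \in \tilcalS^c$. Since $\tilde{x}_t = x_t$ on $\tilcalS^c$, we can drop the columns in $\tilcalS$:
\begin{align*}
\norm{\Delta_A Y D}_F^2 \ge \sum_{t \in \tilcalS^c} \norm{\Delta_A x_t}_2^2 .
\end{align*}
By Remark~\ref{rem:support-cone}, $\Delta_A \in \calC_{\calK}$. By homogeneity it therefore suffices to show, with probability at least $1-\delta$,
\begin{align*}
\inf_{H \in \widetilde{\calC}_{\calK}} \sum_{t \in \tilcalS^c} \norm{H x_t}_2^2 \ge \frac{T}{4}.
\end{align*}
We treat $\tilcalS^c$ as a fixed index set. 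This is legitimate by Assumption~\ref{assump1}, since $\calS$ is oblivious to the data.

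\emph{Step 1 (chaos representation).} Stack the noise as $\eta = (\eta_1,\dots,\eta_T) \sim \nrml(0, I_{nT})$. Then $x_t = \sum_{j \le t} (A^*)^{t-j} \eta_j$, so $x = \Gamma \eta$ with $\Gamma$ block lower-triangular Toeplitz. For each $H$ define $M_H := P_{\tilcalS^c}(I_T \otimes H)\Gamma$, where $P_{\tilcalS^c}$ selects the blocks in $\tilcalS^c$. Then $\sum_{t \in \tilcalS^c} \norm{H x_t}_2^2 = \norm{M_H \eta}_2^2$, and
\begin{align*}
\expec \norm{M_H \eta}_2^2 = \sum_{t \in \tilcalS^c} \Tr\left(H \Gamma_t H^\top\right), \qquad \Gamma_t \succeq I_n \ \text{ for } t \ge 1 .
\end{align*}
Hence the expectation is at least $\abs{\tilcalS^c} \norm{H}_F^2 \ge (T - 1 - 2s) \ge T/2$ once $T \ge 4s + 2$.

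\emph{Step 2 (parameters of Theorem~\ref{thm:psd-sup}).} Let $\calM = \set{M_H : H \in \widetilde{\calC}_{\calK}}$. Since $\norm{\Gamma}_2 \le J(A^*)$ (the operator norm of a block Toeplitz operator is bounded by the $\ell_1$ sum of its block norms), we get:
\begin{itemize}
\item $d_2(\calM) \le J(A^*) \sup_{H} \norm{H}_2 \le J(A^*)$;
\item $d_F(\calM) \le \sqrt{T}\, J(A^*)$, because $\norm{M_H}_F^2 = \sum_t \Tr(H \Gamma_t H^\top) \le T J^2(A^*)$;
\item $\gamma_2(\calM, \norm{\cdot}_2) \lesssim J(A^*)\, \gamma_2(\widetilde{\calC}_{\calK}, \norm{\cdot}_F)$, using the Lipschitz property~\ref{itm:tal-prp3} and the estimate $\norm{M_H - M_{H'}}_2 \le J(A^*) \norm{H - H'}_2 \le J(A^*) \norm{H - H'}_F$;
\item $\gamma_2(\widetilde{\calC}_{\calK}, \norm{\cdot}_F) \asymp w(\widetilde{\calC}_{\calK})$ by \eqref{eq:gamma_gauss_width}, and $w(\widetilde{\calC}_{\calK}) \lesssim \sqrt{k \log(n^2/k)}$ by the standard bound for the tangent cone of the $\ell_1$ ball intersected with the sphere.
\end{itemize}
Substituting these into Theorem~\ref{thm:psd-sup}:
\begin{align*}
F &\lesssim J^2(A^*)\left( k \log(n^2/k) + \sqrt{T k \log(n^2/k)} + \sqrt{T}\right),\\
V &\lesssim J^2(A^*)\sqrt{T}, \qquad U \le J^2(A^*).
\end{align*}

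\emph{Step 3 (conclusion and main obstacle).} Take $t \asymp J^2(A^*)\left(\sqrt{T}\log(1/\delta) + \log(1/\delta)\right)$ in Theorem~\ref{thm:psd-sup}, so that the tail probability is at most $\delta$. Then uniformly over $H \in \widetilde{\calC}_{\calK}$,
\begin{align*}
\norm{M_H \eta}_2^2 \ge \frac{T}{2} - c_1 F - t .
\end{align*}
We need $c_1 F + t \le T/4$. The binding term is $J^2(A^*)\sqrt{T k \log(n^2/k)}\,\log(1/\delta) \lesssim T$. This holds when $T \gtrsim J^4(A^*) \log^2(1/\delta)\, k \log(n^2/k)$, and together with $T \ge 4s$ that is exactly the hypothesis of the lemma.

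The main obstacle is Step 2's control of $\gamma_2(\calM)$ and $d_2(\calM)$ uniformly over the cone with only $J(A^*)$ dependence. The delicate point is justifying $\norm{\Gamma}_2 \le J(A^*)$ for the truncated, restricted operator. The remaining bookkeeping, including intersecting with $\eventp{1b}$ and adjusting $\delta$, is routine.
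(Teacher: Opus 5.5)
Your proposal is correct and follows essentially the same route as the paper. Both arguments restrict to the clean columns indexed by $\tilcalS^c$, write $x=\Gamma\eta$, and apply Theorem~\ref{thm:psd-sup} uniformly over $\widetilde{\calC}_{\calK}$ with $d_2\le J(A^*)$, $d_F\le J(A^*)\sqrt{T}$ and $\gamma_2\lesssim J(A^*)\sqrt{k\log(n^2/k)}$. One thing you do better: you explicitly intersect with $\eventp{1b}$ so that $D$ drops out, and you invoke Assumption~\ref{assump1} so that the chaos set is deterministic; the paper leaves both points implicit by keeping the data-dependent $D_{\tilcalSC}$ inside $\calM$.
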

The proof of Lemma~\ref{lem:lwr-bnd-deltaA} is available in Appendix~\ref{app-sec:lwr-bnd-deltaA}. It follows from an application of Theorem~\ref{thm:psd-sup} (Theorem 3.1 of \cite{krahmer14}) on an appropriate set of matrices and using the results on the Gaussian width of $\widetilde{C}_{\calK}$.

Next consider $\dotprod{\Delta_A Y D}{\Delta_U D}$. For $\delta \in (0,1)$ and suitably large constants $c_1, c_2 > 1$, consider the event
\[
\eventp{4} = \set{\absenc{\dotprod{\Delta_A Y D}{\Delta_U D}} \le c_1 \frac{\overline{V}}{\overline{\lambda}}R \norm{\Delta_A}^2_F+c_2 r R \sqrt{\absenc{\tilcalS} k} \norm{\Delta_UD}_F\norm{\Delta_A}_F}
\]
where $R= J(A^*) \sqrt{k \log \left(\frac{n^2}{k}\right)  \log \left(\frac{T}{\delta}\right)}$.
\begin{lemma}\label{lem:bnd-cross-trm}
For any $\delta \in (0,1)$,  we have
 $$\prob\left(\eventp{4}\right) \ge 1-\delta.$$
\end{lemma}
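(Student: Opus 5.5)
We need to bound $|\langle \Delta_A Y D, \Delta_U D\rangle|$. The plan is to split the columns into $\tilcalS$ and $\tilcalSC$ and write
\[
\dotprod{\Delta_A Y D}{\Delta_U D} = \dotprod{\Delta_A (YD)_{\tilcalS}}{(\Delta_U D)_{\tilcalS}} + \dotprod{\Delta_A (YD)_{\tilcalSC}}{(\Delta_U D)_{\tilcalSC}}.
\]
Each piece is then bounded by a column-wise Cauchy--Schwarz (Hölder) inequality. For a column set $\calI$ this gives
\[
\abs{\dotprod{\Delta_A (YD)_{\calI}}{(\Delta_U D)_{\calI}}} \le \max_{t\in\calI} D_t\norm{\Delta_A \tildex_t}_2 \cdot \norm{(\Delta_U D)_{\calI}}_{2,1}.
\]
The key quantity is therefore $\sup_{t} D_t\norm{\Delta_A\tildex_t}_2$, and it is handled differently on the two sets.

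\textbf{On $\tilcalS$.} Here $\tildex_t$ may be adversarial, but $D_t=1$ forces $\norm{\tildex_t}_\infty\le r$. Using Remark~\ref{rem:support-cone}, each row of $\Delta_A$ has $\ell_1$ norm controlled by $\norm{\Delta_A}_{1,1}\le 2\sqrt{k}\norm{\Delta_A}_F$. Hence
\[
\norm{\Delta_A\tildex_t}_2 \le \norm{\Delta_A\tildex_t}_1 \le r\norm{\Delta_A}_{1,1} \le 2r\sqrt{k}\norm{\Delta_A}_F.
\]
A sharper route is to bound $\norm{\Delta_A \tildex_t}_2\le \norm{\Delta_A}_F\norm{\tildex_t}_2$, but that loses $\sqrt n$; the $\ell_1$ route gives the $r\sqrt{k}$ factor. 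Combined with $\norm{(\Delta_UD)_{\tilcalS}}_{2,1}\le\sqrt{\abs{\tilcalS}}\norm{\Delta_UD}_F$, this yields a term $c\, r\sqrt{k\abs{\tilcalS}}\norm{\Delta_A}_F\norm{\Delta_UD}_F$. This already matches the second term of $\eventp{4}$ up to the factor $R$, which we may absorb since $R\ge1$ for the parameter regime.

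\textbf{On $\tilcalSC$.} Here $\tildex_t=x_t$ are clean Gaussian states, and I aim for a uniform bound over the normalized cone:
\[
\sup_{H\in\widetilde{\calC}_{\calK}}\max_{t\le T}\norm{Hx_t}_2 \lesssim R.
\]
Since $x_t=\sum_{i<t}(A^*)^{i}\eta_{t-i}$ is Gaussian with covariance of operator norm at most $J^2(A^*)$, $Hx_t$ is a Gaussian process indexed by $H$. Its sup has expectation bounded by $J(A^*)\, w(\widetilde{\calC}_{\calK})\lesssim J(A^*)\sqrt{k\log(n^2/k)}$ by the standard cone-width estimate. Its fluctuation is at most $J(A^*)\sqrt{\log(T/\delta)}$ by Gaussian concentration (Lipschitz constant $\le J(A^*)$), with a union bound over $t\le T$. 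This gives $R$ up to constants; one could alternatively use Theorem~\ref{thm:psd-sup} with $\calM=\{H\Gamma_t^{1/2}\}$. Then
\[
\abs{\dotprod{\Delta_A (YD)_{\tilcalSC}}{(\Delta_U D)_{\tilcalSC}}} \le cR\norm{\Delta_A}_F\norm{(\Delta_UD)_{\tilcalSC}}_{2,1}.
\]
Next we insert the cone condition derived just before, namely $\norm{(D\Delta_U)_{\tilcalSC}}_{2,1}\le \frac{\overline V}{\overline\lambda}\norm{\Delta_A}_F+3\norm{(D\Delta_U)_{\tilcalS}}_{2,1}$. The first part gives $cR\frac{\overline V}{\overline\lambda}\norm{\Delta_A}_F^2$. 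The second part, again via $\sqrt{\abs{\tilcalS}}\norm{\Delta_UD}_F$, gives $cR\sqrt{\abs{\tilcalS}}\norm{\Delta_A}_F\norm{\Delta_UD}_F$, which is dominated by the stated $rR\sqrt{\abs{\tilcalS}k}$ term.

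The main obstacle is that this cone condition holds only on $\eventp{1}\cap\eventp{2}$, and $\Delta_A$ is data-dependent. I would therefore phrase the lemma's event as the uniform-in-$H$ Gaussian sup bound, which is a genuine probability statement. The displayed inequality then follows deterministically on the intersection of events used in the final argument. The delicate technical step is the chaining bound for $\sup_H\max_t\norm{Hx_t}_2$ with dependent $x_t$. This is handled by writing $x_t=\Gamma_t^{1/2}g$ for the whole stacked noise vector, so that each $x_t$ is a linear map of one standard Gaussian with operator norm at most $J(A^*)$.
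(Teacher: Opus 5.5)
Your proof is correct and has the same skeleton as the paper's proof. It splits the columns over $\tilcalS$ and $\tilcalSC$ with the $\norm{\cdot}_{2,1}$--$\norm{\cdot}_{2,\infty}$ H\"older bound. It uses the deterministic estimate $r\norm{\Delta_A}_{1,1}\le 2r\sqrt{k}\norm{\Delta_A}_F$ on $\tilcalS$, inserts the cone condition, and takes a union bound over $t$ for $\sup_{H\in\widetilde{\calC}_{\calK}}\norm{Hx_t}_2$ on $\tilcalSC$.

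You differ from the paper only in the tool used for that supremum. The paper applies the chaos bound of Theorem~\ref{thm:psd-sup} to $\calM_t=\{H\widetilde{A}(t):H\in\widetilde{\calC}_{\calK}\}$ and obtains the multiplicative $R=J(A^*)\sqrt{k\log(n^2/k)\log(T/\delta)}$. You instead use Gaussian concentration for the $J(A^*)$-Lipschitz map $g\mapsto\sup_H\norm{H\widetilde{A}(t)g}_2$. This is more elementary and gives the sharper additive bound $J(A^*)\big(\sqrt{k\log(n^2/k)}+\sqrt{\log(T/\delta)}\big)\lesssim R$.

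Your expectation step needs one more line than ``standard cone-width estimate''. The quantity $\expec\sup_H\norm{Hx_t}_2$ is not the Gaussian width of $\widetilde{\calC}_{\calK}$, which is defined via an $n\times n$ Gaussian matrix. One way to justify it: on the cone, $\norm{Hx_t}_2\le\min\{2\sqrt{k}\norm{x_t}_\infty,\norm{x_t}_2\}$. Its expectation is at most $J(A^*)\min\{2\sqrt{2k\log(2n)},\sqrt{n}\}$, which is at most of the order of $J(A^*)\,w(\widetilde{\calC}_{\calK})$.

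The same observation shows that the probabilistic step is nearly dispensable. Since $D_t=1$ forces $\norm{\tildex_t}_\infty\le r$ on every selected column, the $\tilcalS$ estimate applies verbatim on $\tilcalSC$, losing at most a logarithmic factor relative to $R$.

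Your remark on data dependence is a real improvement in rigor. The paper asserts $\prob(\eventp{4})\ge1-\delta$, yet its proof invokes the cone condition, which is only available on $\eventp{1}\cap\eventp{2}$. Isolating the uniform-in-$H$ bound as the probabilistic statement, and deriving the displayed inequality deterministically on the intersection of events, is the right formulation.
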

We refer the reader to the Appendix~\ref{app-sec:bnd-cross-trm} for the complete proof of Lemma~\ref{lem:bnd-cross-trm}. 
The proof involves a careful decomposition over $\tilcalS$ and $\tilcalSC$, and bounding the individual terms using tools recalled in Section \ref{sec:prelim-tools}.  

We can now establish the final lower bound on $\norm{\Delta Y' D}_F^2$. On $\eventp{3} \cap \eventp{4}$,
\begin{align*}
\norm{\Delta Y' D}_F^2 &= \norm{\Delta_A Y D}_F^2+\norm{\Delta_U D}_F^2+2\dotprod{\Delta_A Y D}{\Delta_U D} \\
&\ge  \frac{T}{4} \norm{\Delta_A}_F^2-c_2 \frac{\overline{V}}{\overline{\lambda}} R \norm{\Delta_A}_F^2-c_3 r R\sqrt{k \absenc{\tilcalS}}\norm{\Delta_A}_F \norm{\Delta_U D}_{F}+\norm{\Delta_U D}_{F}^2\\
&\ge  \frac{T}{4} \norm{\Delta_A}_F^2-c_2 \frac{\overline{V}}{\overline{\lambda}} R \norm{\Delta_A}_F^2-c'_3 r^2 R^2k \absenc{\tilcalS}\norm{\Delta_A}^2_F -\frac{1}{2}\norm{\Delta_U D}^2_{F}+\norm{\Delta_U D}_{F}^2.
\end{align*}
If $T \ge c \left( \frac{\overline{V}}{\overline{\lambda}} R+ r^2 R^2k \absenc{\tilcalS}\right)$ for a suitably large constant $c \ge 1$, we get that,
\begin{align*}
\frac{1}{2}\norm{\Delta Y' D}_F^2 \ge c_1 T \norm{\Delta_A}_F^2+c_2 \norm{\Delta_U D}_{F}^2. 
\end{align*}

This means that on $\eventp{1} \cap \eventp{2} \cap \eventp{3} \cap \eventp{4}$, 
\begin{align*}
c_1 T \norm{\Delta_A}_F^2+c_2 \norm{\Delta_U D}_{F}^2 
&\le \overline{V} \norm{\Delta_A}_F+\overline{\lambda}\left(3 \norm{(\Delta_U D)_{\tilcalS}}_{2,1}-\norm{(\Delta_U D)_{\tilcalSC}}_{2,1}\right) \\
&\le \overline{V} \norm{\Delta_A}_F+c_3 \overline{\lambda} \sqrt{\absenc{\tilcalS}} \norm{\Delta_U D}_{F} \\
&\le \frac{\overline{V}}{\sqrt{c_1 T}} \sqrt{c_1 T\norm{\Delta_A}^2_F}+\frac{c_3 \overline{\lambda} \sqrt{\absenc{\tilcalS}}}{\sqrt{c_2}} \sqrt{c_2\norm{\Delta_U D}^2_{F}} \\
& \le \left(\frac{\overline{V}^2}{c_1 T}+c'_3 \overline{\lambda}^2 \absenc{\tilcalS}\right)^{1/2} \left(c_1 T \norm{\Delta_A}_F^2+c_2 \norm{\Delta_U D}_{F}^2\right)^{1/2}.
\end{align*}
The above implies that for some constants $c_1, c_2 \ge 1$,
\begin{align}\label{eq:final-bnd}
\norm{\Delta_A}_F^2+\frac{1}{T}\norm{\Delta_U D}_{F}^2 &\le c_1\frac{r^2k\bigg(\absenc{\tilcalS}\max\set{\sqrt{\log T},\sqrt{\log \left(\frac{n}{\delta}\right)}}+\sqrt{T\log \left(\frac{n}{\delta}\right)}\bigg)^2}{T^2}\\
&+c_2 \frac{\left(\sqrt{n}+n^{1/4} \log^{1/4} \left(\frac{T}{\delta}\right)+2^{1/2} \log^{1/2} \left(\frac{T}{\delta}\right)\right)^2 \absenc{\tilcalS}}{T}. \nonumber
\end{align}

\paragraph{Lower bound on $T$.} Note that for the statements to hold, we need, 
\begin{align} \label{eq:T_conds_necess}
T \ge c \left( \frac{\overline{V}}{\overline{\lambda}} R+ r^2 R^2k \absenc{\tilcalS}\right) \ \text{ and } \   T \ge 4s+c_1 J^4(A^*) \log^2 \left(\frac{1}{\delta}\right) k \log n.  
\end{align}
Recall the definitions of $\overline{V}, \overline{\lambda}, R$ and $r$.
\begin{align*}
R&=J(A^*) \sqrt{k \log \left(\frac{n^2}{k}\right) \log \left(\frac{T}{\delta}\right)}\\
r&=c_1J(A^*) \sqrt{\log \left(\frac{nT}{\delta}\right)}\\
\overline{\lambda} &= \sqrt{n}+n^{1/4} \log^{1/4} \left(\frac{T}{\delta}\right)+2^{1/2} \log^{1/2} \left(\frac{T}{\delta}\right)\\
\overline{V} &= c_2J(A^*) \sqrt{\log \left(\frac{nT}{\delta}\right)}\sqrt{k}\bigg(\absenc{\tilcalS}\max\set{\sqrt{\log T},\sqrt{\log \left(\frac{n}{\delta}\right)}}+\sqrt{T\log \left(\frac{n}{\delta}\right)}\bigg).
\end{align*}
Since $\overline{\lambda} \ge \sqrt{\log\left(\frac{T}{\delta}\right)}$, hence $T \ge c \left( \frac{\overline{V}}{\sqrt{\log\left(\frac{T}{\delta}\right)}} R+ r^2 R^2k \absenc{\tilcalS}\right)$ implies the first condition in \eqref{eq:T_conds_necess}. Also note that since $\absenc{\tilcalS} \le 2s$, hence
\begin{align*}
r^2 R^2 k \absenc{\tilcalS} &\le \widetilde{c}_1 J^4(A^*) s k^2 \log \left(\frac{n^2}{k}\right) \log \left(\frac{T}{\delta}\right) \log \left(\frac{nT}{\delta}\right) \text{ and,}\\
\frac{\overline{V}}{\sqrt{\log\left(\frac{T}{\delta}\right)}} R & \le \widetilde{c}_2 J^2(A^*) k \sqrt{\log \left(\frac{n^2}{k}\right)} \sqrt{\log \left(\frac{nT}{\delta}\right)}\bigg(s\cdot \max\set{\sqrt{\log T},\sqrt{\log \left(\frac{n}{\delta}\right)}}+\sqrt{T\log \left(\frac{n}{\delta}\right)}\bigg).
\end{align*}
Now as long as 
\begin{align*}
T &\ge c_1 J^4(A^*) s k^2 \log \left(\frac{n^2}{k}\right) \log \left(\frac{T}{\delta}\right) \log \left(\frac{nT}{\delta}\right) \text{ and,}\\ 
T &\ge c_2 J^2(A^*) k s \sqrt{\log \left(\frac{n^2}{k}\right)} \sqrt{\log \left(\frac{nT}{\delta}\right)} \max\set{\sqrt{\log T},\sqrt{\log \left(\frac{n}{\delta}\right)}}, \text{ and,} \\
T &\ge c_3 J^2(A^*) k \sqrt{\log \left(\frac{n^2}{k}\right)} \sqrt{\log \left(\frac{nT}{\delta}\right)} \sqrt{T \log \left(\frac{n}{\delta}\right)}
\end{align*}
hold for suitably large constants $c_1,c_2,c_3 \geq 1$, then $T$ would satisfy \eqref{eq:T_conds_necess}. It is easy to see that if we let, 
$T \ge c J^4(A^*) k^2 \log \left(\frac{n^2}{k}\right) \cdot \max \set{s,\log \left(\frac{n}{\delta}\right)} \log^2 \left(\frac{nT}{\delta}\right)$ then \eqref{eq:T_conds_necess} is ensured.
To isolate $T$, we can let $M = \sqrt{c} J^2(A^*)k \sqrt{\log \left(\frac{n^2}{k}\right)}\max\set{\sqrt{s},\sqrt{\log \left(\frac{n}{\delta}\right)}}$, so that the the aforementioned condition can be rewritten as $\sqrt{T} \ge M \log \left({\frac{nT}{\delta}} \right)$.   
Using \cite[Lemma 21]{Papini2022}, we get that 
\begin{align*}
\sqrt{T} \ge 2M \log \left(M \sqrt{\frac{n}{\delta}} \right)
\end{align*}
and taking the square of both sides, we get the sufficient condition
\begin{align*}
T \ge cJ^4(A^*) k^2 \cdot \log \left(\frac{n^2}{k}\right) \cdot  \max \set{s,\log \left(\frac{n}{\delta}\right)} \log^2 \left( \frac{J^4(A^*) k^2 \cdot \log \left(\frac{n^2}{k}\right) \cdot \max \set{s,\log \left(\frac{n}{\delta}\right)} n}{\delta}\right).
\end{align*}

\begin{remark}\label{rem:upper-st}
Note that,
\begin{align*}
T \ge cJ^4(A^*) k^2 \cdot \log \left(\frac{n^2}{k}\right) \cdot  \max \set{s,\log \left(\frac{n}{\delta}\right)} \log^2 \left( \frac{J^4(A^*) k^2 \cdot \log \left(\frac{n^2}{k}\right) \cdot \max \set{s,\log \left(\frac{n}{\delta}\right)} n}{\delta}\right)
\end{align*}
implies,
\begin{align*}
\frac{s}{T} \le \frac{1}{cJ^4(A^*) k^2 \cdot \log \left(\frac{n^2}{k}\right) \cdot \log^2 \left( \frac{ n}{\delta}\right)}
\end{align*}
\end{remark}

\paragraph{The final bound.} Substituting $r$ in \eqref{eq:final-bnd} we obtain after some simplifications that
\begin{align*}
\norm{\Delta_A}_F^2+\frac{1}{T}\norm{\Delta_U D}_{F}^2 &\le c_1J^2(A^*) k \log \left(\frac{Tn}{\delta}\right)\bigg(\max\set{\log T,\log \left(\frac{n}{\delta}\right)}\bigg)\frac{s^2}{T^2}\\
&+c_2\frac{J^2(A^*) k\log \left(\frac{Tn}{\delta}\right)\log \left(\frac{n}{\delta}\right)}{T}\\
&+c_3 \max \set{n,\log\left(\frac{T}{\delta}\right)}\frac{s}{T}.
\end{align*}
Using the result from Remark~\ref{rem:upper-st}, we can further simplify this to
\begin{align*}
\norm{\Delta_A}_F^2+\frac{1}{T}\norm{\Delta_U D}_{F}^2 &\le c_1\frac{J^2(A^*) k \log \left(\frac{Tn}{\delta}\right)\bigg(\max\set{\log T,\log \left(\frac{n}{\delta}\right)}\bigg)}{J^4(A^*) k^2 \cdot \log \left(\frac{n^2}{k}\right) \cdot \log^2 \left( \frac{ n}{\delta}\right)}\frac{s}{T}\\
&+c_2\frac{J^2(A^*) k\log \left(\frac{Tn}{\delta}\right)\log \left(\frac{n}{\delta}\right)}{T}\\
&+c_3 \max \set{n,\log\left(\frac{T}{\delta}\right)}\frac{s}{T}\\
&\le c_1\frac{J^2(A^*) k\log \left(\frac{Tn}{\delta}\right)\log \left(\frac{n}{\delta}\right)}{T}\\
&+c_2 \max \set{n,\log^2\left(\frac{T}{\delta}\right)}\frac{s}{T}
\end{align*}
which completes the proof.

\subsection{Proof of Lemma~\ref{lem:main-inequ}}\label{app-sec:main-inequ}

\begin{proof}
Note that since $\widehat{A},\widehat{U}$ is the optimal solution and $A^*, \UStar$ is a feasible solution, we get,
\begin{align*}
&\frac{1}{2} \norm{\left(\overline{Y}-\widehat{A}Y-\widehat{U}\right) D}_F^2+\lambda \norm{\widehat{U} D}_{2,1} \le \frac{1}{2} \norm{\left(\overline{Y}-A^*Y-\UStar\right) D}_F^2+\lambda \norm{\UStar D}_{2,1}
\end{align*}
which implies
\begin{align*}
&\frac{1}{2}\norm{(A^*-\widehat{A})YD + \overline{E}D+(\UStar-\widehat{U})D}_F^2 \le \frac{1}{2} \norm{\overline{E}D}_F^2+ \left(\lambda \norm{\UStar D}_{2,1}-\lambda \norm{\widehat{U} D}_{2,1}\right).
\end{align*}
Denoting $\Delta_A = \widehat{A}-A^*$ and $\Delta_U = \widehat{U}-U^*$  note that,
\begin{align*}
\Delta_A Y + \Delta_U = \underbrace{\matrixenc{\Delta_A & \Delta_U}}_{\equiv \Delta} \underbrace{\matrixenc{Y \\ I}}_{\equiv Y'} = \Delta Y'.
\end{align*}
Therefore, we get,
\begin{align*}
&\frac{1}{2} \norm{(\Delta Y'-\overline{E})D}_F^2 \le \frac{1}{2} \norm{\widehat{E}D}_F^2+ \left(\lambda \norm{\UStar D}_{2,1}-\lambda \norm{\widehat{U} D}_{2,1}\right)\\
\end{align*}
which is equivalent to,
\begin{align*}
\frac{1}{2} \norm{\Delta Y'D}_F^2 &\le \dotprod{\Delta Y'}{\overline{E}D}+ \left(\lambda \norm{\UStar D}_{2,1}-\lambda \norm{\widehat{U} D}_{2,1}\right)\\
& = \dotprod{\Delta_A}{\overline{E}DY^\top}+ \dotprod{\Delta_U}{\overline{E}D}+ \lambda\left( \norm{\UStar D}_{2,1}-\norm{\widehat{U} D}_{2,1}\right).
\end{align*}

Now note that
\begin{align*}
\dotprod{\Delta_U}{\overline{E}D} &= \dotprod{\Delta_U D}{\overline{E}}= \sum_{t=1}^{T-1} \dotprod{D_t(\widehat{u}_t-\uStar_t)}{\eta_{t+1}}\\
& \le \sum_{t=1}^{T-1} \norm{D_t(\widehat{u}_t-\uStar_t)}_2 \norm{\eta_{t+1}}_2 \le \left(\max_{t \in \set{2,\dots,T}} \norm{\eta_t}_2\right)\norm{\Delta_U D} \\
& \le \frac{\lambda}{2}\norm{\Delta_U D}
\end{align*}\
where $\lambda \ge 2 \max_{t \in \set{2,\dots,T}} \norm{\eta_t}_2$. Then we have,
\begin{align*}
&\dotprod{\Delta_U}{\overline{E}D} +\lambda \left(\norm{\UStar D}_{2,1}- \norm{\widehat{U} D}_{2,1}\right) \le \lambda \left(\frac{1}{2}\norm{\Delta_U D}+\norm{\UStar D}_{2,1}- \norm{\widehat{U} D}_{2,1}\right).
\end{align*}

By a decomposition over $\tilcalS$ and $\tilcalSC$, we obtain
\begin{align*}
\norm{\UStar D}_{2,1}-\norm{\widehat{U} D}_{2,1} &=\norm{\UStar D}_{2,1}-\norm{\Delta_UD+ \UStar D}_{2,1}\\
&=\norm{(\UStar D)_{\tilcalS}}_{2,1}-\norm{(\Delta_UD)_{\tilcalS}+ (\UStar D)_{\tilcalS}}_{2,1}-\norm{(\Delta_UD)_{\tilcalSC}}_{2,1},
\end{align*}
which implies,
\begin{align*}
&\lambda \Big(\frac{1}{2} \norm{\Delta_U D}_{2,1}+\norm{\UStar D}_{2,1}-\norm{\hat{U}D}_{2,1}\Big)\\
&=\lambda \Bigg(\frac{1}{2} \norm{\Delta_U D}_{2,1}+\norm{(\UStar D)_{\tilcalS}}_{2,1}-\norm{(\Delta_UD)_{\tilcalS}+ (\UStar D)_{\tilcalS}}_{2,1}-\norm{(\Delta_UD)_{\tilcalSC}}_{2,1}\Bigg)\\
&\le  \lambda \Bigg(\frac{1}{2} \norm{\Delta_U D}_{2,1}+\norm{(\Delta_UD)_{\tilcalS}}_{2,1}-\norm{(\Delta_UD)_{\tilcalSC}}_{2,1}\Bigg)\\
&=  \frac{\lambda}{2} \Bigg(3\norm{(\Delta_UD)_{\tilcalS}}_{2,1}-\norm{(\Delta_UD)_{\tilcalSC}}_{2,1}\Bigg).
\end{align*}
Therefore, if $\lambda \ge 2  \max_{t \in \set{2,\dots,T}} \norm{\eta_t}_2$, then,
\begin{align*}
\frac{1}{2} \norm{\Delta Y' D}_F^2 &\le \dotprod{\Delta_A}{\overline{E}DY^\top}+\frac{\lambda}{2} \Bigg(3\norm{(\Delta_UD)_{\tilcalS}}_{2,1}-\norm{(\Delta_UD)_{\tilcalSC}}_{2,1}\Bigg).
\end{align*}
\end{proof}
\subsection{Proof of Lemma~\ref{lem:bound-lambda}}\label{app-sec:bound-lambda}

\begin{proof}
Using the Lemma~\ref{lem:chi-sq-dev} with $\mu= \left(1 \ 1 \ \dots \ 1\right)^\top$, we get
\begin{align*}
\prob \left(\norm{\eta_t}_2^2 \le n+2\sqrt{vn}+2v\right) \ge 1-e^{-v}.
\end{align*}
Letting $v=\log \left(\frac{T}{\delta}\right)$, and taking the union over $t = 2,\dots, T$ yields
\begin{align*}
\prob \left(\max_{t \in \set{2,\dots,T}}\norm{\eta_t}_2^2 \le n+2\sqrt{n\log \left(\frac{T}{\delta}\right)}+2\log \left(\frac{T}{\delta}\right)\right) 
\ge 1-\delta
\end{align*}
and taking the square root leads to 
\begin{align*}
\prob &\bigg(\max_{t \in \set{2,\dots,T}}\norm{\eta_t}_2 \le \sqrt{n}+\left(4n\log \left(\frac{T}{\delta}\right)\right)^{1/4}+\sqrt{2\log \left(\frac{T}{\delta}\right)}\bigg) \ge 1-\delta.
\end{align*}
\end{proof}

\subsection{Proof of Lemma~\ref{lem:inf-norm-cost}}\label{app-sec:inf-norm-cost}
\begin{proof}
Note that $\overline{E}$ can be written as follows:
\begin{align*}
\overline{E} &= \matrixenc{\eta_2 & \eta_3 & \dots & \eta_T}_{n \times (T-1)}=\matrixenc{g_1^\top \\ \vdots \\ g_d^\top}
\end{align*}
where $g_i \stackrel{\mathclap{\text{i.i.d}}}{\sim} \nrml \left(0,I_{T-1}\right)$. Therefore, 
\begin{align*}
\sup_{\norm{b}_{\infty} \le r, \norm{b}_0 \le \absenc{\tilcalS}} \norm{\overline{E}b}_{\infty} &= \sup_{b \in \calB\left(r,\absenc{\tilcalS}\right)} \max_{i \in [n]} \absenc{g_i^\top b}\\
&= \max_{i \in [n]} \sup_{b \in \calB\left(r,\absenc{\tilcalS}\right)}  \absenc{g_i^\top b}.
\end{align*}
Now consider $\sup_{b \in \calB\left(r,\absenc{\tilcalS}\right)}  \absenc{g^\top b}$ for $g {\sim} \nrml \left(0,I_{T-1}\right)$. We can see that bounding $\max_{i \in [n]} \sup_{b \in \calB\left(r,\absenc{\tilcalS}\right)}  \absenc{g_i^\top b}$ can be achieved by finding an upper bound on $\sup_{b \in \calB\left(r,\absenc{\tilcalS}\right)}  \absenc{g^\top b}$ and then applying union bound over $i \in [n]$. To this end, we show that the supremum function in question is Lipschitz and then use concentration bounds for Lipschitz functions of standard i.i.d Gaussian's to achieve the desired bound.  

\begin{remark}\label{rem:lipschitz-sup-gb}
Let $f(x) = \sup_{b \in \calB\left(r,\absenc{\tilcalS}\right)} \absenc{x^\top b}$. We can see that $f(x)$ is Lipschitz over $\matR$ with respect to $\norm{\cdot}_2$ norm. This can be seen by observing,
\begin{align*}
\absenc{f(x)-f(y)} &\le \left(\sup_{b \in \calB\left(r,\absenc{\tilcalS}\right)} \norm{b}_2\right) \norm{x-y}_2\\
&\le r\absenc{\tilcalS} \norm{x-y}_2
\end{align*}
where the last inequality follows from the fact that $\norm{b}_2 \le \norm{b}_1 \le r\absenc{\tilcalS}$ holds for all $b \in \calB\left(r,\absenc{\tilcalS}\right)$.
\end{remark}

Given Remark~\ref{rem:lipschitz-sup-gb}, we obtain from Theorem~\ref{thm:bounded-func-conc} \cite[from Chapter 3]{Wainwright2019} that for any $q > 0$,
\begin{align*}
&\prob \left(\absenc{\sup_{b \in \calB\left(r,\absenc{\tilcalS}\right)}  \absenc{g^\top b}-\expec\left(\sup_{b \in \calB\left(r,\absenc{\tilcalS}\right)}  \absenc{g^\top b}\right)} \ge q\right) \le 2\exp\left(\frac{-cq^2}{r^2 \absenc{\tilcalS}^2}\right).
\end{align*}
Let $\mu \left(\calB\left(r,\absenc{\tilcalS}\right)\right) \coloneq \expec \left(\sup_{b \in \calB\left(r,\absenc{\tilcalS}\right)}  \absenc{g^\top b}\right)$ where we recall that $\mu \left(\calB\left(r,\absenc{\tilcalS}\right)\right)$ is known as the Gaussian complexity of $\calB\left(r,\absenc{\tilcalS}\right)$ \cite[see Section 7.5.3]{HDPbook}. This can be bounded in terms of the Gaussian width of $\calB\left(r,\absenc{\tilcalS}\right)$ as follows, see \cite[Lemma 7.5.11]{HDPbook}. 
$$\mu \left(\calB\left(r,\absenc{\tilcalS}\right)\right) \lesssim w \left(\calB\left(r,\absenc{\tilcalS}\right)\right)+\norm{y}_2 \text{ for } y \in \calB\left(r,\absenc{\tilcalS}\right).$$ 
Since $0 \in \calB\left(r,\absenc{\tilcalS}\right)$, we have,
\begin{align*}
\mu \left(\calB\left(r,\absenc{\tilcalS}\right)\right) \lesssim w \left(\calB\left(r,\absenc{\tilcalS}\right)\right).
\end{align*}

Therefore, with probability at least $ 1-2\exp \left(\frac{-cq^2}{r^2\absenc{\tilcalS}^2}\right)$, 
\begin{align*}
\sup_{b \in \calB\left(r,\absenc{\tilcalS}\right)}  \absenc{g^\top b} \le c_1 w \left(\calB\left(r,\absenc{\tilcalS}\right)\right)+q
\end{align*}
for some constant $c_1$. Letting $q = c_2 r \absenc{\tilcalS} \sqrt{\log \left(\frac{n}{\delta}\right)}$ for an appropriate constant $c_2$, we get,
\begin{align*}
\prob \bigg(\sup_{b \in \calB\left(r,\absenc{\tilcalS}\right)}  \absenc{g^\top b} &\le c_1 w \left(\calB\left(r,\absenc{\tilcalS}\right)\right)+c_2 r \absenc{\tilcalS} \sqrt{\log \left(\frac{n}{\delta}\right)}\bigg) \ge 1-\frac{\delta}{n}
\end{align*}
and taking the union bound over $i\in [n]$,
\begin{align*}
\prob \bigg(\norm{\overline{E}D_{\tilcalS}Y_{\tilcalS}^\top}_{\infty, \infty} &\le c\bigg( w \left(\calB\left(r,\absenc{\tilcalS}\right)\right)+ r \absenc{\tilcalS} \sqrt{\log \left(\frac{n}{\delta}\right)}\bigg)\bigg) \ge 1-\delta.
\end{align*}
Therefore, $\prob \left(\eventp{1a}\right) \ge 1-\delta$.
\end{proof}

\subsection{Proof of Lemma~\ref{lem:all-unc-select}}\label{app-sec:all-unc-select}
\begin{proof}
Let $\eta = \vect\left(\matrixenc{\eta_1 & \eta_2 & \dots & \eta_T}\right)$ and note that 
$x_t = \widetilde{A}(t) \eta$ where 
$$\widetilde{A}(t) :=\matrixenc{(A^*)^{t-1} & (A^*)^{t-2} & \dots & I_n & 0 & \dots & 0} \in \matR^{n \times nT}.$$ 
For any $i\in [n]$, $x_{ti} = a^\top(t,i) \eta$ where $a^\top(t,i)$ is the $i$-th row of $\widetilde{A}(t)$ and $x_{ti}$ is the $i$-th element of $x_t$. Note that,
\begin{align*}
\expec \left(x^2_{ti}\right) = \norm{a(t,i)}_2^2 \le \norm{\widetilde{A}(t)}_2^2. 
\end{align*}
For any $v=\vect\left(\matrixenc{v_1 & v_2 & \dots & v_T}\right)$ with $\norm{v}_2=1$,
\begin{align*}
\norm{\widetilde{A}(t)v}_2 &= \norm{\sum_{t=1}^T (A^*)^{t-i} v_i}_2 \le \sum_{t=1}^T \norm{(A^*)^{t-i}}_2\le \sum_{t=1}^{\infty} \norm{(A^*)^{t-i}}_2 \le J(A^*),
\end{align*}
and therefore, $\expec \left(x^2_{ti}\right) \le J^2(A^*)$. 

Note that since $\eta$ contains i.i.d standard Gaussian entries, $x_{ti} \sim \nrml\left(0, \norm{a(t,i)}_2^2\right)$, and therefore, $x_{ti} = \norm{a(t,i)}_2 W_{ti}$ where $W_{ti} \sim \nrml(0,1)$. As $W^2_{ti}$ is sub-exponential, we have
\begin{align*}
\prob \left(W^2_{ti} \ge q\right) &\le 2\exp\left(\frac{-q}{c\norm{W^2_{ti}}_{\psi_1}}\right)= 2\exp\left(\frac{-q}{c\norm{W_{ti}}^2_{\psi_2}}\right)= 2\exp\left(\frac{-q}{c'}\right).
\end{align*}
By setting $q=c'\log \left(\frac{Tn}{\delta}\right)$ for all $\delta \in (0,c_1)$ for constants $c'$ and $c_1 < 1$, we get $\prob \left(W^2_{ti} \ge c'\log \left(\frac{Tn}{\delta}\right)\right) \le \frac{\delta}{Tn}$ and therefore, $\prob \left(x^2_{ti} \le c'J^2(A^*)\log \left(\frac{Tn}{\delta}\right)\right) \ge 1-\frac{\delta}{Tn}$. Taking a union bound over $t \in [T]$ and $i \in [n]$ yields
\begin{align*}
\prob \left(\forall t \in [T] :\norm{x_{t}}_{\infty} \le cJ(A^*) \sqrt{\log \left(\frac{Tn}{\delta}\right)}\right) \ge 1-\delta.
\end{align*}
Taking $r = cJ(A^*) \sqrt{\log \left(\frac{Tn}{\delta}\right)}$, this clearly implies $\prob\left(\eventp{1b}\right) \ge 1-\delta$.
\end{proof}

\subsection{Proof of Lemma~\ref{lem:inf-norm-comple}}\label{app-sec:inf-norm-comple}
\begin{proof}
Let $b_t = \indic\set{t \in \tilcalSC}$ and let $M_T = \overline{E}Y_{\tilcalSC}^\top$. We can see that,
\begin{align*}
M_T = \sum_{t\in \tilcalSC} \eta_{t+1} x_t^\top = \sum_{t \in [T-1]} b_t \eta_{t+1} x_t^\top. 
\end{align*}
Note that for any $i,j \in [n]$, we have 
$$M_{T, ij} = e_i^\top M_T e_j = \sum_{t \in [T-1]} b_t \eta_{t+1,i} x_{t,j}^\top.$$ 
Let $\calF_t = \sigma(\eta_1,\dots,\eta_t)$ denote the sigma-algebra at any time $t \geq 1$. Then $b_t x_t$ is $\calF_t$ measurable while $\eta_{t+1} \vert \calF_t$ is sub-Gaussian, for all $t \geq 1$. 
We can hence bound $\absenc{M_{T, ij}}$ using the tail bound for self normalized martingales from Theorem~\ref{thm:self-nrml-mart}. 

To this end, let us first denote $\overline{v}=v+\sum_{t\in [T-1]} b_t x^2_{tj}$ (for $v>0$). Then, there exist constants $c_1 \geq 1$ and $c_2 < 1$ such that
\begin{align*}
\prob \left(\overline{v}^{-1}M^2_{ij} \le c_1 \log \left(\frac{\overline{v}^{1/2} {v}^{-1/2}}{\delta}\right)\right) \ge 1-\delta
\end{align*}
for $\delta \in (0,c_2)$. Let $\eventp{1c}'$ denote the above event
and note that on $\eventp{1b}$,
\begin{align*}
\sum_{t\in [T-1]} b_t x^2_{tj} \le \sum_{t\in [T-1]} x^2_{tj} \le Tr^2.  
\end{align*}
Choosing $v=Tr^2$, this implies $\overline{v} \le v+Tr^2  \le 2Tr^2$.

Hence on $\eventp{1b} \cap \eventp{1c}'$, we have shown that
\begin{align*}
&\frac{1}{2Tr^2} \absenc{M_{ij}}^2 \le c_1 \log \left(\frac{2}{\delta}\right) \iff \absenc{M_{ij}}^2 \le 2c_1 r^2T \log \left(\frac{2}{\delta}\right).
\end{align*}
By suitably rescaling $\delta$ and taking a union bound over $i,j \in [n]$, we then obtain
\begin{align*}
\prob \left(\forall i,j \in [n]: \absenc{M_{ij}} \le c_1 r \sqrt{T \log \left(\frac{n}{\delta}\right)}\right) \ge 1-\delta.
\end{align*}
Since $\eventp{1b} \cap \eventp{1c}' \subseteq \eventp{1c}$, on $\eventp{1b} \cap \eventp{1c}'$, we obtain the statement of the lemma after suitably rescaling $\delta$.
\end{proof}

\subsection{Proof of Lemma~\ref{lem:lwr-bnd-deltaA}}\label{app-sec:lwr-bnd-deltaA}
\begin{proof}
Consider $\norm{\Delta_AYD}_F^2$. Then,
\begin{align*}
\norm{\Delta_AYD}_F^2 & = \norm{\Delta_AYD_{\tilcalS}}_F^2+\norm{\Delta_AYD_{\tilcalSC}}_F^2\\
&\ge \norm{\Delta_AYD_{\tilcalSC}}_F^2\\
&= \norm{\Delta_AXD_{\tilcalSC}}_F^2 \tag{since $YD_{\tilcalSC}=XD_{\tilcalSC}$} \\
&\ge \norm{\Delta_A}_F^2 \inf_{H \in \widetilde{C}_{\calK}} \norm{HXD_{\tilcalSC}}_F^2,
\end{align*}
where the last inequality comes from the fact that $\frac{\Delta_A}{\norm{\Delta_A}_F} \in \calC_{\calK}$.

Note that $\vect(HXD_{\tilcalSC}) = (I_{T-1} \otimes H) \vect(XD_{\tilcalSC}) = (I_{T-1} \otimes H)(D_{\tilcalSC} \otimes I_n) \vect(X) = (D_{\tilcalSC} \otimes H) \vect(X)$. 

Let $x = \vect(X)$,  $\eta = \vect\left(\matrixenc{\eta_1 & \eta_2 & \dots & \eta_{T-1} }\right)$ and denote $\Gamma \in \matR^{n(T-1) \times n(T-1)}$ as,
\[
\Gamma = 
\matrixenc{
I & 0 & 0 & \dots & 0 \\
A^* & I & 0 & \dots & 0 \\
{(A^*)}^2 & A^* & I & \dots & 0 \\
\vdots & \vdots & \vdots & \dots & \vdots \\
{(A^*)}^{T-2} & {(A^*)}^{T-3} & {(A^*)}^{T-4} & \dots & I  \\ 
}.
\]
We can see that $x = \Gamma \eta$, and hence, 
$\vect(HXD_{\tilcalSC}) = (D_{\tilcalSC} \otimes H) \Gamma \eta$. 

Consider the set $\calM = \set{(D_{\tilcalSC} \otimes H) \Gamma: H\in \widetilde{C}_{\calK}}$. 
We then have that 
\begin{align*}
\inf_{H \in \widetilde{C}_{\calK}} \norm{HXD_{\tilcalSC}}_F^2 &= \inf_{H \in \widetilde{C}_{\calK}} \norm{\vect\left(HXD_{\tilcalSC}\right)}_2^2\\
&=\inf_{M \in \calM} \norm{M\eta}_2^2.
\end{align*}
Applying Theorem~\ref{thm:psd-sup} (Theorem 3.1 of \cite{krahmer14}) to $\calM$ yields
\begin{align}
&\prob \left(\sup_{M \in \calM} \absenc{\norm{M\eta}_2^2-\expec\left(\norm{M\eta}_2^2\right)} \ge c_1 P+q\right) \le 2\exp \left(-c_2\min\set{\frac{q^2}{F^2},\frac{q}{L}}\right). \label{eq:M-bnd-krah} 
\end{align}
where
\begin{align*}
P &= \gammafn{2} \left(\calM,\norm{\cdot}_2\right) \left(\gammafn{2} \left(\calM,\norm{\cdot}_2\right)+d_F(\calM)\right)+d_F(\calM)d_2(\calM),\\
F &= d_2(\calM)\left(\gammafn{2} \left(\calM,\norm{\cdot}_2\right)+d_F(\calM)\right), \\
\text{ and } L &= d^2_2(\calM)
\end{align*}

We start by observing that for any $M\in \calM$, $\norm{M}_F^2 \ge \absenc{\tilcalSC} \norm{H}_F^2 = \absenc{\tilcalSC}$. It was shown in \cite{Jedra20}  that $\norm{\Gamma}_2 \le J(A^*)$. Therefore,
\begin{align*}
d_2(\calM) &=\sup_{H \in \widetilde{C}_{\calK}} \norm{(D_{\tilcalSC} \otimes H) \Gamma}_2\le \sup_{H \in \widetilde{C}_{\calK}}\norm{D_{\tilcalSC}}_2 \norm{H}_2 \norm{\Gamma}_2 \le J(A^*) \text{ and,}\\
d_F(\calM) &=\sup_{H \in \widetilde{C}_{\calK}} \norm{(D_{\tilcalSC} \otimes H) \Gamma}_F \le \norm{\Gamma}_2 \sup_{H \in \widetilde{C}_{\calK}} \norm{D_{\tilcalSC} \otimes H}_F = J(A^*) \sup_{H \in \widetilde{C}_{\calK}} \norm{H}_F \norm{D_{\tilcalSC}}_F \le J(A^*)\sqrt{T}. 
\end{align*}

We next show that the map $(\widetilde{C}_{\calK}, \norm{\cdot}_2) \mapsto (\calM,\norm{\cdot}_2)$ is Lipschitz. For any $H_1, H_2 \in \widetilde{C}_{\calK}$, 
\begin{align*}
\norm{(D_{\tilcalSC} \otimes H_1) \Gamma-(D_{\tilcalSC} \otimes H_2) \Gamma}_2 = \norm{(D_{\tilcalSC} \otimes (H_1-H_2)) \Gamma}_2  \le \norm{D_{\tilcalSC}}_2 \norm{\Gamma}_2 \norm{H_1-H_2}_2 \le J(A^*) \norm{H_1-H_2}_2.
\end{align*}
Using Property \ref{itm:tal-prp3} recalled earlier in Section \ref{sec:prelim-tools}, we then obtain,
\begin{align*}
\gamma_2\left(\calM,\norm{\cdot}_2\right) \le cJ(A^*) \gamma_2\left(\widetilde{C}_{\calK},\norm{\cdot}_2\right).
\end{align*}

Let us now consider the following respective upper bounds for $P, Q, L$.
\begin{align*}
\overline{P} &= c_1J^2(A^*) \gamma_2\left(\widetilde{C}_{\calK},\norm{\cdot}_2\right)\bigg(\gamma_2\left(\widetilde{C}_{\calK},\norm{\cdot}_2\right)+\sqrt{T}\bigg)+J^2(A^*)\sqrt{T}, \\
\overline{F} &= c_1 J^2(A^*) \left(\gamma_2\left(\widetilde{C}_{\calK},\norm{\cdot}_2\right)+\sqrt{T}\right), \\
\text{ and, } \ \overline{L} &= J^2(A^*), \\
\end{align*}
for sufficiently large $c_1>1$. Then from \eqref{eq:M-bnd-krah}, 
\begin{align*}
&\prob \left(\inf_{M\in \calM} \norm{M\eta}_2^2 \ge \absenc{\tilcalSC}-c_2 \overline{P}-q\right) \ge 1-2\exp\left(-c_3 \min \set{\frac{q^2}{\overline{F}^2},\frac{q}{\overline{L}}}\right).
\end{align*}
Choose $q = \overline{F}\sqrt{m}$ for $m \ge 1$. Since $\overline{F} \ge \overline{L}$, we have $\min \set{\frac{q^2}{\overline{F}^2},\frac{q}{\overline{L}}} = \sqrt{m}$, hence,
\begin{align*}
&\prob \left(\inf_{M\in \calM} \norm{M\eta}_2^2 \ge \absenc{\tilcalSC}-c_2 \overline{P}-\overline{F}\sqrt{m}\right) \ge 1-2\exp\left(-c_3 \sqrt{m}\right).
\end{align*}
Using Property \ref{itm:tal-prp1} (since $\norm{\cdot}_2 \leq \norm{\cdot}_F$) and recalling the inequality in \eqref{eq:gamma_gauss_width}, we have
\begin{align} \label{eq:proof_tmp1}
\gamma_2\left(\widetilde{C}_{\calK},\norm{\cdot}_2\right) \lesssim \gamma_2\left(\widetilde{C}_{\calK},\norm{\cdot}_F\right)\asymp w\left(\widetilde{C}_{\calK}\right) = w\left(\widetilde{C}_{\calK,\vect}\right),
\end{align}
where $\widetilde{C}_{\calK,\vect}$ is defined as
$$\widetilde{C}_{\calK,\vect} = \set{\vect(H) \in \matR^{n^2}: \norm{(H)_{\calK^c}}_{1,1} \le \norm{(H)_{\calK}}_{1,1}} \cap \set{x \in \matR^{n^2}: \norm{x}_2=1}.$$ 
The last equality in \eqref{eq:proof_tmp1} follows from the fact that $(\widetilde{C}_{\calK},\norm{\cdot}_F) \rightarrow (\widetilde{C}_{\calK,\vect},\norm{\cdot}_2)$ is a bijective map. 
It was shown in \cite[Proposition 9.33]{Foucart2013os} that $w(\widetilde{C}_{\calK,\vect}) \lesssim \sqrt{k \log \left(\frac{n^2}{k}\right)}$ which implies  $\gamma_2\left(\widetilde{C}_{\calK},\norm{\cdot}_2\right) \lesssim \sqrt{k \log \left(\frac{n^2}{k}\right)}$.

Therefore choosing $m=c\log^2 \left(\frac{1}{\delta}\right)$, we have that
\begin{align*}
c_2 \overline{P}+\overline{F}\sqrt{m} & \le c' J^2(A^*) \Bigg(k\log n+\sqrt{kT\log n}+\sqrt{mT}+\sqrt{mkT\log n}\Bigg)\\
&\le \overline{c} J^2(A^*) \Bigg(k \log \left(\frac{n^2}{k}\right)+\sqrt{kT \log \left(\frac{n^2}{k}\right)}+\log \left(\frac{1}{\delta}\right)\left(\sqrt{T}+\sqrt{kT\log \left(\frac{n^2}{k}\right)}\right)\Bigg)\\
&\le \overline{c} J^2(A^*) \log \left(\frac{1}{\delta}\right) \Bigg(k\log \left(\frac{n^2}{k}\right)+\sqrt{kT\log \left(\frac{n^2}{k}\right)}\Bigg).
\end{align*}
Let $T \ge c_1 J^4(A^*) \log^2 \left(\frac{1}{\delta}\right) k \log \left(\frac{n^2}{k}\right)+4s$ for a suitably large constant $c_1$. Then we have that
$$\frac{\absenc{\tilcalSC}}{2} \ge \frac{T-2s-1v}{2} \ge \overline{c} J^2(A^*) \log \left(\frac{1}{\delta}\right) \Bigg(k\log \left(\frac{n^2}{k}\right)+\sqrt{kT\log \left(\frac{n^2}{k}\right)}\Bigg).$$ 

Therefore with probability $\ge 1-\delta$, it holds that $\inf_{M \in \calM} \norm{M\eta}_2^2 \ge \frac{\absenc{\tilcalSC}}{2}$. Note that since $T \ge 4s$, this implies $\absenc{\tilcalSC} \ge T-2s \ge T/2$ which completes the proof.
\end{proof}

\subsection{Proof of Lemma~\ref{lem:bnd-cross-trm}}\label{app-sec:bnd-cross-trm}
\begin{proof}
Note that,
\begin{align*}
\absenc{\dotprod{\Delta_A Y D}{\Delta_U D}} &\le \absenc{\dotprod{\Delta_A Y D}{(\Delta_U D)_{\tilcalS}}}+\absenc{\dotprod{\Delta_A Y D}{(\Delta_U D)_{\tilcalSC}}}\\
&\le \absenc{\dotprod{\Delta_A Y_{\tilcalS} D_{\tilcalS}}{(\Delta_U D)_{\tilcalS}}}+\absenc{\dotprod{\Delta_A Y_{\tilcalSC} D_{\tilcalSC}}{(\Delta_U D)_{\tilcalSC}}} \\
& \le \norm{(\Delta_U D)_{\tilcalS}}_{2,1} \norm{\Delta_A Y_{\tilcalS} D_{\tilcalS}}_{2,\infty}+\norm{(\Delta_U D)_{\tilcalSC}}_{2,1} \norm{\Delta_A Y_{\tilcalSC} D_{\tilcalSC}}_{2,\infty}.
\end{align*}
Recall that $\norm{(\Delta_U D)_{\tilcalSC}}_{2,1} \le 3\norm{(\Delta_U D)_{\tilcalS}}_{2,1}+\frac{\overline{V}}{\overline{\lambda}} \norm{\Delta_A}_F$. Therefore,
\begin{align*}
\absenc{\dotprod{\Delta_A Y D}{\Delta_U D}} & \le \norm{(\Delta_U D)_{\tilcalS}}_{2,1} \norm{\Delta_A Y_{\tilcalS} D_{\tilcalS}}_{2,\infty}+\left(3\norm{(\Delta_U D)_{\tilcalS}}_{2,1}+\frac{\overline{V}}{\overline{\lambda}} \norm{\Delta_A}_F\right) \norm{\Delta_A Y_{\tilcalSC} D_{\tilcalSC}}_{2,\infty}.
\end{align*}
In order to control $\absenc{\dotprod{\Delta_A Y D}{\Delta_U D}}$, we need to control $\norm{\Delta_A Y_{\tilcalSC} D_{\tilcalSC}}_{2,\infty}$ and $\norm{\Delta_A Y_{\tilcalS} D_{\tilcalS}}_{2,\infty}$. 

We can first consider $\norm{\Delta_A Y_{\tilcalS} D_{\tilcalS}}_{2,\infty}$. Note that,
\begin{align*}
\norm{\Delta_A Y_{\tilcalS} D_{\tilcalS}}_{2,\infty} = \max_{t} \norm{\Delta_A (Y_{\tilcalS} D_{\tilcalS})_{:,t}}_{2} \le \norm{\Delta_A}_{1,1} \max_t \norm{(Y_{\tilcalS} D_{\tilcalS})_{:,t}}_{\infty} \le r\norm{\Delta_A}_{1,1} \le 2r\sqrt{k}\norm{\Delta_A}_{F}.
\end{align*}
where $(Y_{\tilcalS} D_{\tilcalS})_{:,t}$ represents the $t$-th column of $Y_{\tilcalS} D_{\tilcalS}$. 

Next, let us take $\norm{\Delta_A Y_{\tilcalSC} D_{\tilcalSC}}_{2,\infty}$. Observe that,
\begin{align*}
\norm{\Delta_A Y_{\tilcalSC} D_{\tilcalSC}}_{2,\infty} &= \max_{t \in [T-1]} \norm{\Delta_A (Y_{\tilcalSC} D_{\tilcalSC})_{:,t}}_{2}\\
&= \max_{t \in [T-1]} \norm{\Delta_A (X_{\tilcalSC} D_{\tilcalSC})_{:,t}}_{2}\\
&\le \max_{t \in [T]} \norm{\Delta_A X_{:,t}}_{2}\\
&= \max_{t \in [T]} \norm{\Delta_A x_t}_2\\
&=\norm{\Delta_A}_F \max_{t \in [T]} \norm{\frac{\Delta_A}{\norm{\Delta_A}_F} x_t}_2\\
& \le \norm{\Delta_A}_F \max_{t \in [T]} \sup_{H \in \widetilde{C}_{\calK}} \norm{H x_t}_2\\
& \le \norm{\Delta_A}_F \max_{t \in [T]} \sup_{H \in \widetilde{C}_{\calK}} \norm{H \widetilde{A}(t)\eta}_2.
\end{align*}
where $(Y_{\tilcalSC} D_{\tilcalSC})_{:,t}$ and $X_{:,t}$ represent $t$-th columns of $Y_{\tilcalSC} D_{\tilcalSC}$ and $X$.

Let $t \in [T]$ be fixed and let $\calM_t = \set{H\widetilde{A}(t): H \in \widetilde{C}_{\calK}}$. Then, we can see that $$\sup_{H \in \widetilde{C}_{\calK}} \norm{H \widetilde{A}(t)\eta}_2 = \sup_{M \in \calM_t} \norm{M\eta}_2.$$ Note that for any $M \in \calM_t$ (where $M=H\widetilde{A}(t)$ for some $H \in \widetilde{C}_{\calK}$), 
\begin{align*}
\expec\left(\norm{M\eta}^2_2\right) = \norm{M}^2_F = \norm{H\widetilde{A}(t)}^2_F \le \norm{\widetilde{A}(t)}^2_2 \norm{H}^2_F \le J^2(A^*).
\end{align*}
Also note that,
\begin{align*}
d_2(\calM_t) &= \sup_{M \in \calM_t} \norm{M}_2 = \sup_{H \in \widetilde{C}_{\calK}} \norm{H\widetilde{A}(t)}_2 \le \norm{\widetilde{A}(t)}_2 \sup_{H \in \widetilde{C}_{\calK}} \norm{H}_2 \le J(A^*) \\
d_F(\calM_t) &= \sup_{M \in \calM_t} \norm{M}_F = \sup_{H \in \widetilde{C}_{\calK}} \norm{H\widetilde{A}(t)}_F \le \norm{\widetilde{A}(t)}_2 \sup_{H \in \widetilde{C}_{\calK}} \norm{H}_F \le J(A^*). 
\end{align*}
Furthermore, we can also see that the mapping $(\widetilde{C}_{\calK},\norm{\cdot}_2) \rightarrow (\calM_t,\norm{\cdot}_2)$ is $J(A^*)$-Lipschitz. To see this, we can consider $H_1, H_2 \in \widetilde{C}_{\calK}$. Then,
\begin{align*}
\norm{H_1 \widetilde{A}(t)-H_2 \widetilde{A}(t)}_2 = \norm{(H_1-H_2) \widetilde{A}(t)}_2 \le \norm{\widetilde{A}(t)}_2\norm{H_1-H_2}_2 \le J(A^*) \norm{H_1-H_2}_2.
\end{align*}
Therefore, we can see that $\gammafn{2} \left(\calM_t,\norm{\cdot}_2\right) \le c J(A^*) \gammafn{2} \left(\widetilde{C}_{\calK},\norm{\cdot}_2\right)$ for a constant $c$.

We can now apply Theorem~\ref{thm:psd-sup} (Theorem 3.1 of \cite{krahmer14}) to $\calM$.
\begin{align}
&\prob \left(\sup_{M \in \calM} \absenc{\norm{M\eta}_2^2-\expec\left(\norm{M\eta}_2^2\right)} \ge c_1 P+q\right) \le 2\exp \left(-c_2\min\set{\frac{q^2}{F^2},\frac{q}{L}}\right). \label{eq:Mt-bnd-krah} 
\end{align}
where
\begin{align*}
P &= \gammafn{2} \left(\calM,\norm{\cdot}_2\right) \left(\gammafn{2} \left(\calM,\norm{\cdot}_2\right)+d_F(\calM)\right)+d_F(\calM)d_2(\calM),\\
F &= d_2(\calM)\left(\gammafn{2} \left(\calM,\norm{\cdot}_2\right)+d_F(\calM)\right) \text{ and, } L = d^2_2(\calM).
\end{align*}
Let us denote,
\begin{align*}
\overline{P}_2 &= c_1J^2(A^*) \gamma_2\left(\widetilde{C}_{\calK},\norm{\cdot}_2\right)\bigg(\gamma_2\left(\widetilde{C}_{\calK},\norm{\cdot}_2\right)+1\bigg)+J^2(A^*)\\
\overline{F}_2 &= c_1 J^2(A^*) \left(\gamma_2\left(\widetilde{C}_{\calK},\norm{\cdot}_2\right)+1\right) \text{ and, } \overline{L}_2 = J^2(A^*)\\
\end{align*}
for sufficiently large $c_1>1$. Then from \eqref{eq:Mt-bnd-krah}, 
\begin{align*}
&\prob \left(\sup_{M\in \calM_t} \norm{M\eta}_2^2 \le J^2(A^*)+c_2 \overline{P}_2+q\right) \ge 1-2\exp\left(-c_3 \min \set{\frac{q^2}{\overline{F}_2^2},\frac{q}{\overline{L}_2}}\right).
\end{align*}
Let $q = \overline{F}_2\sqrt{m}$ for $m \ge 1$. Since $\overline{F}_2 \ge \overline{L}_2$, we have $\min \set{\frac{q^2}{\overline{F}_2^2},\frac{q}{\overline{L}_2}} = \sqrt{m}$. Therefore, 
\begin{align*}
&\prob \left(\sup_{M\in \calM_t} \norm{M\eta}_2^2 \le J(A^*)+c_2 \overline{P}_2+\overline{F}_2\sqrt{m}\right) \ge 1-2\exp\left(-c_3 \sqrt{m}\right).
\end{align*}
Recall from the proof of Lemma \ref{lem:lwr-bnd-deltaA} that 
\begin{align*}
\gamma_2\left(\widetilde{C}_{\calK},\norm{\cdot}_2\right) \lesssim \gamma_2\left(\widetilde{C}_{\calK},\norm{\cdot}_F\right)\asymp w\left(\widetilde{C}_{\calK}\right) \lesssim \sqrt{k \log \left(\frac{n^2}{k}\right)}.
\end{align*}

Therefore letting $m=c\log^2 \left(\frac{1}{\delta}\right)$,
\begin{align*}
c_2 \overline{P}_2+\overline{F}_2\sqrt{m} & \le c' J^2(A^*) \Bigg(k\log \left(\frac{n^2}{k}\right)+\sqrt{k\log \left(\frac{n^2}{k}\right)}+\sqrt{m}+\sqrt{mk\log \left(\frac{n^2}{k}\right)}+1\Bigg)\\
&\le \overline{c} J^2(A^*) \Bigg(k\log \left(\frac{n^2}{k}\right)+\log \left(\frac{1}{\delta}\right)\sqrt{k\log \left(\frac{n^2}{k}\right)}\Bigg)\\
&\le \overline{c} J^2(A^*) k \log \left(\frac{n^2}{k}\right) \log \left(\frac{1}{\delta}\right) .
\end{align*}
 Therefore with probability at least $1-\delta$, $$\sup_{M \in \calM_t} \norm{M\eta}_2^2 \le \overline{c} J^2(A^*) k \log \left(\frac{n^2}{k}\right) \log \left(\frac{1}{\delta}\right)$$.  
 
 Note that this is for a fixed $t$. Taking union bound over $t \in [T]$ with an appropriate rescaling of $\delta$, we have that with probability at least $1-\delta$,
 \[
 \norm{\Delta_A Y_{\tilcalSC} D_{\tilcalSC}}_{2,\infty} \le c J(A^*) \sqrt{k \log \left(\frac{n^2}{k}\right) \log \left(\frac{T}{\delta}\right)} \norm{\Delta_A}_F.
  \]
  
Let $R=J(A^*) \sqrt{k \log \left(\frac{n^2}{k}\right) \log \left(\frac{T}{\delta}\right)}$. Combining the bounds on $\norm{\Delta_A Y_{\tilcalSC} D_{\tilcalSC}}_{2,\infty}$ and $\norm{\Delta_A Y_{\tilcalS} D_{\tilcalS}}_{2,\infty}$, we get,
 \begin{align*}
\absenc{\dotprod{\Delta_A Y D}{\Delta_U D}} & \le \norm{(\Delta_U D)_{\tilcalS}}_{2,1} 2r\sqrt{k} \norm{\Delta_A}_F+\left(3\norm{(\Delta_U D)_{\tilcalS}}_{2,1}+\frac{\overline{V}}{\overline{\lambda}} \norm{\Delta_A}_F\right) c R \norm{\Delta_A}_F \\
& \le c_1 \frac{\overline{V}}{\overline{\lambda}} R \norm{\Delta_A}_F^2+c_2 r R\sqrt{k}\norm{\Delta_A}_F \norm{(\Delta_U D)_{\tilcalS}}_{2,1}\\
& \le \widetilde{c}_1 \frac{\overline{V}}{\overline{\lambda}} R \norm{\Delta_A}_F^2+\widetilde{c}_2 r R\sqrt{k \absenc{\tilcalS}}\norm{\Delta_A}_F \norm{(\Delta_U D)_{\tilcalS}}_{F}.
\end{align*}
\end{proof}


\section{Detailed overview of related work}
\subsection{Robust regression under adversarial contamination}
Consider the supervised learning problem where there exists a distribution $\calD$ over $\calX \times \calY$ where the given $(x_i, y_i)_{i=1}^m \stackrel{i.i.d}{\sim} \calD$, the goal is to learn the relationship between $\calY$ and $\calX$. Specifically, let $y_i = x_i^\top \beta^*+\varepsilon_i$, where $\varepsilon_i$ is the noise and $x_i \in \matR^{d}$. Given $(x_i, y_i)_{i=1}^m$, estimating $\beta^*$ is a fundamental problem in machine learning and typically can be solved efficiently using methods such as ordinary least squares when the observations follow from $\calD$. However, the problem becomes more challenging when a subset of $(x_i, y_i)_{i=1}^m$ is contaminated by an adversary. Estimating $\beta^*$ in when a subset of $(x_i, y_i)_{i=1}^m$ is contaminated is known as robust regression under (adversarial) contamination. Robust regression under adversarial contamination in the i.i.d. setting has been studied under several formal contamination models.
\begin{enumerate}
\item {\bf Inlier-Outlier ($\mathcal{I}\cup \mathcal{S}$) model}:  the sample indices are partitioned as $\mathcal{I} \cup \mathcal{S}$ with $|\mathcal{S}| \le \epsilon m$. The adversary selects the outlier set (of indices) $\mathcal{S}$ oblivious to the data. Once $\calS$ is selected, the adversary can potentially observe the entire dataset (including the data in $\calI$) and decide on the corrupted values for the entries in $\calS$.  

\item {\bf Huber's contamination model}: each observation is drawn independently from a mixture distribution, hence with probability $1-\epsilon$ from $\calD$, and with probability $\epsilon$ from an arbitrary distribution.

\item {\bf Strong contamination model}:  an adversary observes the entire dataset and then selects up to an $\epsilon$-fraction of samples to corrupt arbitrarily. The adversary may adaptively choose corrupted samples after observing the data thus affecting the distributional properties of the overall dataset.
\end{enumerate}
We cover a representative sample of key results for each contamination model, though this list is not exhaustive. 
\paragraph{Inlier-Outlier Model ($\mathcal{I}\cup \mathcal{S}$).}
\begin{itemize}
\item \cite{gaiffas2022} study robust supervised learning with coordinate gradient descent under Lipschitz smooth losses and finite moment assumptions on covariates and labels. They compute robust gradient estimates using Median-of-Means (MOM) and Trimmed Mean estimators, and provide bounds on the generalization error. 
\item \cite{merad2023robust} consider high-dimensional linear models with an $\epsilon$ fraction of corrupted samples, under sparsity assumptions on $\beta^*$. Under finite $\ell_4$-norm assumptions on covariates and finite $\ell_2$-norm assumptions on labels, they show that robust gradient estimators enable consistent parameter recovery. For smooth losses, they apply multistage mirror descent; for non-smooth Lipschitz losses satisfying pseudo-linear minorization, they use multistage dual averaging. They further establish that the Trimmed Mean estimator is a robust gradient estimator for sparse regression under $\epsilon$-contamination.
  
\item \cite{dalalyan2019outlier} analyze sparse regression with adversarially corrupted labels and show that under restricted eigenvalue and incoherence conditions on the covariance matrix of the clean data, the $\ell_1$-penalized Huber M-estimator achieves $\tilde{O}(\sqrt{\frac{k}{m}}+\epsilon)$ when $m$ points are observed and $\beta^*$ is $k$-sparse. 
\end{itemize}

\paragraph{Huber's Contamination Model.}

\begin{itemize}
\item \cite{diakonikolas2019effi} show that for Huber's contamination model with $\epsilon$-contamination, no algorithm with statistical query access to the $\epsilon$-contaminated distribution, can approximate $\beta^*$ to a $\sqrt{\epsilon}$ error without using needing samples of order $\tilde{O}(d^2)$.
\item \cite{diakonikolas2023near} explore linear regression when $\calD = \nrml(0,I_d)$ and give an algorithm with sample complexity $\tilde{O}(d/\epsilon^2)$ with $\ell_2$ error of the order $O(\epsilon)$ in almost linear time by reducing the problem to a robust mean estimation problem. 
\item \cite{zhang2023robust} propose M-estimators for robust regression under Huber's contamination model. Assuming Lipschitz conditions and locally quadratic nature on the loss function, and specific assumptions on the covariates and the noise, they show that M-estimators can achieve $\tilde{O}(\sqrt{d/m+\epsilon})$.

\item \cite{diakonikolas2024robust} extend the ideas from \cite{diakonikolas2023near} to the setting when $\beta^*$ is $k$-sparse and $\norm{\beta^*}_2$ is bounded and show that with $\tilde{O}\left(\frac{k^2\log d}{\epsilon^2}\right)$ many samples you can estimate $\beta^*$ up to $O(\epsilon)$ error in polynomial time.
\end{itemize}

\paragraph{Strong Contamination Model.}
\begin{itemize}

\item \cite{liu2019highM} introduce the Robust Descent Condition (RDC) and shows that gradient estimators satisfying RDC yield statistical guarantees for recovery of $\beta^*$ when combined with Iterative Hard Thresholding. For linear regression with a $k$-sparse parameter, with $\tilde{O}(1/\sqrt{k} \log d)$-fraction of the data contaminated, they show that a version of trimmed gradient estimator recover $\beta^*$ upto $\tilde{O}(\sqrt{k} (\epsilon\log (m d)+ \sqrt{\log d/m}$ (in $\ell_2$ error) when $m = \Omega(k\log d)$. 

\item \cite{liu2019high} propose a Robust Sparse Gradient Estimator (RGSE), computed using a convex relaxation of robust PCA followed by hard thresholding and filtering. For sparse regression where the covariance matrix is identity they show how to recover $\beta^*$ with minimax optimal rate of $\tilde{O}(\epsilon)$. When the covariance matrix is sparse but unknown, they show how to achieve $\tilde{O}(\epsilon)$. 

\item \cite{sasai2022robust} propose a weighted $\ell_1$-penalized Huber M-estimator under bounded fourth-moment covariates and i.i.d. noise, where weights are computed via convex relaxation of robust PCA followed by truncation. The extension in \cite{sasai2024outlier} consider sub-Gaussian covariates satisfying restricted eigenvalue conditions. 

\item \cite{pensia2021robust} consider arbitrary contamination of at most $\epsilon n$ samples under mean-zero, identity-covariance, $(4,2)$-hypercontractive covariates and independent mean-zero noise, and analyzes Huber regression, Least Trimmed Squares (LTS), and Least Absolute Deviation (LAD) estimators. Under stability conditions, they show that Huber regression achieves $\tilde{O}(\sqrt{d/m}+\sqrt{\epsilon})$. They also establish $\ell_2$ error guarantees for LTS and LAD under subset strong convexity and subset strong smoothness conditions. 
\item \cite{bhatia2015robust} establish recovery guarantees for Least Trimmed Squares in high-dimensional settings under adversarial corruption. They provide a thresholding based gradient descent algorithm that converges to an error of $\le \epsilon$ in $\tilde{O}\left(\log \left(\frac{1}{\epsilon \sqrt{m}}\right)\right)$ iterations when the corruption is bounded and the normalized covariates satisfy subset strong convexity and subset strong smoothness \footnote{The follow-up work \citep{Bhatia2017Consistent} extends this to establish consistency guarantees albeit under an oblivious adversary.}. 
\item The work of \cite{karmalkar2019} analyzes Least Absolute Deviation regression under adversarial contamination. They establish $\ell_2$ error guarantees when $\beta^*$ is $k$-sparse and the covariates come from a Gaussian distribution. They also establish error guarantees for the setting when $\beta^*$ is dense as well. 
\item \cite{chen13Robust} consider the setting where both covariates and labels may be corrupted, including distributed corruption across entries of individual samples. They propose algorithms based on thresholding, the Dantzig selector, Lasso regression, and thresholding regression, and show that the three methods modified with a trimmed variant of inner product, recover $\beta^*$ to small error. 
\item \cite{sasai2020robust} study robust sparse regression using an $\ell_1$-penalized formulation under adversarial contamination and uses specific properties of the Huber function to establish recovery guarantees similar to that of~\cite{dalalyan2019outlier}. They show that weighted-$\ell_2$ error (weighted with the covariance matrix) is of the order of $\tilde{O}(\sqrt{k/m}+\epsilon)$ when the covariance matrix satisfies restricted eignvalue conditions.
\item \cite{cherapanamjeri2020} develop robust estimators using semidefinite programming to compute gradient estimates under hypercontractive or sub-Gaussian covariates. Under sub-Gaussian assumptions, they achieve  sample complexity and error up to logarithmic factors of the mini-max optimal and optimal error and sample complexity optimal up to logarithmic factors, under hypercontractivity assumptions.  
\item \cite{RobustLin2025} propose a bi-convex optimization formulation, proves the existence of an integral KKT point that is locally optimal, and provides an algorithm with provable error guarantees. For $k$-sparse regression problem, they establish necessary conditions for exact support recovery, and $\tilde{O}(\sqrt{k}\epsilon)$ $\ell_2$-error.

\item \cite{bakshi2020robust} study linear regression under $(c,k)$-hypercontractive (for some parameters $c,k$) distributions with an $\epsilon$-fraction of adversarial corruption. They encode the problem as a polynomial system and apply a sum-of-squares relaxation to obtain computationally efficient estimators that achieve information-theoretically optimal convergence rates establishing tight upper and lower bounds. 
\item \cite{minsker2023robust} consider high-dimensional linear regression in which an $\epsilon$-fraction of the measurements are adversarially corrupted, with the adversary having full knowledge of the data and the underlying distribution. The covariates are assumed to be sub-Gaussian, while the additive noise is allowed to be dense and heavy-tailed. They establish minimax lower bounds that depend on both the contamination level and the tail behavior of the noise. They propose algorithms based on Square-Root Slope estimator that achieves error that is upto logarithmic factors of the minimax optimal rate for the $\epsilon$. 
\end{itemize}

\subsection{Robust learning of linear dynamical systems}\label{app-sec:robust-lds-lit-review}
While robust regression under adversarial contamination has been extensively studied in the i.i.d. setting, comparatively fewer works address this problem for linear dynamical systems and higher-order VAR models. Existing works primarily focus on the following settings.
%
%
%
\subsubsection{Heavy-tailed covariates or process noise}
\cite{kanakeri2025outlier} study the problem of learning a linear dynamical system from $N$ independent trajectories in the presence of heavy-tailed noise. In their work, the data is generated through a linear dynamical system $x_{t+1}=A^*x_t+\eta_{t+1}$ with $\eta_{t+1}$ coming from a zero-mean i.i.d. heavy-tailed distribution with bounded second and fourth moments. The authors partition trajectories into buckets, estimate the system matrix locally within each bucket, and then aggregate the estimates by taking their median under the Frobenius norm. They show robustness to heavy-tailed noise  with non-asymptotic $O(1/\sqrt{N})$ estimation error rate w.r.t the operator norm. This multi-trajectory setting does not require any stability assumption on $A^*$.

\cite{wong2019lasso} explores $\beta$-mixing with heavy-tailed covariates. They consider a model where the goal is to predict $x_t$ using $(x_{t-1}, x_{t-2}, \dots, x_{t-d})$, using a linear predictor under the condition that the dynamic process for $x_t$ is stationary. The paper assumes the covariates $x_t$ follow a sub-Weibull distribution allowing for heavier-tails. They propose a least-squares estimator with $\ell_1$-penalty and show that when the underlying true predictor matrix is sparse and $(x_t)_{t>1}$ are centered, the estimator can recover the underlying predictor matrix up to an error dependent on the weight given to the penalty and restricted eigenvalue conditions. 

\cite{halder2022robust} examines the problem of learning a VAR(d), i.e. $x_t = \sum_{i=1}^d A_i x_{t-i}+\eta_t$, where $\eta_t$ can be heavy-tailed. They assume $\eta_t$ has conditionally symmetric density (condition on the history of the observations) and $x_t$ satisfies either $\beta$-mixing or Markov chain conditions (as well as $x_t$ is stationary) and show that a variant of Lasso allows for consistency under mixing as well as Markov chain conditions.

\cite{qiu15robust} consider a stationary VAR model $x_{t+1} =A^* x_{t} +\eta_{t+1}$ where covariates and process noise follow a joint elliptical distribution, to capture heavy-tailed behavior. They propose a quantile-based robust variant of the Dantzig estimator and show that the recovery rates for $A^*$ matches the parametric rates under the entry-wise max norm. 

\cite{dijk2025tail} study high-dimensional heavy-tailed time series using a factor-adjusted VAR model. They consider a VAR(d) model, $x_{t}=\sum_{i=1}^d A_i x_{t-i}+\eta_t$ where $\eta_t$ is zero mean with uncorrelated (potentially heavy-tailed) entries. The learner observes a $y_t=x_t+\nu_t$ where $\nu_t$ is a factor-adjusted component. The authors apply element-wise truncation and a multi-stage parameter estimation procedure using lasso, under the assumption of finite $(2+2\epsilon)$-th moments for some $\epsilon \in (0,1)$. 

\cite{Lu2025Robust} analyze dynamic spatial autoregressive models under heavy-tailed distributions, proposing a constrained Yule-Walker estimation framework with statistical recovery guarantees. They consider a spacial autoregressive model where $x_t = A_0 W x_t+A_1 x_{t-1} + A_1 W x_{t-1}+\eta_{t}$, with $W$ a weighting matrix capturing the relationships in the spatial domain. The work proposes a robust estimator for covariance estimation under this heavy-tailed setting.

\cite{wang2022rate} explores system recovery from a VAR(d) model, $x_t = \sum_{i=1}^d A_i x_{t-i} +\eta_t$ where $\eta_t$ is a serially uncorrelated white noise innovation where the white noise can be heavy-tailed. They propose a constrained Yule-Walker estimator that is robust against heavy-tailed noise with bounded $2+2\epsilon$-th moments for any $\epsilon>0$. The paper establishes minimax optimal rates by establishing minimax lower bounds matching their recovery guarantees. 

\cite{wang2024robust} investigate least absolute deviation regression for high-dimensional heavy-tailed time series with finite $\alpha$-moments for $\alpha \in (1,2]$. They consider VAR(d) of the form, $x_{t}=\sum_{i=1}^d A_i x_{t-i}+\eta_t$, where $(\eta_t)_{t \ge 1}$ is i.i.d. and has zero mean and finite $\alpha$-moments for $\alpha \in (1,2]$. The paper proposes a truncated estimator for which they show that the population risk for the estimator converges to the optimal population risk, as $T$ increases.

\subsection{Sparse process noise}
The works of \cite{kim2026necessity} and \cite{kim2026huber} consider a linearly parameterized (potentially) non-linear system given by $x_{t+1} = A^* \phi(x_t)+\eta_t$ where $\phi: \matR^n \rightarrow \matR^m$ is a non-linear mapping and $\eta_t$ is analogous to the process noise in \eqref{eq:dyn_process}. They consider scenarios where $\eta_t$ is i.i.d. Gaussian or sub-Gaussian as well as where $\eta_t$ comes from a sparse adversarial process (sub-Gaussian but not necessarily zero mean or independent). \cite{kim2026necessity} propose a two-stage algorithm for recovery of $A^*$. In the first stage, they solve a Least Absolute Deviation problem independently for each coordinate of the autoregressive vectors to obtain candidate rows of the system matrix $A$. In the second stage, they apply a filtering step based on the initial estimates and then perform Least Squares on the filtered data, recovering the rows of $A$ up to a small error. The follow-up work by~\cite{kim2026huber} extends this analysis and shows that under the same assumptions, Huber estimator recovers the rows of $A^*$ up-to a small error. In both works, they show that for symmetric process noise, they can recover the rows of $A^*$ with a rate of $O(1/\sqrt{T})$ when the system is linear.

\cite{kim2025macroeconomic} explores robust learning of a VAR(d) model, i.e., $x_{t} = \sum_{i=1}^d A_i x_{t-i}+\eta_t+v_t$ where $\eta_t$ is the process noise and $v_t$ is an innovative outlier. They show that least squares with $\ell_1$-penalty on each $v_t$, can be used to recover the matrices $(A_i)_{i=1}^d$ (by using an equivalence with Huber estimator and applying iteratively reweighted least squares) under the assumption that $\eta_t+v_t$ is sampled from an i.i.d. distribution with bounded variance (and additional structural assumptions). Their guarantees are in the form of asymptotic normality of $\matrixenc{A_1 & A_2 & \dots & A_d}$. 
 
 \cite{farahmand2011doubly} explores the related problem of doubly robust smoothing of dynamical systems where the goal is to learn the dynamical system and the corruption, given system dynamics and under distributional assumptions. In this setting, a learner observes a $y_t$ generated through $x_t = A_t x_{t-1}+\eta(x)_t+o(x)_t$ and $y_t=B_t x_{t}+\eta(y)_t+o(y)_t$ where $o(x)_t, o(y)_t$ are outliers and $\eta(x)_t, \eta(y)_t$ are noise coming from zero mean distributions with known covariance. The learner is also assumed to know $(A_t,B_t)_{t=1}^T$ and the noise covariance. The goal is to learn $(x_t,o(x)_t, o(y)_t)_{t=1}^T$ assuming that the stacked vector of outliers is sparse. They explore a least-squares estimator with $\ell_1$-penalty and introduces a provably convergent algorithm via coordinate descent.
%
%

%
%
%
\subsection{Learning from corrupted states}

In the work of \cite{kanakeri2025outlier} discussed earlier, the authors further prove (for the strong contamination adversarial model) that if an $\epsilon$-fraction of the $N$ trajectories are corrupted, the estimator still recovers the system matrix up to a small error (w.r.t operator norm). Their theoretical result for this adversarial setting, and even their algorithm does not apply to the single trajectory setting considered in our work. 

The work of \cite{wong2017lasso}  considers a VAR(d) model, $x_{t}=\sum_{i=1}^{d} A_i x_{t-i}+\eta_{t}$, where $\eta_t$'s are independent centered Gaussians. Under the assumption that $\rho(A_i)<1$ for all $i \in [d]$, and assuming that the learner observes $\tilde{x}_t = x_t+w_t$ where $w_t$ comes from an independent zero-mean Gaussian distribution with known covariance, they establish non-asymptotic recovery guarantees for the lasso estimator with $\ell_1$ penalty.

\cite{Bhatia0KK16} consider a 1-dimensional auto-regressive process with degree $d$, i.e. $x_t=\sum_{i=1}^d A_i x_{t-i}+\eta_t$ where $(x_{t-i})_{i=0}^d,\eta_t \in \matR$. They consider an outlier model related to our setting where the learner observes $\tilde{x}_t=x_t+\theta_t$ where $\theta = \matrixenc{\theta_1 & \theta_2 & \dots & \theta_T}$ is $k$-sparse. By transforming the AR(d) into a form of $y=X^\top w+\eta$ (where $y=\matrixenc{x_1 & x_2 & \dots & x_n}$ and $X,w,\eta$ defined appropriately to recover the AR(d)), under subset strong convexity and subset strong smoothness of the corrupted covariate matrix, and assuming the outliers are independent of noise (which is not stated explicitly but seems to be needed in the proof, e.g., within Appendix B.6) they propose a block-sparse hard thresholding algorithm which is robust against additive outliers and provably consistent. 

A line of work has considered the setting where $\tilde{x}_t$ is formed by independently sub-sampling the entries of $x_t$ at each time point $t$ (and setting other entries to $0$) for $(x_t)_{t=1}^T$ generated from \eqref{eq:dyn_process}. When the system is stationary the works of \cite{rao2016system, milind2017estimation} employ covariance estimation to recover $A^*$ under different structural assumptions such as sparsity, and low-rankness. The works of \cite{JalaliWillett18, wong2017lasso} explore lasso-type estimators with an additional term to correct for the bias (which makes the problem potentially non-convex), and provably estimate $A^*$ under sparsity and stability assumptions. When $A^*$ is sparse and strictly stable, \cite{castro_missing25} establish minimax optimal recovery guarantees with respect to the entry-wise $\max$ norm, for the Dantzig selector. \cite{Rao2017_compmeas} consider the setting where $\tilde{x}_t$ is formed by taking compressive linear measurements of $x_t$ (at each $t$), and study the limits of covariance-based estimation.


\section{Robust estimators based on least-trimmed squares} \label{app:rob_estimators}

In this section, we provide a detailed overview of the estimators introduced in Section~\ref{sec:rob_estimators}.

\subsection{Sphere relaxation}\label{sec:sphere}

Consider the optimization problem~\eqref{eq:quad-form} and the two constraints $z \in \set{0,1}^{T}$ and $\vecOne_T^\top z = T-s$. A simple calculation shows that when $z \in \set{0,1}^{T}$, $\vecOne_T^\top z = T-s$ is equivalent to $\norm{z}_2^2 = z^\top z = T-s$. Therefore, we can see that optimization problem~\eqref{eq:quad-form} is equivalent to,  
\begin{align*}
& \min_{A \in \calA,z \in \set{0,1}^{T}} z^\top Q(A) z \quad \text{s.t.} \quad \norm{z}_2^2 = T-s, \quad \vecOne_T^\top z = T-s 
\end{align*}
where the vector $z$ is a binary vector on the surface of a sphere of radius $\sqrt{T-s}$ centered at the origin. Given this, removing the constraint $z \in \set{0,1}^{T+1}$, i.e. letting $z \in \matR^{T+1}$, results in,
\begin{align*}
& \min_{A \in \calA,z \in \matR^T} z^\top Q(A) z \quad \text{s.t.} \quad \norm{z}_2^2 = T-s, \quad  \vecOne_T^\top z = T-s. 
\end{align*}

\paragraph{Given $A$, estimating $z$.} Now consider the scenario where we have been given $A$ and we are tasked with finding $z$. Note that when $A$ is fixed, i.e. when $Q(A)$ is fixed, the optimization problem~\eqref{eq:sphere} translates to solving a trust-region problem which has polynomial time algorithms \citep{sorenson1983}.  However, since $z \in \matR^{T}$ is not necessarily a binary vector, we project it on to $\set{0,1}^{T}$ via the following rounding method.
\begin{enumerate}
\item \label{stp:extrct} Extract the subset of $\est{S} \subset \set{1,\dots,T}$ corresponding to the $T-s$ entries of $\set{z_1,\dots,z_T}$ with the largest magnitudes (ignoring the sign).
\item For $t \in \est{S}$ set $z_t=1$ and for $t \not \in \est{S}$ set $z_t= 0$.
\end{enumerate}

\paragraph{Given $z$, estimating $A$.} Given a binary vector $z$ (if $z$ is not binary we can use the rounding discussed above), we use \eqref{eq:quad-form} as stated in step~\ref{stp:A-calc} of Algorithm~\ref{alg:robust-estimator} to solve a $\calA$-constrained weighted least squares problem to derive an estimate $A$.

%
%
\subsection{Semi-definite program (SDP) relaxation}\label{sec:SDP}

Since $z^\top Q(A) z = \dotprod{Q(A)}{zz^\top}$ we can replace $zz^\top$ with the p.s.d matrix $Y$. Note that when $Y=zz^\top$, $Y_{tt} = z_t^2$ and since $z_t \in \{0,1\}$, $Y_{tt} = z_t$ which gives us $\diag(Y)=z$. We apply the dual of Shor's relaxation \citep{Boyd1997} to derive the following.
\begin{align*}
& \min_{A \in \calA,Y \succeq 0, z \in [0,1]^T} \langle Q(A), Y \rangle  \\
& \text{s.t.}\; {\diag(Y) = z}, \ \vecOne_T^\top z = T-s \text{ and } \matrixenc{1 & z^\top \\ z & Y} \succeq 0.
\end{align*}
Note that for a fixed $A$, \eqref{eq:SDP} is a SDP and can be solved efficiently to obtain $Y, z$.

\paragraph{Given $A$, estimating $z$.} Let $Y, z$ be the solution of \eqref{eq:SDP} when $A$ is fixed. Since $z \in [0,1]^{T}$ is not necessarily a binary vector, we can project it on to $\set{0,1}^{T}$ as described above.

\paragraph{Given $z$, estimating $A$.} Given a binary vector $z$ (if $z$ is not binary we can use the rounding discussed above), we use \eqref{eq:quad-form} as stated in step~\ref{stp:A-calc} of  Algorithm~\ref{alg:robust-estimator} to solve a $\calA$-constrained weighted least squares problem to derive an estimate $A$.

\subsection{Bi-convex relaxation}\label{sec:biconvex}

Note that while \eqref{eq:comb-opt} captures the problem of filtering the contaminants and learning a linear dynamical process over the uncontaminated observations, the formulation involves a product of $z_t z_{t+1}$ (for $t \in \set{0,1,\dots,T-1}$) in the objective function. In this section, we will introduce a set of new variables to remove the multiplicative dependencies of $z_t, z_{t+1}$. 

Let $b_t = z_t z_{t+1}$ for each $t \in \set{1,\dots,T-1}$. Then note that the objective of \eqref{eq:comb-opt} can be re-written as $\sum_{t=1}^{T-1} b_t \norm{\tildex_{t+1}-A\tildex_t}_2^2$ by introducing an additional constraint $b_t = z_t z_{t+1}$ for $t \in \set{1,\dots,T-1}$. However, while this removes the multiplicative dependencies between $z_t$ and $z_{t+1}$ in the objective, we are still left with the same dependencies in the constraints. We can remove these dependencies using the following key observation.
\begin{remark}
When $z_t \in \set{0,1}$ for $t \in \set{1,\dots, T}$ and $b_t \in \set{0,1}$ for $t \in \set{1,\dots, T-1}$, the constraint $b_t = z_t  z_{t+1}$ is equivalent to the set of constraints $b_t \le z_t, b_t \le z_{t+1}$ and $b_t \ge z_t+z_{t+1}-1$. To see this we can consider the possible values of $z_t$ and $z_{t+1}$.
\begin{itemize}
\item {\bf Case 1: ($z_t = z_{t+1}=1$)} The constraint $b_t \ge z_t+z_{t+1}-1=1+1-1=1$ implies $b_t = 1 = z_t \cdot z_{t+1}$.

\item {\bf Case 2: ($z_t=0$ or $z_{t+1}=0$)} The two constraints $b_t \le z_t$ and $b_t \le z_{t+1}$ imply $b_t \le \min\set{z_t,z_{t+1}} = 0$ which implies $b_t = 0 = z_t z_{t+1}$.
\end{itemize}
\end{remark}
Therefore, we can use this to derive the optimization problem,
\begin{align*}
& \min_{\substack{A \in \calA \\ z \in \set{0,1}^T \\ b \in \set{0,1}^{T-1}}} \sum_{t=1}^{T-1} b_t \norm{\tildex_{t+1}-A\tildex_t}_2^2 \\
& \text{s.t.}\; \sum_{t=1}^T z_t = T-s \\
& z_t+z_{t+1}-1 \le b_t \le \min \set{z_t,z_{t+1}} \;\forall t\in [T-1].
\end{align*}
Note that the objective is biconvex. Relaxing the binary constraints on $z_t$ and $b_t$ to the interval $[0,1]$ results in the optimization problem
\begin{align*}
& \min_{\substack{A \in \calA \\ z \in [0,1]^T \\ b \in [0,1]^{T-1}}} \sum_{t=1}^{T-1} b_t \norm{\tildex_{t+1}-A\tildex_t}_2^2\\
& \text{s.t.}\; \sum_{t=1}^T z_t = T-s  \nonumber\\
& z_t+z_{t+1}-1 \le b_t \le \min \set{z_t,z_{t+1}} \; \forall t\in [T-1] \nonumber
\end{align*}
which has convex constraints (since $\calA$ is convex by assumption).

\paragraph{Given $A$, estimating $(z_t)_{t=1}^T$ and $(b_t)_{t=1}^{T-1}$.} When $A$ is fixed note that the optimization problem~\eqref{eq:biconvex} translates to a convex program with a linear objective. Solving this gives us $(z_t)_{t=1}^{T}$ and $(b_t)_{t=1}^{T-1}$ such that $z_t \in [0,1]$ and $b_t \in [0,1]$. In order to project $z_t, b_t$ on to $\set{0,1}$, we consider the following rounding method.
\begin{enumerate}
\item Extract the subset $\est{S} \subset \set{1,\dots,T}$ corresponding to the $T-s$ entries of $\set{z_1,\dots,z_T}$ with the largest magnitude.

\item \label{stp:set-val} Set the values at $t\in S$ to $1$ and set all other entries ($t \not \in S$) to $0$.

\item For all $t \in [T-1]$, set $b_t = z_t z_{t+1}$ where $z_t, z_{t+1}$ come from step~\ref{stp:set-val}.
\end{enumerate}

\paragraph{Given $z$, estimating $A$.} Given a binary vector $z$ (if $z$ is not binary we can use the rounding discussed above), we use \eqref{eq:quad-form} as stated in step~\ref{stp:A-calc} of the Algorithm~\ref{alg:robust-estimator} to solve a $\calA$-constrained weighted least squares problem to derive an estimate $A$.

%
%
\subsection{Convexifying \texorpdfstring{$Q(A)$}{Q(A)} to estimate \texorpdfstring{$z$}{z}} \label{sec:convexQA}

Assume $A$ is fixed and the goal is to estimate $z$. Note that $Q(A)$ is a 1-banded matrix and is not necessarily positive semi-definite (p.s.d). However, since $\vecOne_T^\top z = T-s$, when $z \in \matR^{T}$, the objective of the optimization problem~\eqref{eq:quad-form} can be written as $z^\top \left(Q(A)+\alpha(A) \cdot I_{T}\right)z$ (where $\alpha(A)$ is some parameter). To see this, we can simply calculate the addition to the objective $z^\top \left(\alpha(A) \cdot I_{T}\right)z = \alpha \cdot z^\top z =\alpha(A) \cdot (T-s)$ which is a fixed amount when $\alpha(A)$ is fixed. Therefore, this shows that 
by choosing $\alpha(A)$ to be suitably large, we can ensure that $\tilde{Q}(A) = Q(A)+\alpha(A) I_T$ is a p.s.d matrix. Let $\alpha(A) = \min\set{0, -\lambda_{min}(Q(A))}$ where $\lambda_{min}(Q(A))$ is the smallest eigenvalue of $Q(A)$. This gives us the optimization formulation
\begin{align*}
& \min_{z\in \set{0,1}^{T}} z^\top \tilde{Q}(A) z \quad \text{s.t.} \quad  \vecOne_T^\top z = T-s.
\end{align*}
By relaxing $\set{0,1}^T$ to $[0,1]^{T}$ we obtain
\begin{align*}
& \min_{z \in [0,1]^{T}} z^\top \tilde{Q}(A) z \quad \text{s.t.} \quad \vecOne_T^\top z = T-s
\end{align*}
which is a convex program w.r.t $z$ (for a fixed $A$).

\paragraph{Given $A$, estimating $z$.} When $A$ is fixed note that $\tilde{Q}(A)$ is a fixed p.s.d matrix. Therefore, we can use standard solvers (e.g., interior point methods) to find an estimate $z$. However, since $z \in [0,1]^{T}$, this does not guarantee a binary vector. In order to project the vector $z$ on to $\set{0,1}^{T}$, we consider the following rounding method.
\begin{enumerate}
\item \label{stp:extrct-quad} Extract the subset $\est{S} \subset \set{1,\dots,T}$ corresponding to the $T-s$ entries of $\set{z_1,\dots,z_T}$ with the largest magnitude.
\item For $t \in S$ set $z_t=1$ and for $t \not \in S$ set $z_t= 0$.
\end{enumerate}

\paragraph{Given $z$, estimating $A$.} Given a binary vector $z$ (if $z$ is not binary we can use the rounding discussed in the previous chapter to round it to get a binary vector), we use \eqref{eq:quad-form} as stated in step~\ref{stp:A-calc} of the Algorithm~\ref{alg:robust-estimator} to solve an $\calA$-constrained weighted least-squares problem to derive an estimate $A$.


\section{Additional experiments}\label{appsec:additional-exp}

\subsection{Overview of the implementation}

The following is a detailed overview of the techniques used to implement each estimator in the experiments. Note that in some cases use approximation methods due to the computational cost of exact solvers. 

\paragraph{Sphere relaxation.} Note that the sphere relaxation is a variant of the Trust Region Subproblem. To see this, let $z = \frac{T-s}{T}\vecOne_T+w$ where $\dotprod{w}{\vecOne_T}=0$. This gives 
\begin{align*}
&\vecOne_T^\top z = T-s \iff \vecOne_T^\top w = 0\\
& \norm{z}_2^2 = T-s \iff \frac{(T-s)^2}{T}+\norm{w}_2^2 = T-s \iff \norm{w}_2^2 = \frac{s(T-s)}{T}\\
& z^\top Q(A) z = w^\top Q(A) w+ 2\left(\frac{T-s}{T} \right)\vecOne_T^\top Q(A) w+c,
\end{align*}
where $c$ is independent of $w$.

Therefore for fixed $A$, we obtain a Trust Region Subproblem where the goal is to find a $w$ orthogonal to $\vecOne_T$, which can be solved using the Mor\'{e}-Sorensen algorithm~\citep{sorenson1983}. Our implementation solves a tridiagonal variant of this subproblem using Scipy's \emph{solve\_banded}.

\paragraph{SDP relaxation.} Consider our SDP relaxation,
\begin{align*}
& \min_{A \in \calA,Y \succeq 0, z \in [0,1]^T} \langle Q(A), Y \rangle  \\
& \text{s.t.}\; {\diag(Y) = z}, \vecOne_T^\top z = T-s \text{ and } \matrixenc{1 & z^\top \\ z & Y} \succeq 0.
\end{align*}
Let $h = 2z-\vecOne_T$. Then $\vecOne_T^\top h = T-2s$. 
Define $X = \matrixenc{1 & h^\top \\ h & H}$ so that
\[
\matrixenc{1 & z^\top \\ z & Y} = LXL^\top
\]
where $L = \matrixenc{1 & 0 \\ \frac{1}{2}\vecOne_T & \frac{1}{2}I_T}$. Since $L$ is invertible, $X \succeq 0$ as well. Moreover, since $\diag(Y)=z$, and since $$Y_{tt} = \frac{1}{4} (1+2h_t+H_{tt}) = z_{t} = \frac{1}{2}(1+h_t)$$ we obtain $\diag(H) = \vecOne_T$. Furthermore, $\dotprod{Q(A)}{Y} = \dotprod{C}{X}+\text{const}$ for a matrix $C$ depending on $Q(A)$. Hence, the feasible set is the elliptope intersected with the single linear constraint $\vecOne_T^\top h = T-2s$. The map $(z,Y)\leftrightarrow X$ is a bijection between the feasible sets that preserves the objective up to a constant, so the optimal values agree and optimal solutions correspond one-to-one. Given this, we use the Burer-Monteiro \citep{Burer2003, boumal2018nonconvex} to solve for $z$. 

\paragraph{Bi-convex relaxation.} For the bi-convex relaxation, when $A$ is fixed, we solve for $z$ using \emph{cvxpy}. 

\paragraph{Convexifying $Q(A)$ to estimate $z$.} Here, we solve the resulting convex problem for $z$ directly using \emph{cvxpy}. 

\paragraph{$\calA$-constrained weighted least squares.} For \texttt{Sphere-AM}, \texttt{SDP-AM}, \texttt{Biconvex-AM}, and \texttt{ConvexQ(A)-AM}, when $z$ is fixed, we solve for $A$ using FISTA \citep{beck2009fast} with a projection on to $\norm{\cdot}_{1,1}$ ball (where the projection is computed in $O(n^2\log n)$ by a sorting-based algorithm \citep{duchi2008efficient}). 

\paragraph{Least squares with $\ell_1/\ell_2$ block-sparse penalization.} Let $\rho_t(A)=D_t(\tilde x_{t+1}-A\tilde x_t)$. When $A$ is fixed, the objective separates across columns as $\sum_t\big[\tfrac12\|\rho_t-u_t\|^2+\lambda\|u_t\|_2\big]$, and each term is minimized by block soft-thresholding, $u_t=\Big(1-\frac{\lambda}{\|\rho_t\|_2}\Big)_+\rho_t$. When $U$ is fixed, the penalty is a constant, and the problem reduces to weighted least squares,
\[
A=\arg\min_{A\in\mathcal A}\ \sum_{t:D_t=1}\big\|(\tilde x_{t+1}-u_t)-A\tilde x_t\big\|_2^2.
\]
With $\calA$ set to the $\norm{\cdot}_{1,1}$ ball, we again solve for $A$ using FISTA algorithm \citep{beck2009fast} with a projection on to $\norm{\cdot}_{1,1}$ ball (where the projection is computed in $O(n^2\log n)$ by a sorting-based algorithm \citep{duchi2008efficient}).

\paragraph{Least squares with block-sparse hard constraints.} When $A$ is fixed, solving for $U$ involves selecting the $2s$ columns of $U$ with the largest $\ell_2$ norm. With $U$ fixed this translates to $\calA$-constrained weighted-least-squares.

\subsection{Additional results on synthetic experiments}

\paragraph{Contamination models.} In this set of experiments we consider four different contamination models. %
\begin{enumerate}
\item {\bf Random heavy-tailed}:  outliers drawn from a Cauchy distribution with location and scale both set to $10$ are added to indices selected uniformly at random (with probability depending on $s$). 
\item {\bf Block burst}:  equally spaced blocks of five contiguous entries are contaminated with additive errors drawn from a Cauchy distribution with location and scale both set to $10$.    
\item {\bf Sign flip}:  data points are selected uniformly at random (with probability depending on $s$), and their signs are flipped (i.e., they are multiplied by $-1$).      
\item {\bf High-leverage}:  the $s$ entries with the largest $\ell_2$ norm are contaminated by adding Gaussian noise with standard deviation $8$.
\end{enumerate}

\subsubsection{Block burst contamination}

\begin{figure}[htbp]
    \centering
    \begin{subfigure}[b]{0.32\textwidth}
        \centering
        \includegraphics[width=\linewidth]{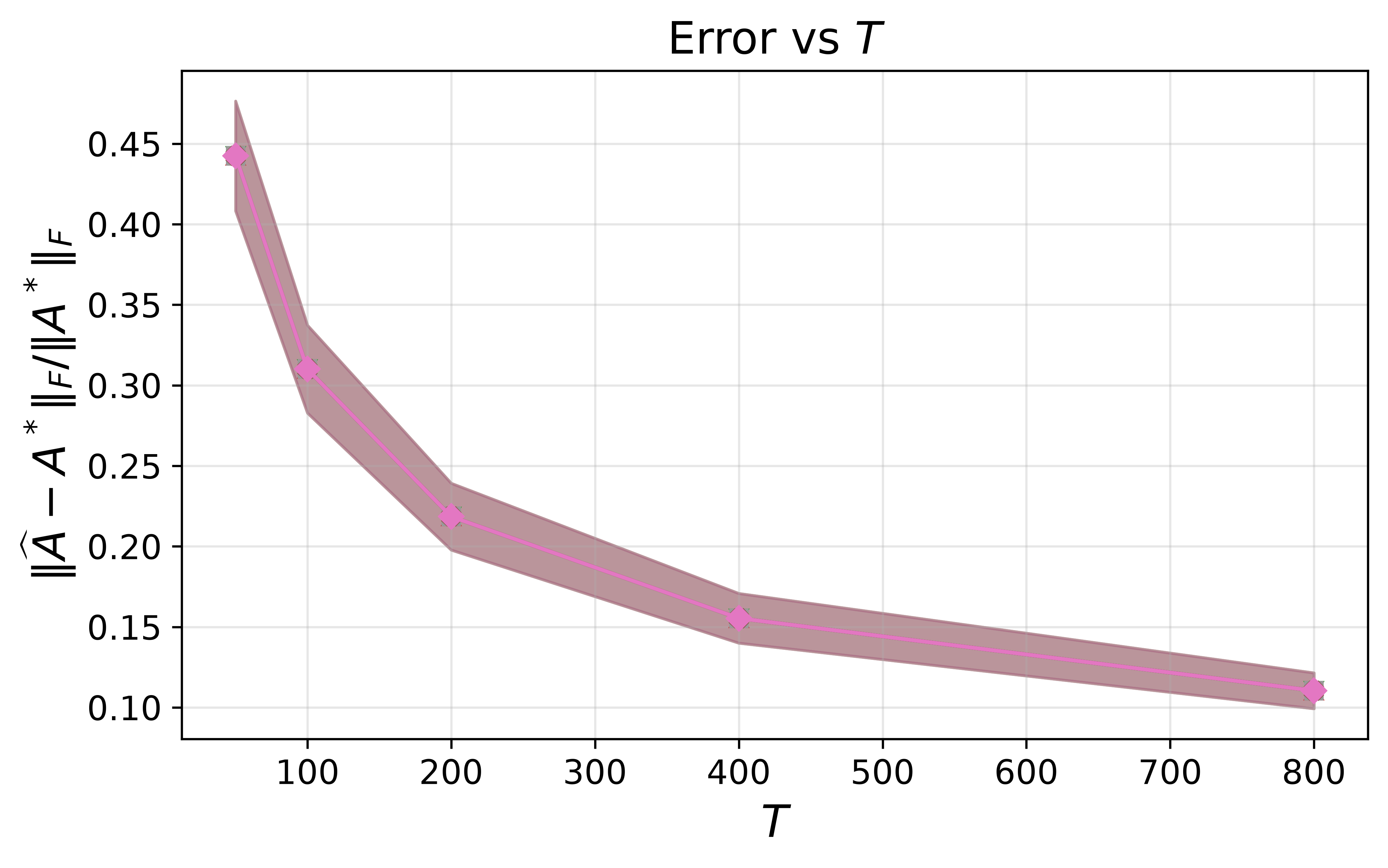}
        \caption{$\epsilon = 0$}
        \label{fig:T_block_burst_0}
    \end{subfigure}
    \hfill
    \begin{subfigure}[b]{0.32\textwidth}
        \centering
        \includegraphics[width=\linewidth]{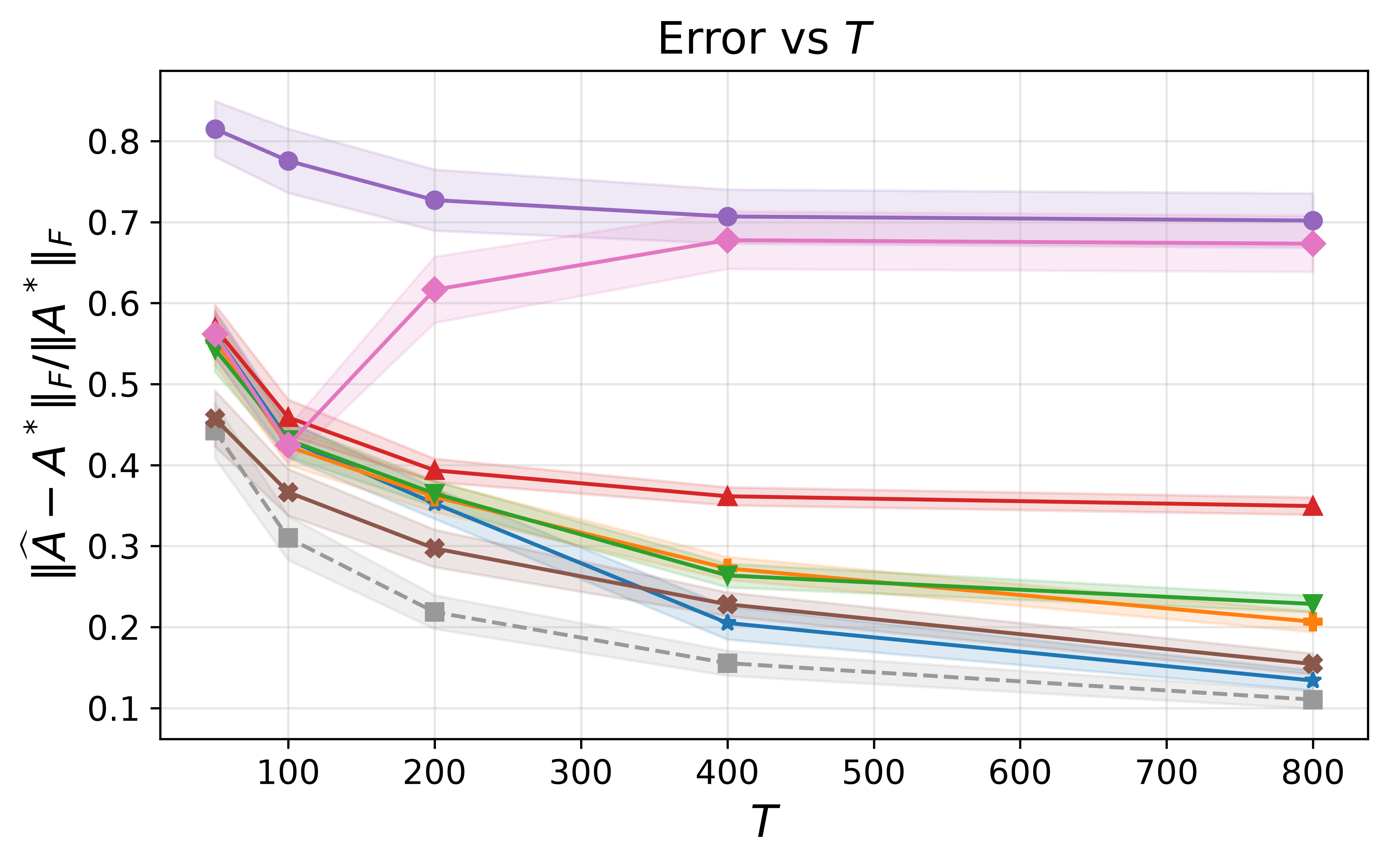}
        \caption{$\epsilon = 0.02$}
        \label{fig:T_block_burst_002}
    \end{subfigure}
    \hfill
    \begin{subfigure}[b]{0.32\textwidth}
        \centering
        \includegraphics[width=\linewidth]{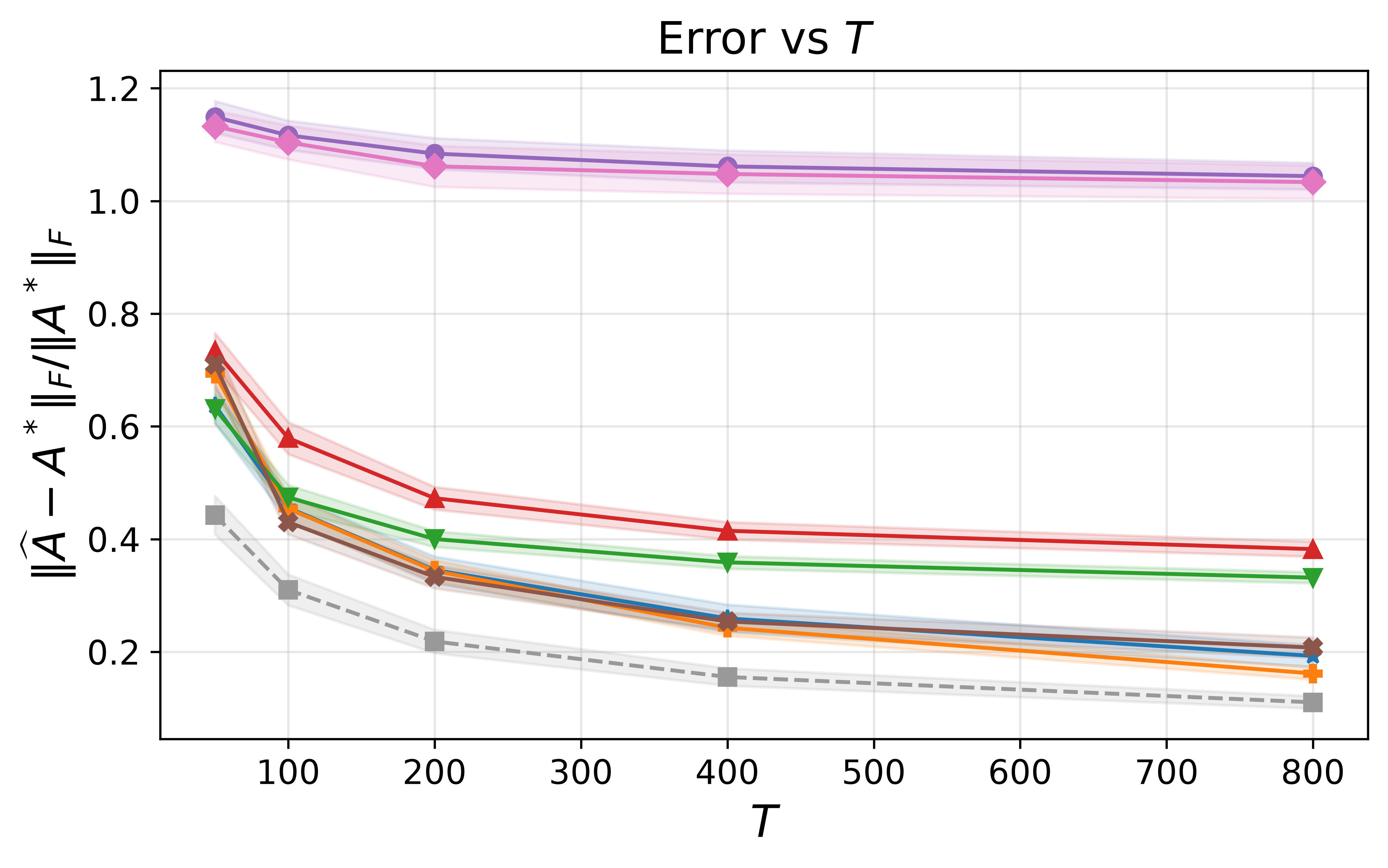}
        \caption{$\epsilon = 0.1$}
        \label{fig:T_block_burst_01}
    \end{subfigure}
    
    \vspace{1em} 
    
    \begin{subfigure}[b]{0.6\textwidth}
        \centering
        \includegraphics[width=\linewidth]{exp_results_synth/legend}
        \label{fig:T_block_burst_legend}
    \end{subfigure}

    \makeatletter
    \renewcommand{\@makecaption}[2]{%
      \vspace{\abovecaptionskip}%
      \parbox{\textwidth}{{\bf #1:} #2}%
      \vspace{\belowcaptionskip}%
    }
    \makeatother
    \caption{
    Frobenius norm error, $\norm{\widehat{A}-A^*}_F/{\norm{A^*}_F}$, for $T \in \{50,100,200,400,800\}$ with (a) $\epsilon=0$, (b) $\epsilon=0.02$, and (c) $\epsilon=0.1$, for \emph{block burst contamination} model.}
    \label{fig:T_block_burst}
\end{figure}

\paragraph{Frobenius norm error against $T$.} Figure~\ref{fig:T_block_burst} shows the Frobenius error, ${\norm{\widehat{A}-A^*}_F}/{\norm{A^*}_F}$, against $T \in \set{50,100,200,400,800}$. When $\epsilon=0$, all estimators achieve the same error since they coincide with \texttt{OLS}. While \texttt{Sphere-AM} fails to recover $A^*$ for $\epsilon > 0.02$ or $T > 100$, it performs even better than \texttt{Biconvex-AM} when both $\epsilon$ and $T$ are small. In small-$\epsilon$ regime, \texttt{SDP-AM} initially outperforms the other estimators, with \texttt{Biconvex-AM} achieving the lowest error as $T$ increases (see Figure~\ref{fig:T_block_burst_002}). However, when $\epsilon = 0.1$, \texttt{ConvexQ(A)-AM} outperforms the other estimators, with \texttt{Biconvex-AM} and \texttt{SDP-AM} achieving comparable performance. For all estimators except \texttt{Sphere-AM} and \texttt{OLS}, the error decreases as $T$ increases, although the rate of decrease slows.

\begin{figure}[htbp]
    \centering
    \begin{subfigure}[b]{0.32\textwidth}
        \centering
        \includegraphics[width=\linewidth]{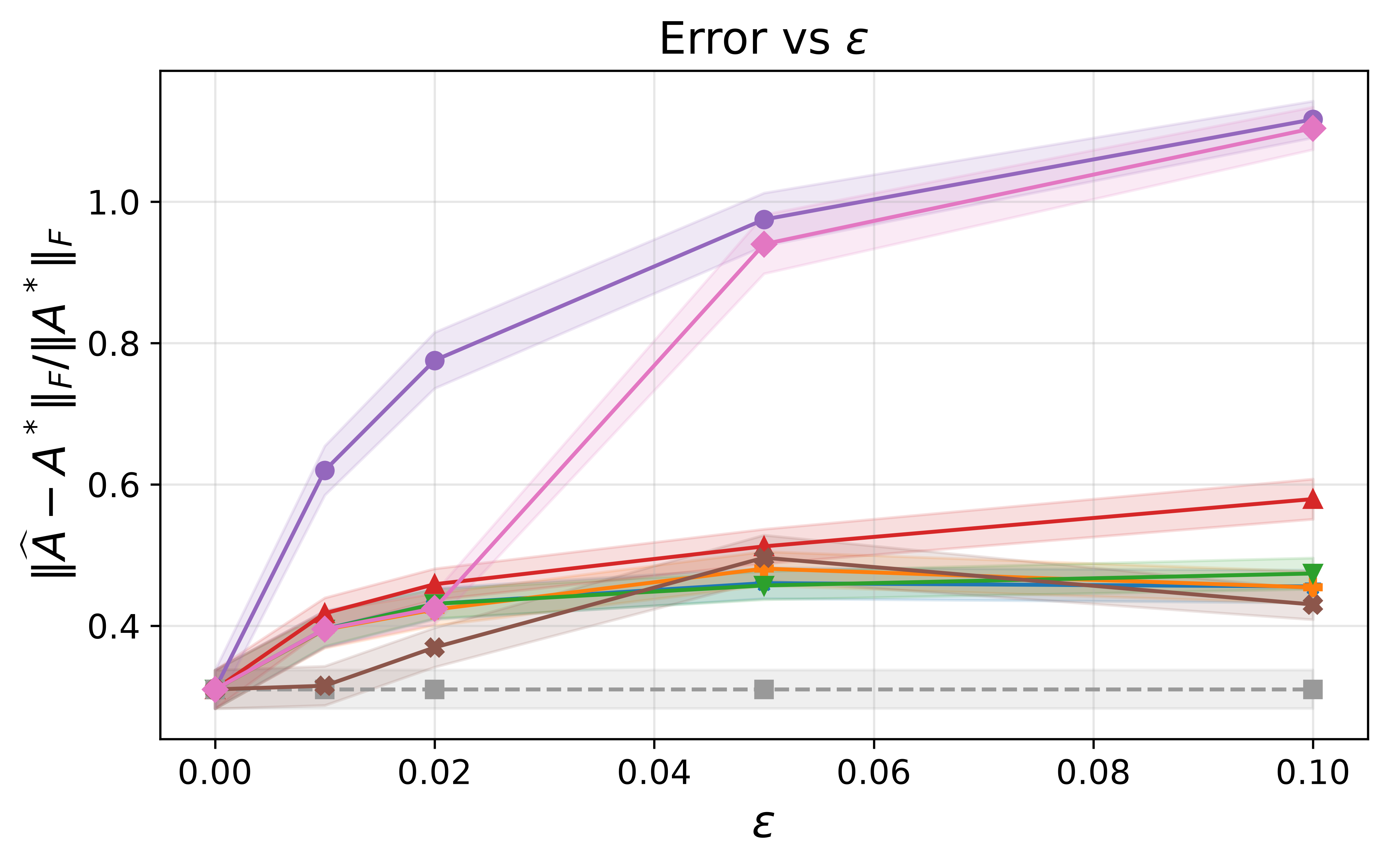}
        \caption{$T=100$}
        \label{fig:eps_block_burst_100}
    \end{subfigure}
    \hfill
    \begin{subfigure}[b]{0.32\textwidth}
        \centering
        \includegraphics[width=\linewidth]{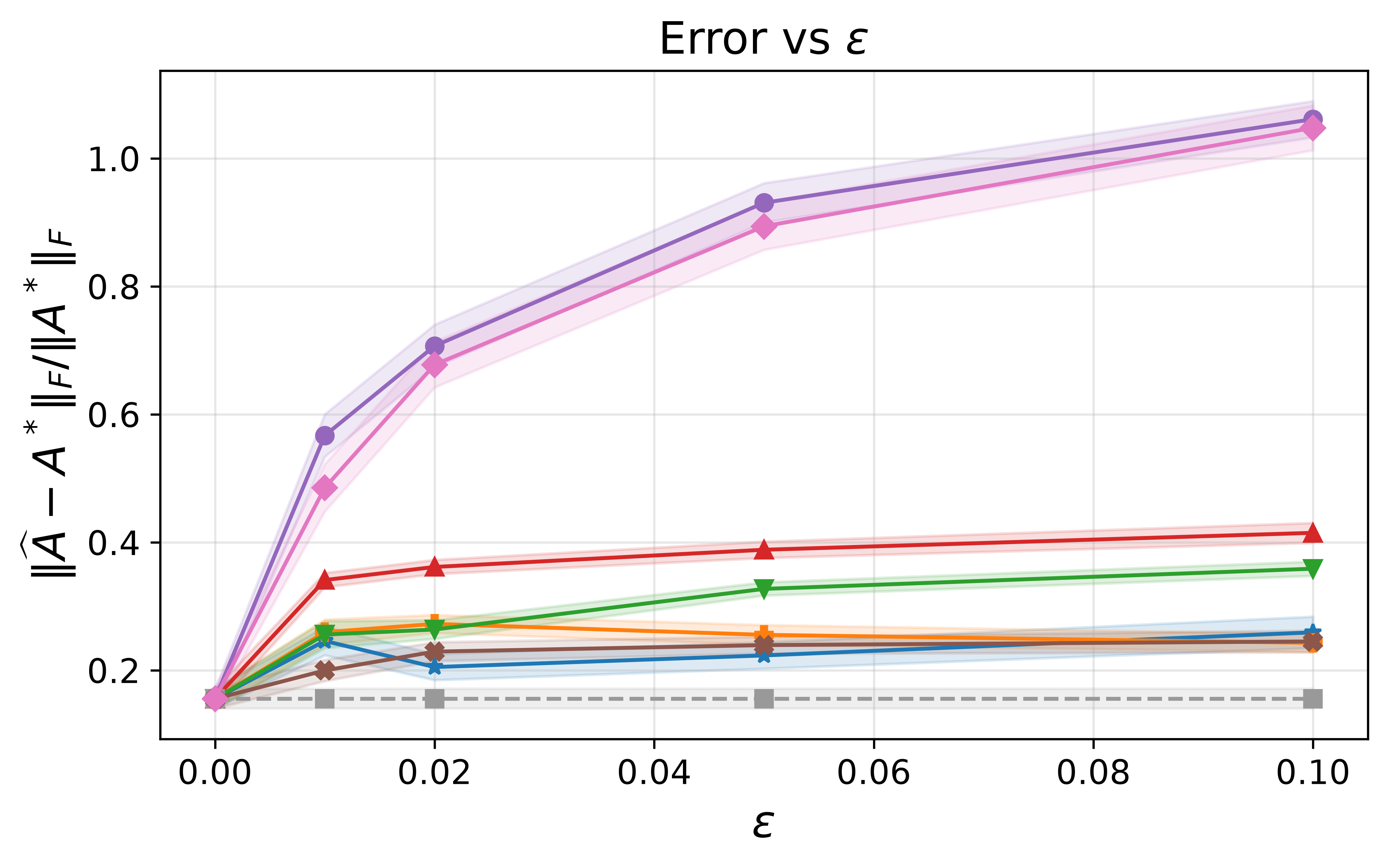}
        \caption{$T=400$}
        \label{fig:eps_block_burst_400}
    \end{subfigure}
    \hfill
    \begin{subfigure}[b]{0.32\textwidth}
        \centering
        \includegraphics[width=\linewidth]{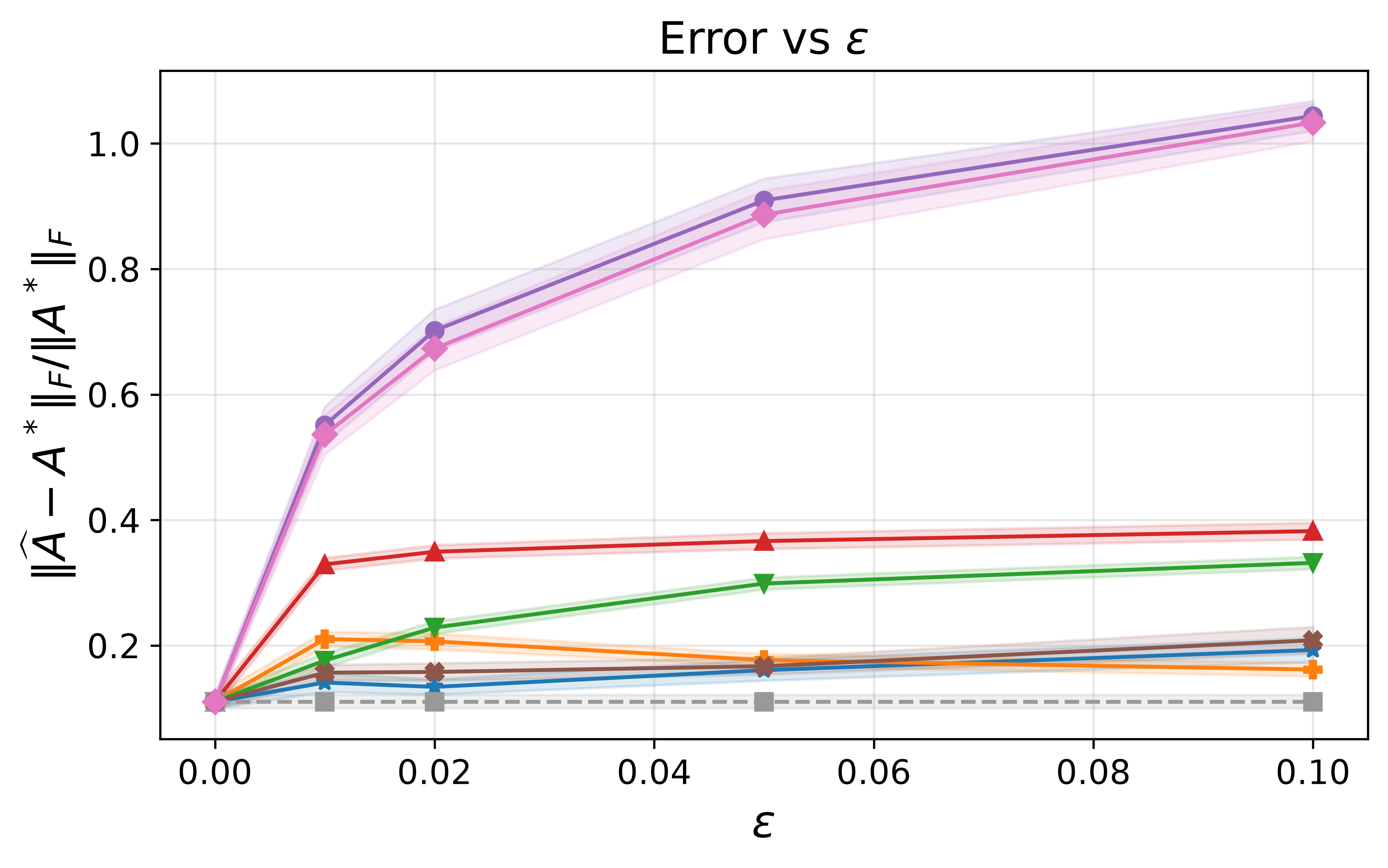}
        \caption{$T=800$}
        \label{fig:eps_block_burst_800}
    \end{subfigure}
    
    \vspace{1em} 
    
    \begin{subfigure}[b]{0.6\textwidth}
        \centering
        \includegraphics[width=\linewidth]{exp_results_synth/legend}
        \label{fig:eps_block_burst_legend}
    \end{subfigure}
    \makeatletter
    \renewcommand{\@makecaption}[2]{%
      \vspace{\abovecaptionskip}%
      \parbox{\textwidth}{{\bf #1:} #2}%
      \vspace{\belowcaptionskip}%
    }
    \makeatother
    \caption{
    Frobenius norm error, $\norm{\widehat{A}-A^*}_F/{\norm{A^*}_F}$, for $\epsilon = \{0,0.01,0.02,0.05,0.1\}$ with (a) $T=100$, (b) $T=400$, and (c) $T=800$, for \emph{block burst contamination} model.}
    \label{fig:eps_block_burst}
\end{figure}

\paragraph{Frobenius norm error against $\epsilon$.} Figure~\ref{fig:eps_block_burst} shows the Frobenius error, ${\norm{\widehat{A}-A^*}_F}/{\norm{A^*}_F}$, against $\epsilon \in \set{0,0.01,0.02,0.05,0.1}$. We observe similar behavior across all $T \in \set{100,400,800}$ with the \texttt{OLS} and the \texttt{Sphere-AM} showing the highest error. As expected, the error increases with $\epsilon$. While \texttt{SDP-AM} achieves a smaller error in the small-$\epsilon$-regime, \texttt{Biconvex-AM} and \texttt{ConvexQ(A)-AM} perform best as $\epsilon$ increases. Note that while \texttt{Sphere-AM} generally performs comparably to \texttt{OLS}, it outperforms all estimators except \texttt{SDP-AM} when $T \le 100$ and $\epsilon \le 0.02$, similar to the heavy-tailed contamination example.

\subsubsection{Sign flip contamination}

\begin{figure}[htbp]
    \centering
    \begin{subfigure}[b]{0.32\textwidth}
        \centering
        \includegraphics[width=\linewidth]{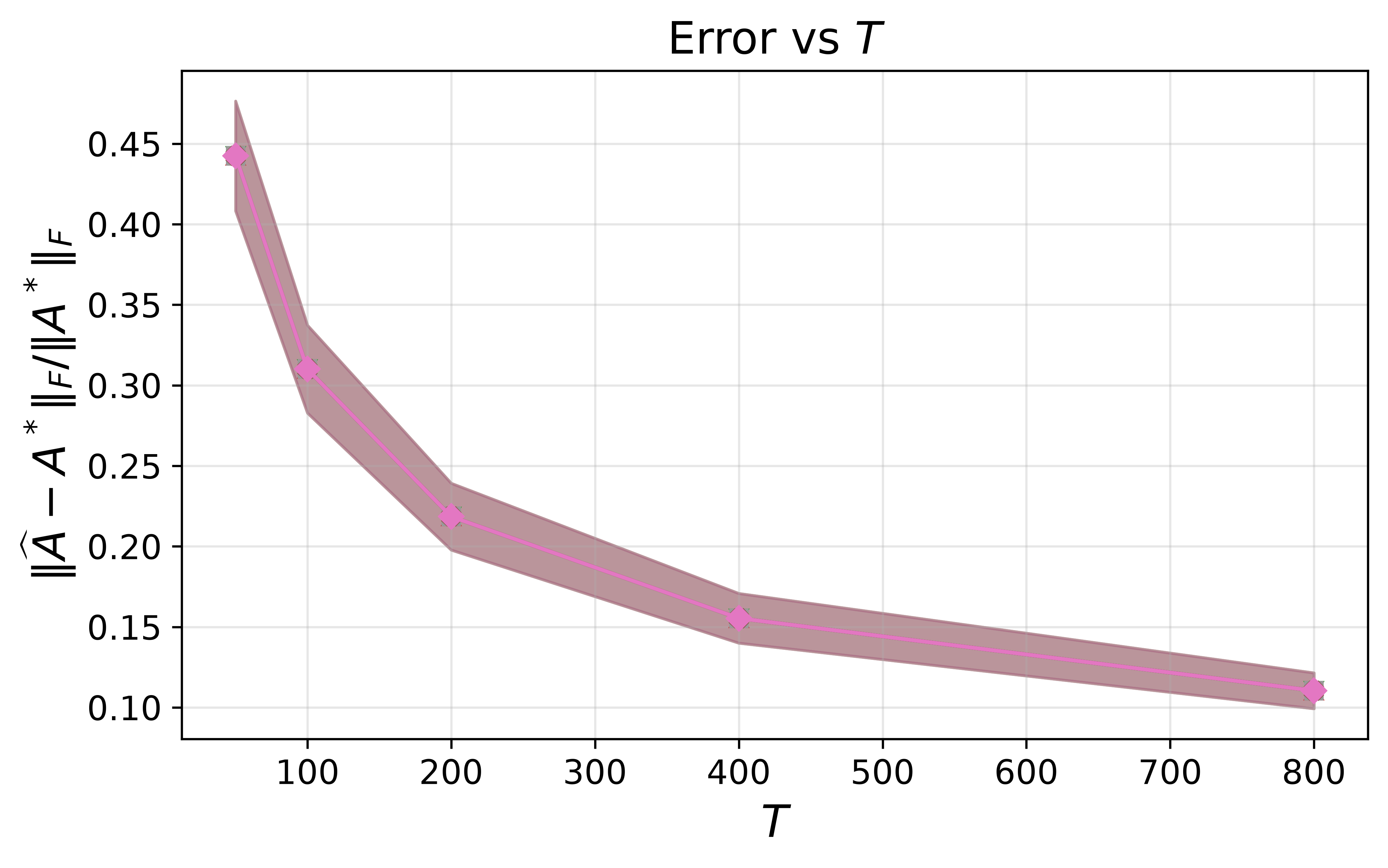}
        \caption{$\epsilon = 0$}
        \label{fig:T_sign_flip_0}
    \end{subfigure}
    \hfill
    \begin{subfigure}[b]{0.32\textwidth}
        \centering
        \includegraphics[width=\linewidth]{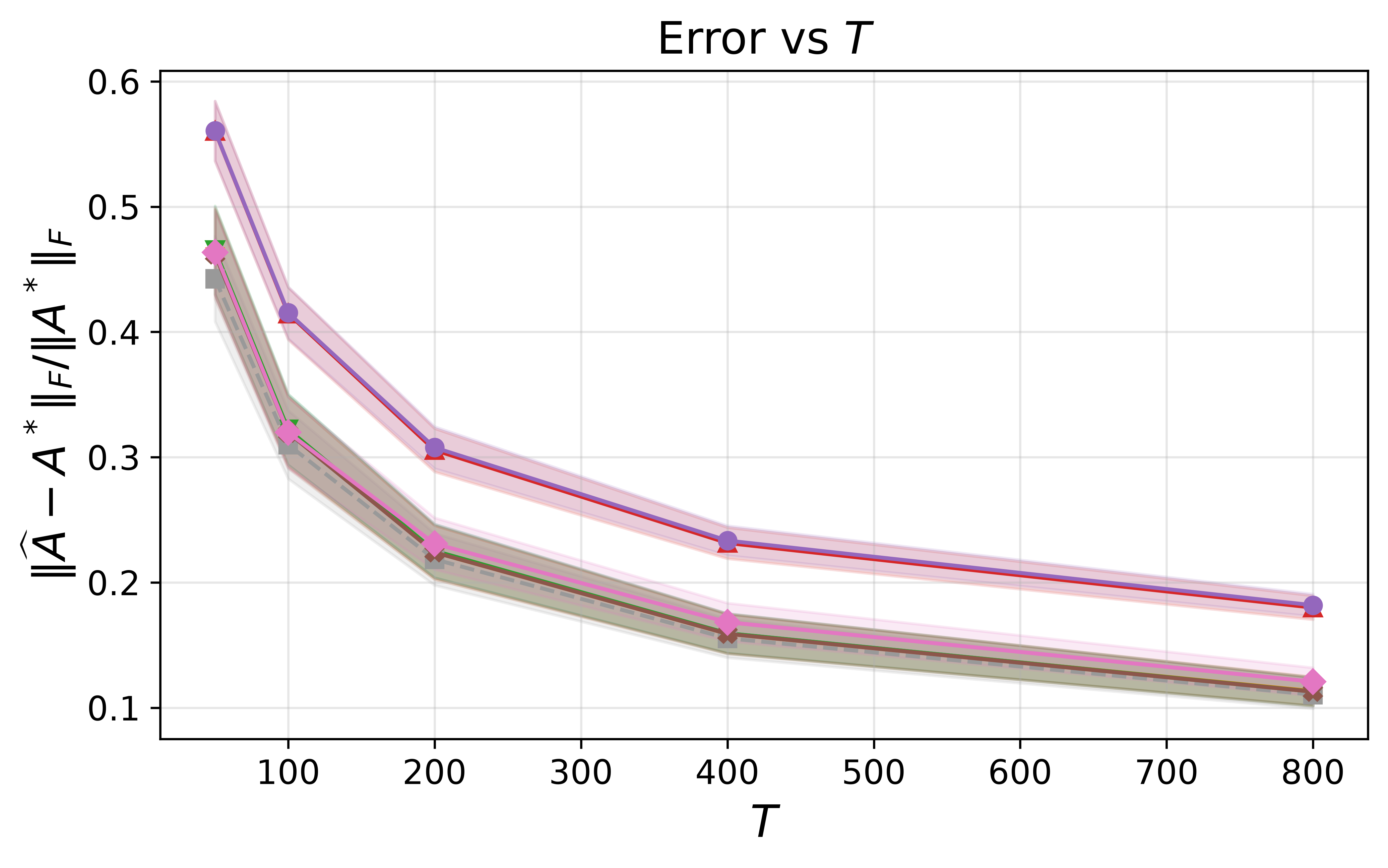}
        \caption{$\epsilon = 0.02$}
        \label{fig:T_sign_flip_002}
    \end{subfigure}
    \hfill
    \begin{subfigure}[b]{0.32\textwidth}
        \centering
        \includegraphics[width=\linewidth]{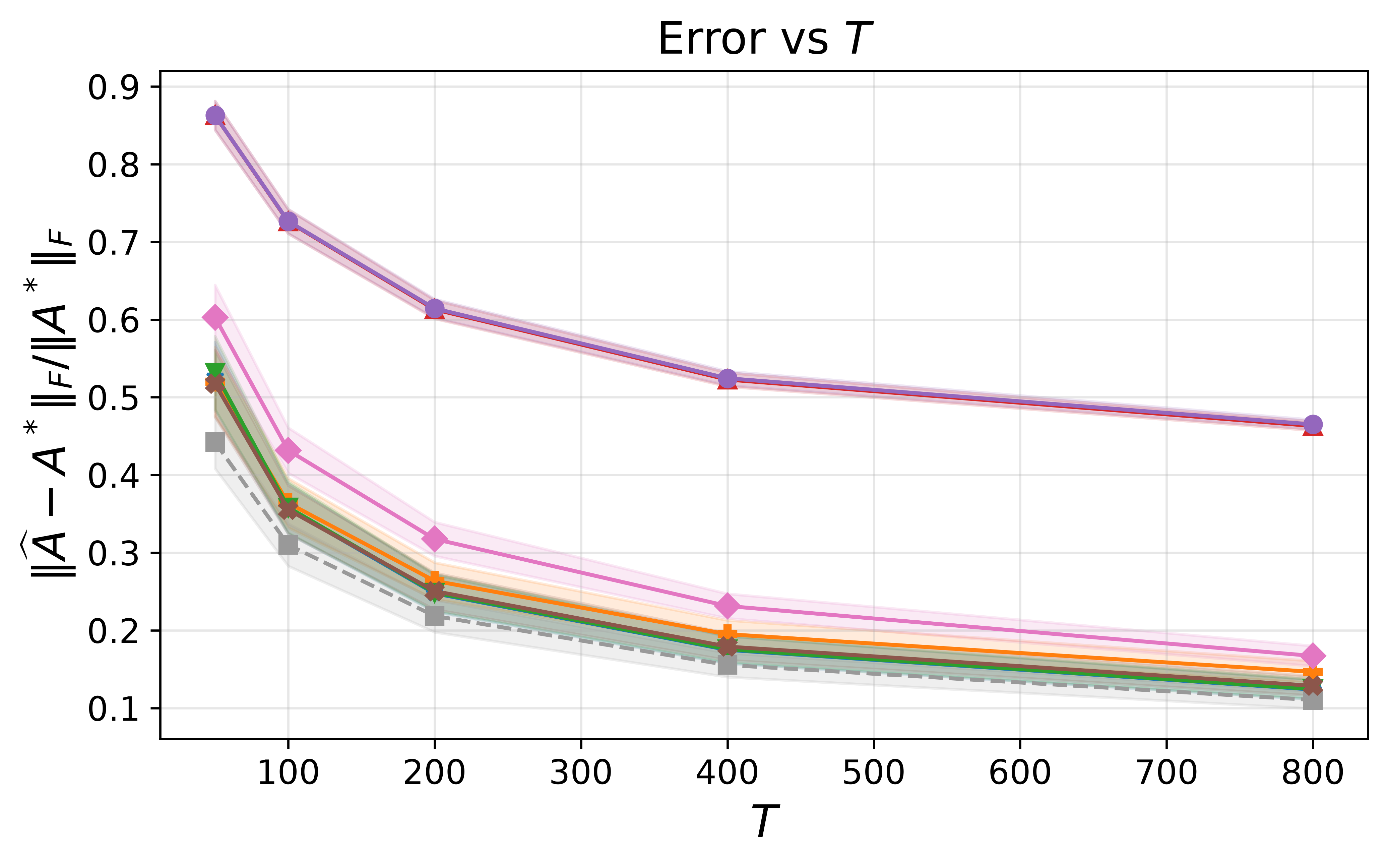}
        \caption{$\epsilon = 0.1$}
        \label{fig:T_sign_flip_01}
    \end{subfigure}
    
    \vspace{1em} 
    
    \begin{subfigure}[b]{0.6\textwidth}
        \centering
        \includegraphics[width=\linewidth]{exp_results_synth/legend}
        \label{fig:T_sign_flip_legend}
    \end{subfigure}
    \makeatletter
    \renewcommand{\@makecaption}[2]{%
      \vspace{\abovecaptionskip}%
      \parbox{\textwidth}{{\bf #1:} #2}%
      \vspace{\belowcaptionskip}%
    }
    \makeatother

    \caption{
    Frobenius norm error, $\norm{\widehat{A}-A^*}_F/{\norm{A^*}_F}$, for $T \in \{50,100,200,400,800\}$ with (a) $\epsilon=0$, (b) $\epsilon=0.02$, and (c) $\epsilon=0.1$, under the \emph{sign flip contamination} model.}
    \label{fig:T_sign_flip}
\end{figure}

\paragraph{Frobenius norm error against $T$.} Figure~\ref{fig:T_sign_flip} shows the Frobenius error, ${\norm{\widehat{A}-A^*}_F}/{\norm{A^*}_F}$, against $T \in \set{50,100,200,400,800}$. When $\epsilon=0$, all estimators achieve the same error as they coincide with \texttt{OLS}. An interesting observation is that, under sign flip contamination, all estimators including \texttt{OLS} perform well. Also note that, \texttt{LS-BSP} coincides with \texttt{OLS}, and \texttt{Sphere-AM} outperforms both. There is almost no gap among the remaining estimators, which all exhibit similar rates. The behavior of \texttt{LS-BSP} can be attributed to the fact that sign flip preserves the magnitude of the entries and with our choice of $\lambda$,  $U = 0$ becomes optimal, and the estimator collapses to \texttt{OLS}. Notably,\texttt{OLS} exhibits the same error-decay with respect to $T$ as the other estimators under sign flip contamination, so this behavior does not conflict with our guarantees for \texttt{LS-BSP}.

\begin{figure}[htbp]
    \centering
    \begin{subfigure}[b]{0.32\textwidth}
        \centering
        \includegraphics[width=\linewidth]{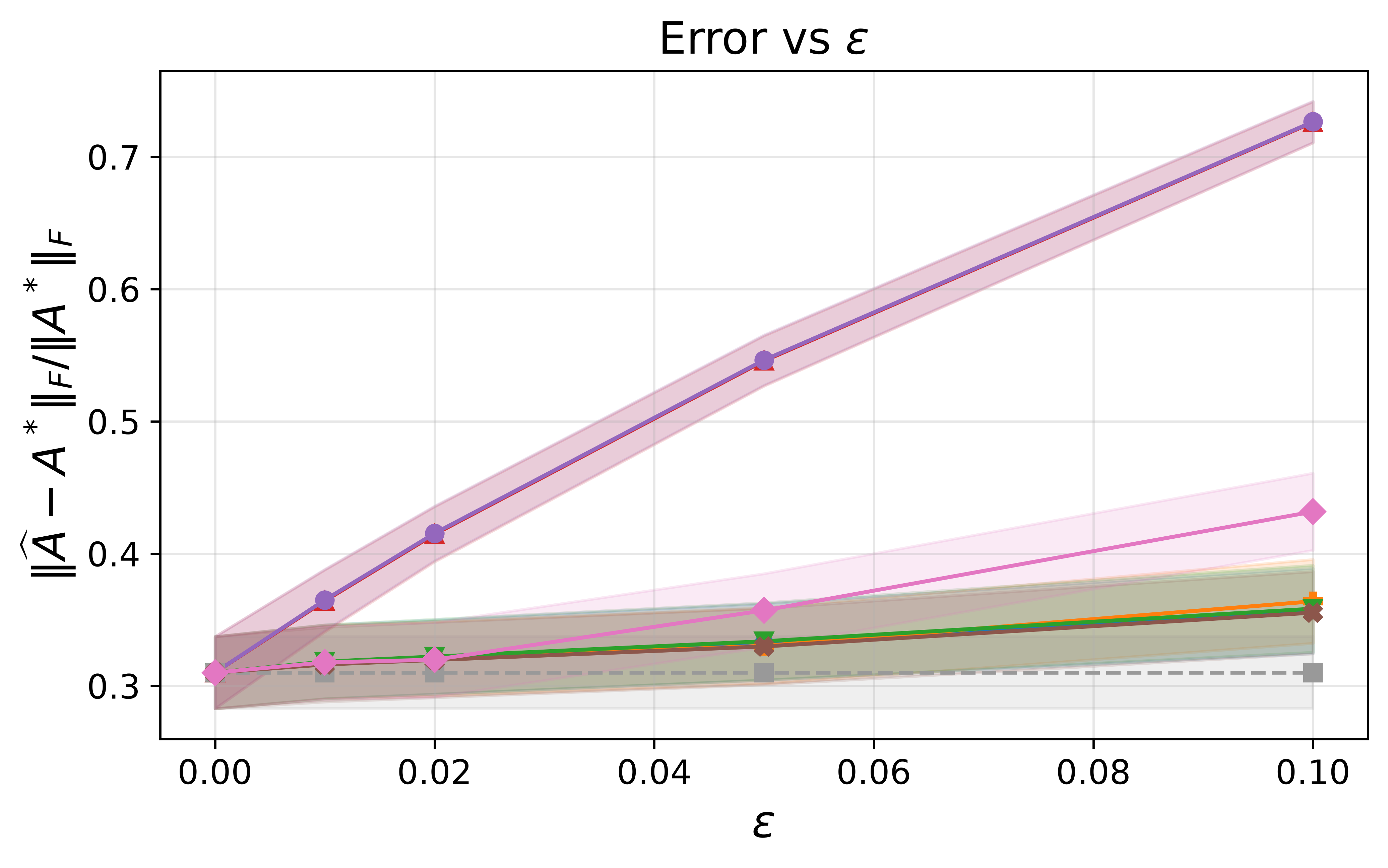}
        \caption{$T=100$}
        \label{fig:eps_sign_flip_100}
    \end{subfigure}
    \hfill
    \begin{subfigure}[b]{0.32\textwidth}
        \centering
        \includegraphics[width=\linewidth]{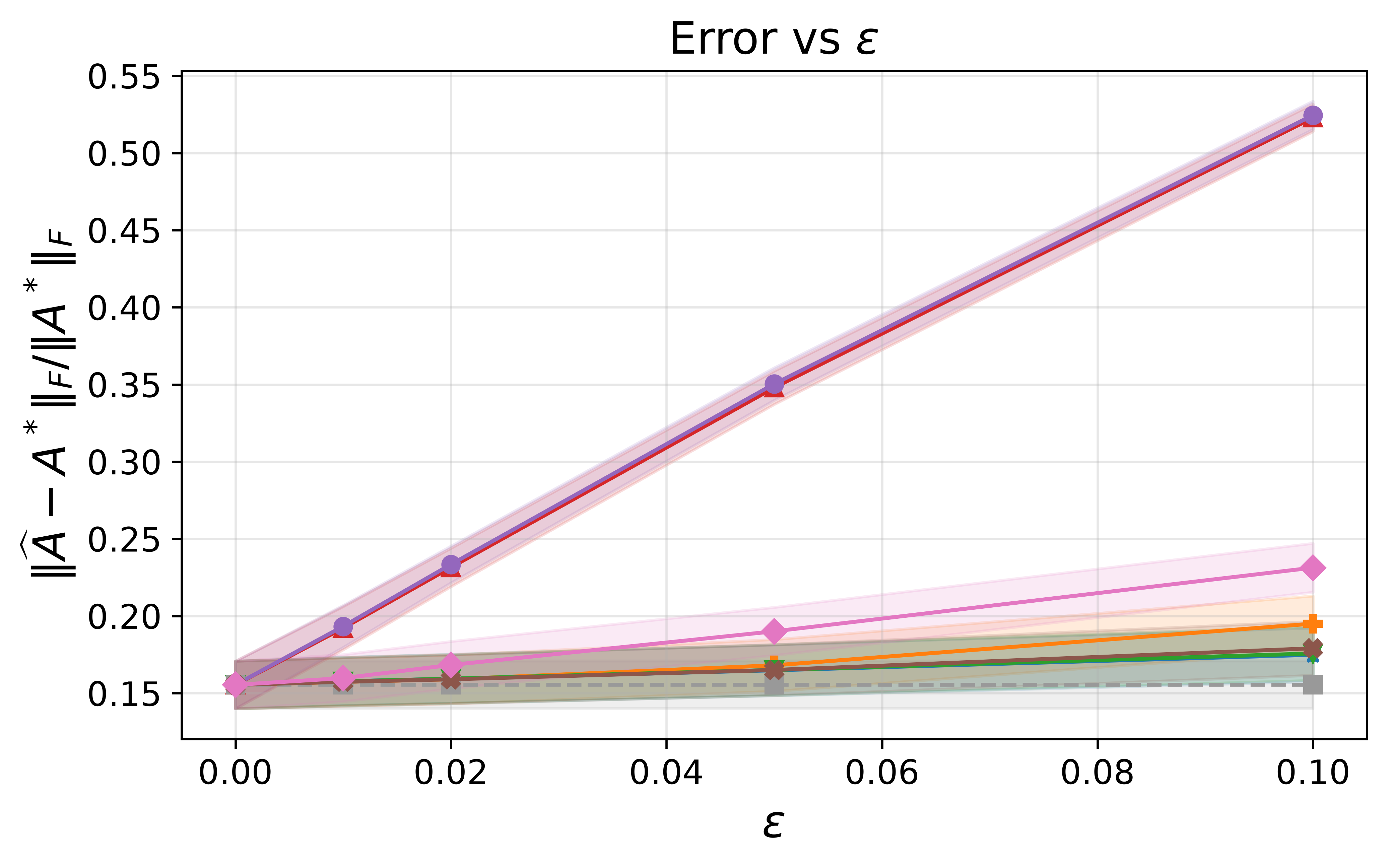}
        \caption{$T=400$}
        \label{fig:eps_sign_flip_400}
    \end{subfigure}
    \hfill
    \begin{subfigure}[b]{0.32\textwidth}
        \centering
        \includegraphics[width=\linewidth]{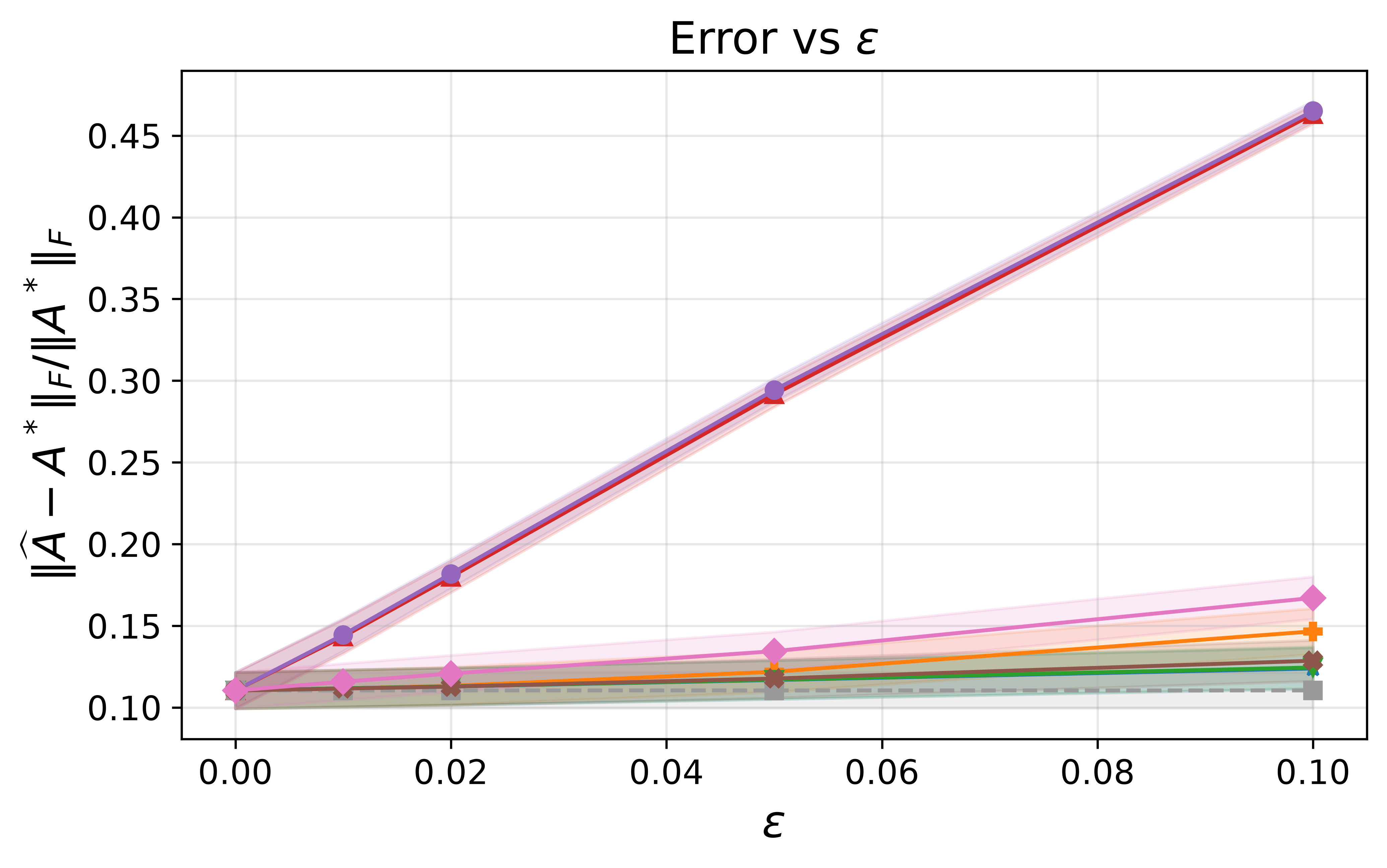}
        \caption{$T=800$}
        \label{fig:eps_sign_flip_800}
    \end{subfigure}
    
    \vspace{1em} 
    
    \begin{subfigure}[b]{0.6\textwidth}
        \centering
        \includegraphics[width=\linewidth]{exp_results_synth/legend}
        \label{fig:eps_sign_flip_legend}
    \end{subfigure}
    \makeatletter
    \renewcommand{\@makecaption}[2]{%
      \vspace{\abovecaptionskip}%
      \parbox{\textwidth}{{\bf #1:} #2}%
      \vspace{\belowcaptionskip}%
    }
    \makeatother
    \caption{
    Frobenius norm error, $\norm{\widehat{A}-A^*}_F/{\norm{A^*}_F}$, for $\epsilon = \{0,0.01,0.02,0.05,0.1\}$ with (a) $T=100$, (b) $T=400$, and (c) $T=800$, under the \emph{sign flip contamination} model.}
    \label{fig:eps_sign_flip}
\end{figure}

\paragraph{Frobenius norm error against $\epsilon$.} Figure~\ref{fig:eps_sign_flip} shows the Frobenius error, ${\norm{\widehat{A}-A^*}_F}/{\norm{A^*}_F}$, against $\epsilon \in \set{0,0.01,0.02,0.05,0.1}$. We observe similar behavior across all $T \in \set{100,400,800}$, with \texttt{OLS} and \texttt{LS-BSP} showing the highest error. As expected, the error increases with $\epsilon$. Note that \texttt{Biconvex-AM}, \texttt{SDP-AM}, and \texttt{LS-BSHC} outperform the other estimators across all settings. 

\subsubsection{High-leverage contamination}

\begin{figure}[htbp]
    \centering
    \begin{subfigure}[b]{0.32\textwidth}
        \centering
        \includegraphics[width=\linewidth]{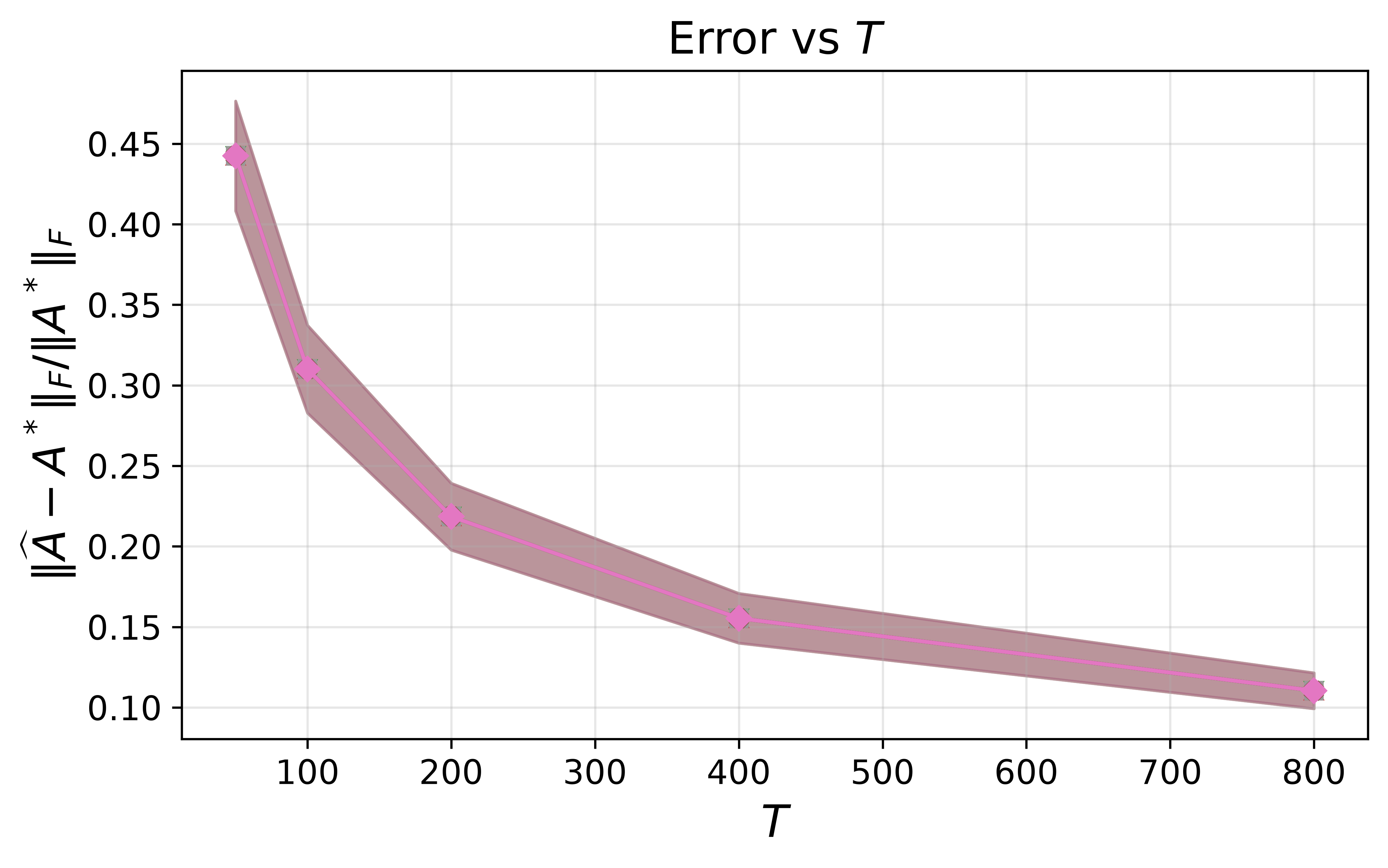}
        \caption{$\epsilon = 0$}
        \label{fig:T_leverage_targeted_0}
    \end{subfigure}
    \hfill
    \begin{subfigure}[b]{0.32\textwidth}
        \centering
        \includegraphics[width=\linewidth]{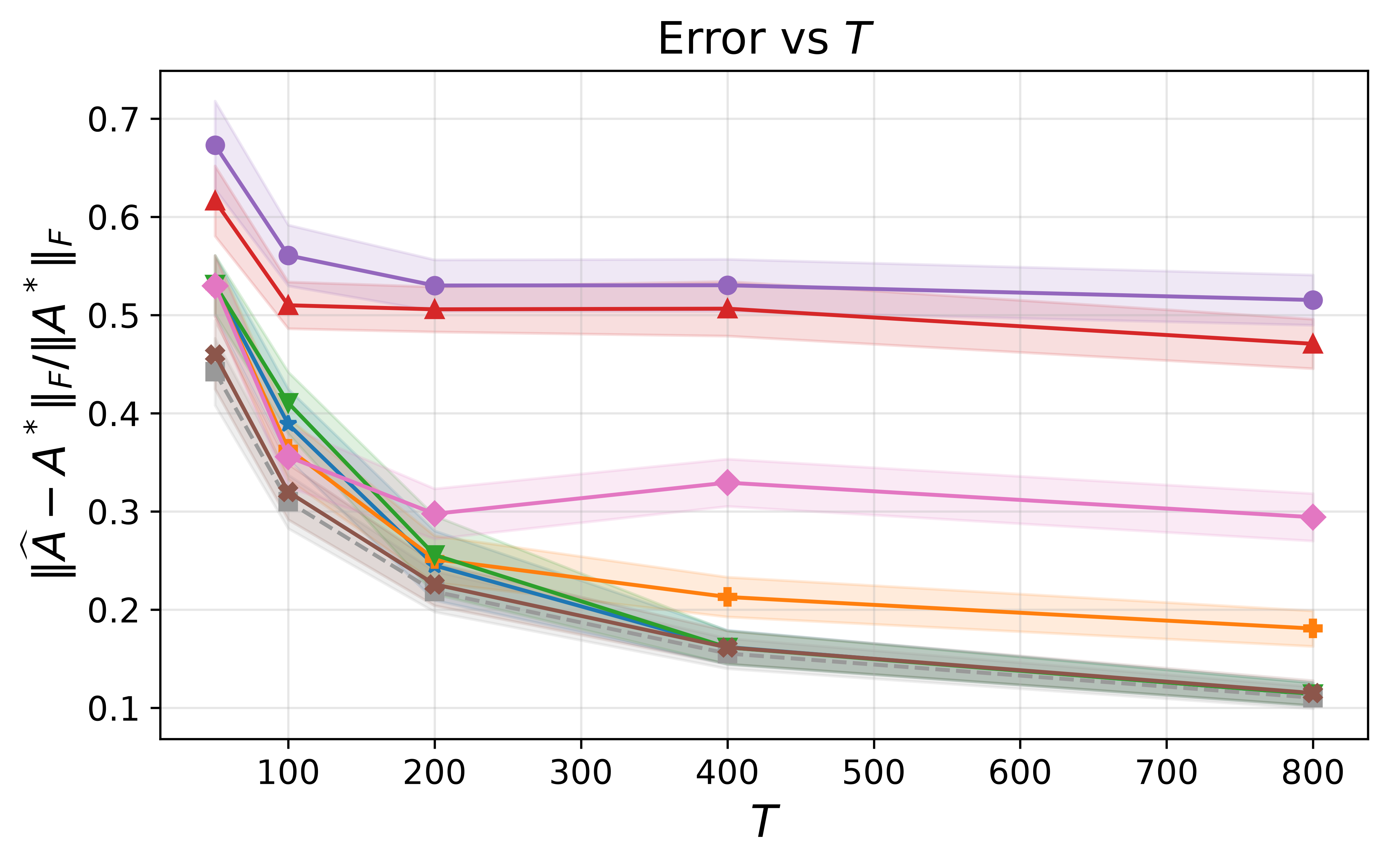}
        \caption{$\epsilon = 0.02$}
        \label{fig:T_leverage_targeted_002}
    \end{subfigure}
    \hfill
    \begin{subfigure}[b]{0.32\textwidth}
        \centering
        \includegraphics[width=\linewidth]{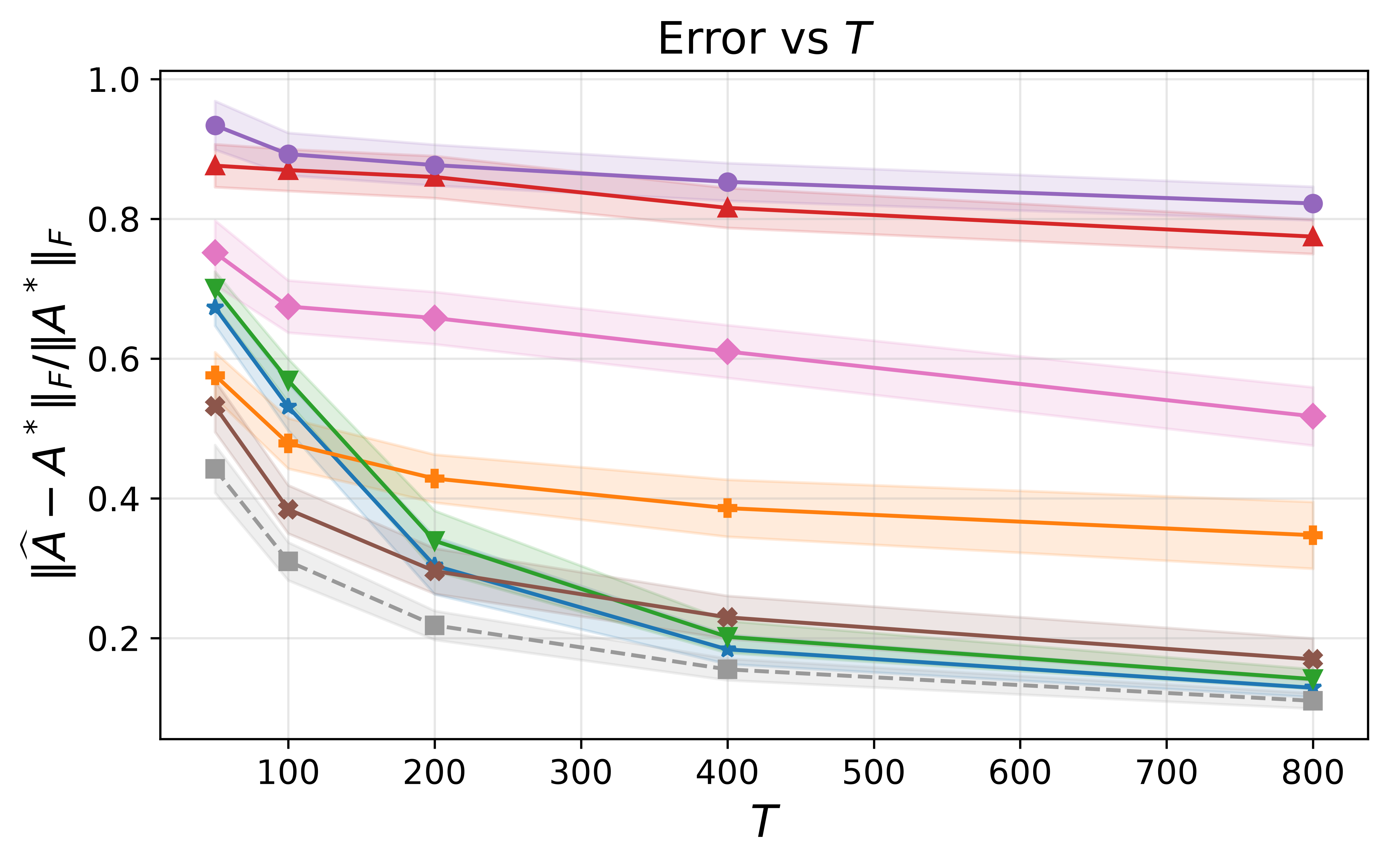}
        \caption{$\epsilon = 0.1$}
        \label{fig:T_leverage_targeted_01}
    \end{subfigure}
    
    \vspace{1em} 
    
    \begin{subfigure}[b]{0.6\textwidth}
        \centering
        \includegraphics[width=\linewidth]{exp_results_synth/legend}
        \label{fig:T_leverage_targeted_legend}
    \end{subfigure}
    \makeatletter
    \renewcommand{\@makecaption}[2]{%
      \vspace{\abovecaptionskip}%
      \parbox{\textwidth}{{\bf #1:} #2}%
      \vspace{\belowcaptionskip}%
    }
    \makeatother

    \caption{
    Frobenius norm error, $\norm{\widehat{A}-A^*}_F/{\norm{A^*}_F}$, for $T \in \{50,100,200,400,800\}$ with (a) $\epsilon=0$, (b) $\epsilon=0.02$, and (c) $\epsilon=0.1$, under the \emph{high-leverage contamination} model.}
    \label{fig:T_leverage_targeted}
\end{figure}

\paragraph{Frobenius norm error against $T$.} Figure~\ref{fig:T_leverage_targeted} shows Frobenius error, ${\norm{\widehat{A}-A^*}_F}/{\norm{A^*}_F}$ against $T \in \set{50,100,200,400,800}$. When $\epsilon=0$, all estimators achieve the same error as they coincide with \texttt{OLS}. As $\epsilon$ increases, the error of every estimator still decays with $T$, although \texttt{OLS} and \texttt{LS-BSP} decay considerably more slowly than the others. \texttt{SDP-AM}, \texttt{Biconvex-AM}, and \texttt{LS-BSHC} perform well, with \texttt{Biconvex-AM} achieving a recovery error close to that of $\calA$-constrained least squares on uncontaminated data. Since this contamination model simply adds zero-mean Gaussian noise, we would expect all estimators to have decaying error (albeit different rates) which aligns with the observations. The behavior of \texttt{LS-BSP} can be attributed to the choice of $\lambda$, which we set match the $\lambda$ in the theoretical guarantees. This $\lambda$ is chosen to dominate the Gaussian innovations $\eta_t$, so the additional zero-mean Gaussian corruption could also be absorbed by $\lambda$. As a result, \texttt{LS-BSP} behaves comparably to \texttt{OLS}, which also shows decaying error in this setting. 

\begin{figure}[htbp]
    \centering
    \begin{subfigure}[b]{0.32\textwidth}
        \centering
        \includegraphics[width=\linewidth]{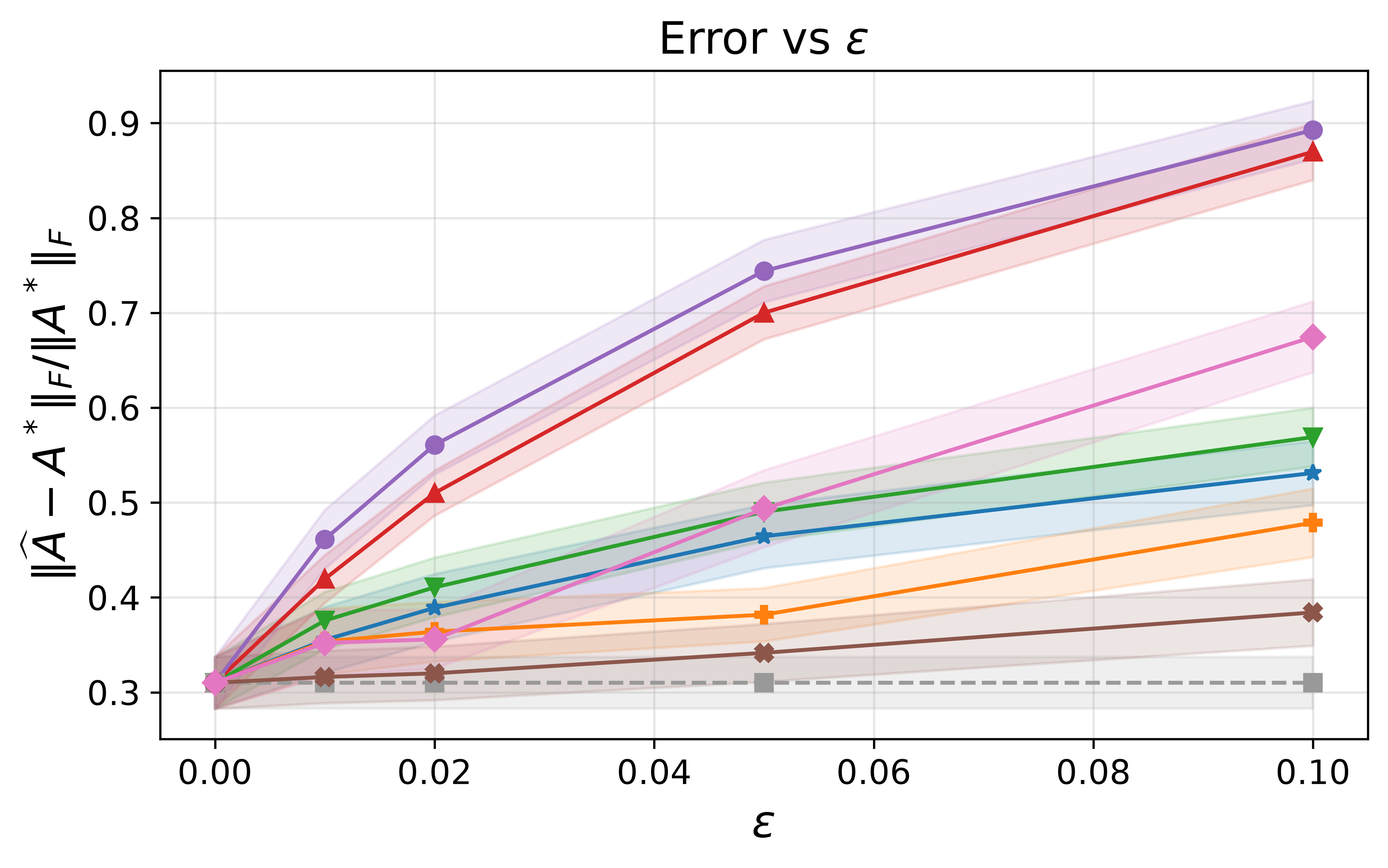}
        \caption{$T=100$}
        \label{fig:eps_leverage_targeted_100}
    \end{subfigure}
    \hfill
    \begin{subfigure}[b]{0.32\textwidth}
        \centering
        \includegraphics[width=\linewidth]{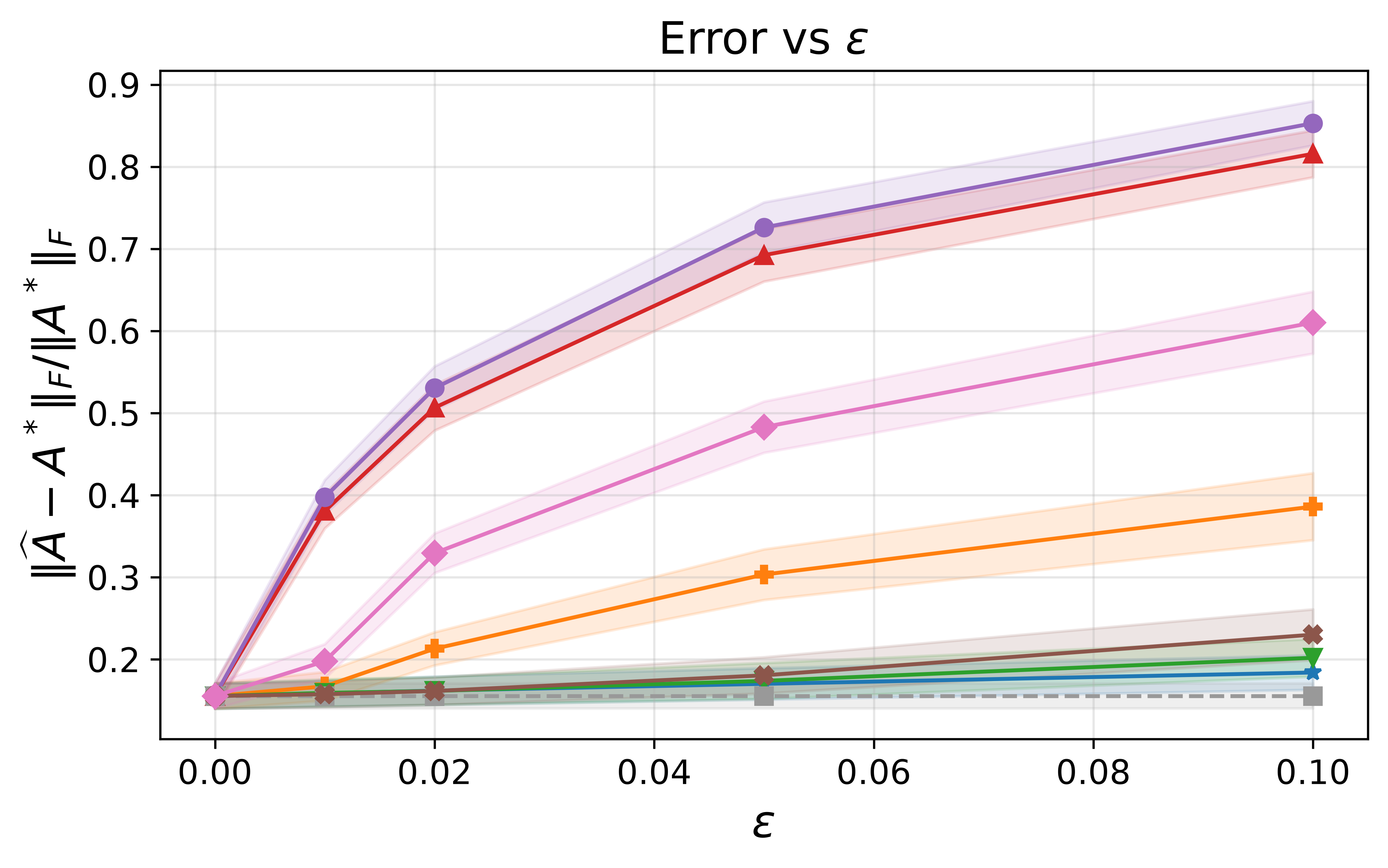}
        \caption{$T=400$}
        \label{fig:eps_leverage_targeted_400}
    \end{subfigure}
    \hfill
    \begin{subfigure}[b]{0.32\textwidth}
        \centering
        \includegraphics[width=\linewidth]{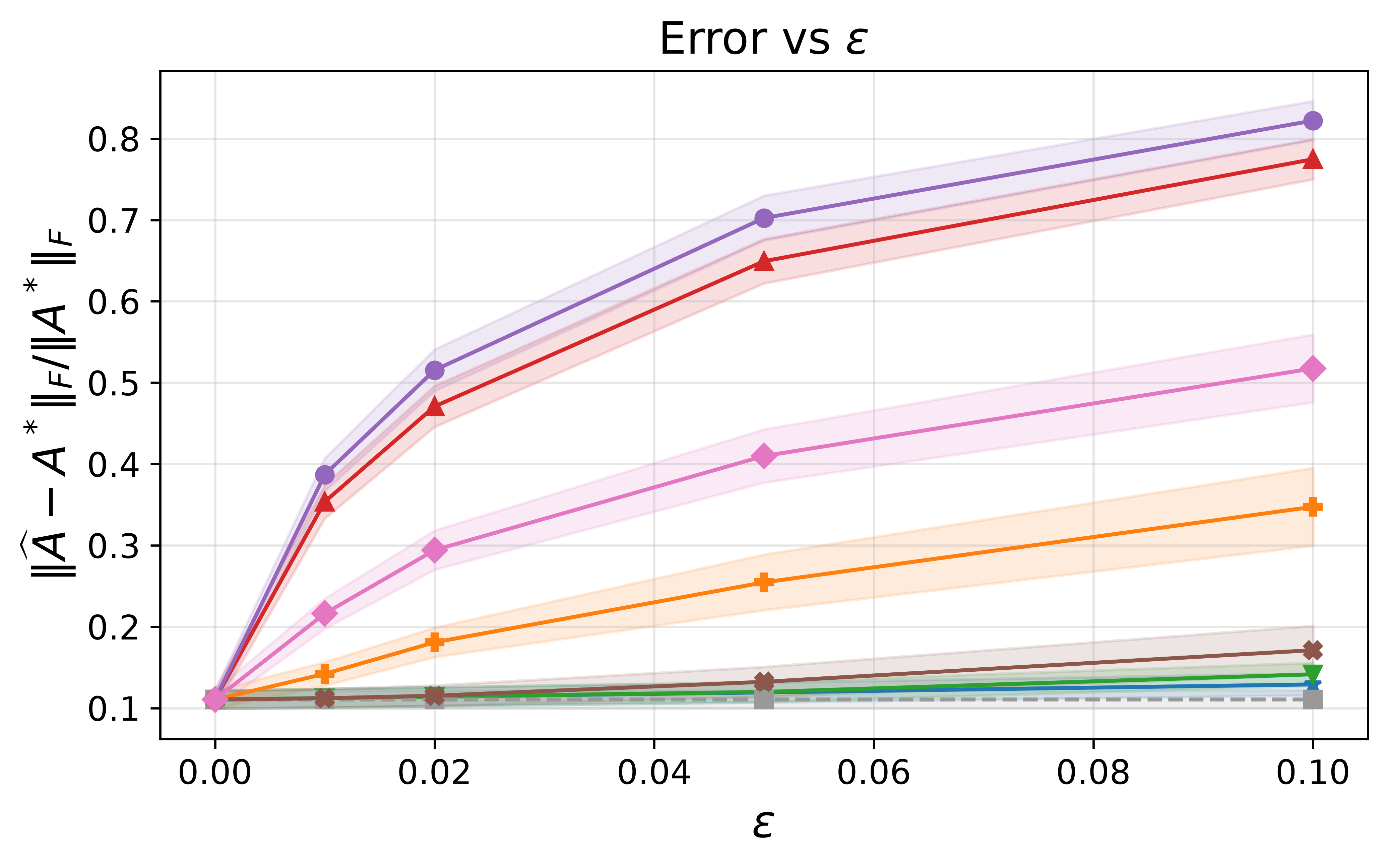}
        \caption{$T=800$}
        \label{fig:eps_leverage_targeted_800}
    \end{subfigure}
    
    \vspace{1em} 
    
    \begin{subfigure}[b]{0.6\textwidth}
        \centering
        \includegraphics[width=\linewidth]{exp_results_synth/legend}
        \label{fig:eps_leverage_targeted_legend}
    \end{subfigure}
    \makeatletter
    \renewcommand{\@makecaption}[2]{%
      \vspace{\abovecaptionskip}%
      \parbox{\textwidth}{{\bf #1:} #2}%
      \vspace{\belowcaptionskip}%
    }
    \makeatother
    \caption{
    Frobenius norm error, $\norm{\widehat{A}-A^*}_F/{\norm{A^*}_F}$, for $\epsilon = \{0,0.01,0.02,0.05,0.1\}$ with (a) $T=100$, (b) $T=400$, and (c) $T=800$, under the \emph{high-leverage contamination} model.}
    \label{fig:eps_leverage_targeted}
\end{figure}

\paragraph{Frobenius norm error against $\epsilon$.} Figure~\ref{fig:eps_leverage_targeted} shows the Frobenius error, ${\norm{\widehat{A}-A^*}_F}/{\norm{A^*}_F}$ against $\epsilon\in\set{0,0.01,0.02,0.05,0.1}$. We observe similar behavior across all $T \in \set{100,400,800}$, with \texttt{OLS} and \texttt{LS-BSP} showing the highest error. As expected, the error increases with $\epsilon$. Note that \texttt{Biconvex-AM},\texttt{SDP-AM}, and \texttt{LS-BSHC} still achieve strong recovery.

\end{document}